\newtheorem{theorem}{Theorem}
\newtheorem{corollary}{Corollary}
\DeclareMathOperator*{\arginf}{arg\,inf}
\newcommand*{\defeq}{\stackrel{\text{def}}{=}}
\newcommand{\bbE}{\mathbb{E}} 
\newcommand{\bbP}{\mathbb{P}} 
\newcommand{\bbQ}{\mathbb{Q}} 
\newcommand{\bbR}{\mathbb{R}} 
\newcommand{\bbN}{\mathbb{N}} 
\newcommand{\bbW}{\mathbb{W}}
\newcommand{\bbS}{\mathbb{S}}
\newcommand{\cL}{\mathcal{L}} 
\newcommand{\cN}{\mathcal{N}} 
\newcommand{\cM}{\mathcal{M}} 
\newcommand{\cS}{\mathcal{S}} 
\newcommand{\cX}{\mathcal{X}} 
\newcommand{\cF}{\mathcal{F}} 
\newcommand{\cB}{\mathcal{B}}
\newcommand{\cY}{\mathcal{Y}}
\newcommand{\cP}{\mathcal{P}}
\newcommand{\cC}{\mathcal{C}}
\newcommand{\cG}{\mathcal{G}}
\newcommand{\cV}{\mathcal{V}}
\newcommand{\tcV}{\widetilde{\cV}}
\newcommand{\cJ}{\mathcal{J}}
\newcommand{\tcJ}{\widetilde{\cJ}}
\newcommand{\sumklam}{\sum_{k = 1}^{K} \lambda_k}
\newcommand{\Df}[3]{\text{D}_{#1}\left(#2\Vert #3\right)}
\newcommand{\opsi}{\overline{\psi}}
\newcommand{\hf}{\widehat{f}}
\newcommand{\stf}{f^*}
\newcommand{\hgamma}{\widehat{\gamma}}
\newcommand{\stgamma}{\gamma^*}
\newcommand{\hg}{\widehat{g}}
\definecolor{red}{rgb}{0,0,0}
\definecolor{blue}{rgb}{0,0 ,0}
\title{Robust Barycenter Estimation using\\ Semi-Unbalanced Neural Optimal Transport}
\author{Milena Gazdieva\thanks{Equal contribution}\\
  Skolkovo Institute of Science and Technology\\
  Artificial Intelligence Research Institute\\
  Moscow, Russia\\
  \texttt{milena.gazdieva@skoltech.ru} \\
   \And
  Jaemoo Choi$^*$\\
  Georgia Institute of Technology\\
  Atlanta, GA, USA\\
  \texttt{jchoi843@gatech.edu} \\
   \AND
   Alexander Kolesov\\
  Skolkovo Institute of Science and Technology\\
  Artificial Intelligence Research Institute\\
  Moscow, Russia\\
  \texttt{a.kolesov@skoltech.ru} \\
   \And
  Jaewoong Choi\\
  Sungkyunkwan University\\
  Seoul, Korea\\
  \texttt{jaewoongchoi@skku.edu} \\
   \AND
  Petr Mokrov\\
  Skolkovo Institute of Science and Technology\\
  Moscow, Russia\\
  \texttt{p.mokrov@skoltech.ru} \\
   \And
  Alexander Korotin\\
  Skolkovo Institute of Science and Technology\\
  Artificial Intelligence Research Institute\\
  Moscow, Russia\\
  \texttt{a.korotin@skoltech.ru} \\
}
\begin{document}

\maketitle
\vspace*{-5mm}
\begin{abstract}
\vspace{-3.5mm}
Aggregating data from multiple sources can be formalized as an \textit{Optimal Transport} (OT) barycenter problem, which seeks to compute the average of probability distributions with respect to OT discrepancies. However, in real-world scenarios, the presence of outliers and noise in the data measures can significantly hinder the performance of traditional statistical methods for estimating OT barycenters. To address this issue, we propose a novel scalable approach for estimating the \textit{robust} continuous barycenter, leveraging the dual formulation of the \textit{(semi-)unbalanced} OT problem. To the best of our knowledge, this paper is the first attempt to develop an algorithm for robust barycenters under the continuous distribution setup. 
Our method is framed as a $\min$-$\max$ optimization problem and is adaptable to \textit{general} cost functions. 
We rigorously establish the theoretical underpinnings of the proposed method and demonstrate its robustness to outliers and class imbalance through a number of illustrative experiments. Our source code is publicly available at \small\url{https://github.com/milenagazdieva/U-NOTBarycenters}.

\looseness=-1
\end{abstract}

\vspace{-3mm}

\begin{figure}[h] 
    \vspace*{-3mm}\centering\hspace{-5mm}\includegraphics[width=0.8\linewidth]{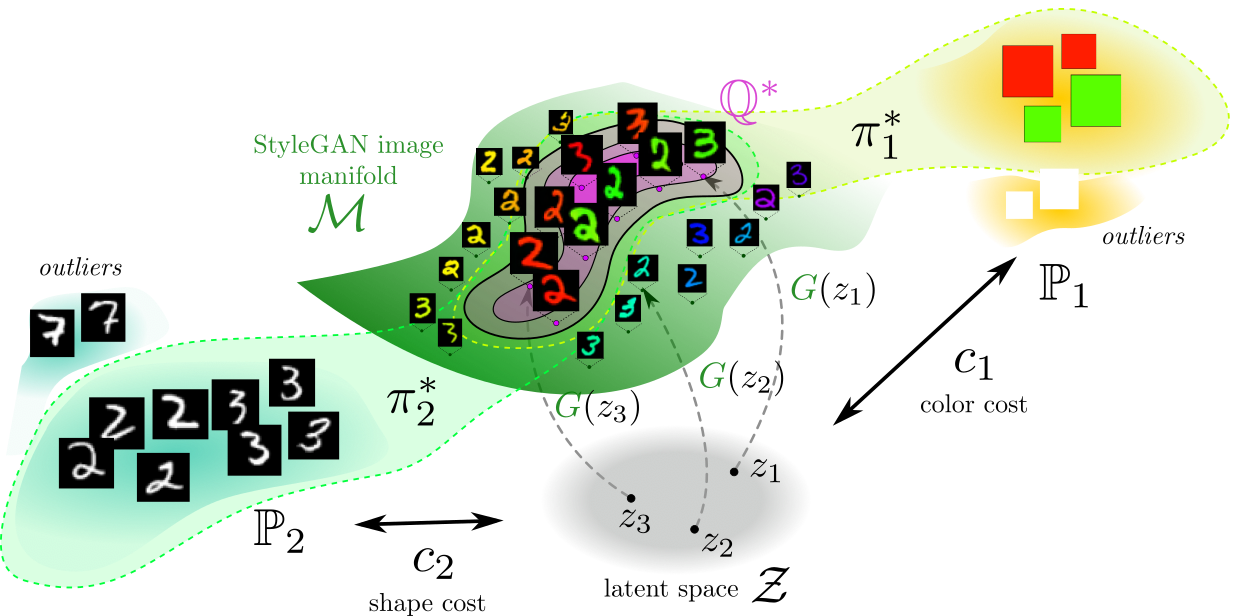}
        \vspace{-1mm}
        \caption{\centering{The semi-unbalanced barycenter of distributions of colors ($\bbP_1$) and digits ($\bbP_2$) computed by our U-NOTB solver in the latent space of a StyleGAN model pretrained on colored MNIST images of digits '2', '3'. Our solver allows for successful elimination of outliers in the input distributions.
        }}
        \label{fig:stylegan-explanation}
\end{figure}

\vspace{-3mm}\section{Introduction}\vspace{-2mm}
The world is full of data, and accurate analysis of this data allows for solving a number of problems in the world related to science, medicine, engineering, etc. 
A common challenge in data analysis arises from two factors: (i) a huge quantity of data, and (ii) the data originating from multiple sources, such as different scientific experiments, hospitals, or engineering trials.
One promising way to address these difficulties is to perform \textit{data aggregation}. This means reducing the particular source-specific characteristics and leaving only source-agnostic information for solving a practical case on hand.
\vspace{0.7mm}\newline
Recently, a fruitful branch of research \citep{li2020continuous, fan2021scalable, korotin2022wasserstein, kolesovestimating, kolesov2024energyguided} has emerged that focuses on addressing the challenge of data aggregation through the framework of the Optimal Transport (OT) barycenter problem.
Given a number of reference distributions representing data from different sources, the problem is to discover a geometrically meaningful average of these distributions by minimizing the average OT costs from the reference distributions to the desired one.
\vspace{0.7mm}\newline
Originally introduced for a rather specific setup (quadratic Euclidean OT cost functions) \citep{agueh2011barycenters}, the subfield of OT barycenter has evolved tremendously over the last decade. At present, it is the subject of deep learning and can be adapted to high-dimensional data setups, e.g., images. However, current barycenter research tends to ignore a typical and important property of data appearing in real-world applications: the presence of undesirable noise and outliers \citep{le2021robust}. This motivates the development of a \textit{robust} OT barycenter framework which (i) inherits all the advances of current barycenter researches, e.g., permits deep learning and could be adapted to high dims; (ii) wisely deals with aforementioned inconveniences of real-world data. To satisfy both aims, we propose to opt for recent continuous \textit{unbalanced} OT techniques \citep{gazdieva2023extremal, choianalyzing, choi2024generative} which are naturally aligned with data imperfection. 
\vspace{0.7mm}\newline
Our \textbf{contributions} are as follows:
\vspace{-3mm}
\begin{enumerate}[leftmargin=*]
    \item We propose a novel semi-unbalanced OT (SUOT) barycenter algorithm that is robust to outliers and class imbalance in the reference distributions (\S \ref{sec-method}).\vspace{-1.3mm}
    \item We develop a solid theoretical foundation of our proposed approach including the quality bounds for the recovered transport plans (\S \ref{sec-objective}).
    \vspace{-1.3mm}
    \item  We conduct a number of experiments on toy and image data setups to demonstrate the performance of our method and showcase its robustness to outliers and imbalance of classes (\S \ref{sec-experiments}).
    \vspace{-1.3mm}
\end{enumerate}

\vspace{-1mm}To the best of our knowledge, our work is the first attempt to construct \textit{robust} OT barycenters under \textit{continuous} computational setup (\S \ref{sec-background-setup}).
Existing works addressing similar problems consider
\textbf{exclusively} \textit{discrete} settings \citep{le2021robust, wang2024robust}.

\textbf{Notations.} Throughout the paper, we denote $\overline{K}=\{1,...,K\}$ for $K\in \bbN$ and use $s_{[1:K]}$ to denote the tuplet of objects $(s_1,...,s_K)$. We define $\cX\!\subset\!\bbR^{D'}$, $\cX_k\!\subset\!\bbR^{D_k}$, $\cY\!\subset\!\bbR^D$ as the compact subsets of Euclidian spaces. We use $\cC(\cY)$ to denote the set of continuous functions on $\cY$. The sets of absolutely continuous Borel probability (or non-negative) measures on $\cX$ are denoted as $\mathcal{P}(\cX)$ (or $\mathcal{M}_{+}(\cX)$). For two measures $\mu_1,\mu_2\!\in\!\cM_+(\cX)$, we write $\mu_1\!\ll\!\mu_2$ to denote that the measure $\mu_1$ is absolutely continuous w.r.t. the measure $\mu_2$.
The joint probability distributions on $\cX\!\times\!\cY$ with the marginals $\bbP$ and $\bbQ$ are denoted as $\Pi(\bbP,\bbQ)$ and usually called the \textit{transport plans}. All joint non-negative measures on $\cX\times\cY$ are referred to as $\cM_+(\cX\times\cY)$. For a given measure $\gamma(x,y)\in \mathcal{P}(\cX \times \cY)$ (or $\mathcal{M}_{+}(\cX \times \cY)$), we use $\gamma_x(x)$, $\gamma_y(y)$ to denote its marginals. All  probability measures $\gamma(x,y)\in \cP(\cX \times \cY)$ s.t. $\gamma_y=\bbQ$ for a given probability measure $\bbQ$ are denoted as $\Pi(\bbQ)$.
We use $\gamma(y|x)$ to denote the conditional \textit{probability} measure. 
For a measurable map $T$, $T_{\#}$ denotes the corresponding push-forward operator. The Fenchel conjugate of a function $\psi(u)$ is denoted {\color{red}by} $\overline{\psi}(t)\defeq\sup_{u\in \bbR} \{ut-\psi(u)\}$.

\vspace{-4mm}\section{Background}\vspace{-3mm}
In this section, we give an overview of the theoretical concepts related to our paper. In \wasyparagraph\ref{sec-background-optimal-transport}, we state the SUOT problems and its semi-dual formulation. In \wasyparagraph\ref{sec-background-barycenters}, we formulate the OT/UOT barycenter problem. Then, in \wasyparagraph\ref{sec-background-setup}, we describe our computational setup. For details on OT, we refer to \citep{santambrogio2015optimal, villani2009optimal, peyre2019computational}, unbalanced OT - \citep{chizat2017unbalanced,liero2018optimal,sejourne2022unbalanced}, OT barycenters - \citep{agueh2011barycenters,chizat2023doubly}.
\vspace{0.7mm}\newline
To begin with, we introduce the concept of $\psi$-divergences needed for our further derivations. 
\vspace{0.7mm}\newline
\textbf{$\psi$-divergences for non-negative measures}. 
Let $\mu_1,\mu_{2}\in \mathcal{M}_{+}\!(\cX')$ be two \textit{non-negative} measures. Then, for a function $\psi:\mathbb{R}_+\rightarrow \mathbb{R}_+\cup \{\infty\}$, the \textit{$\psi$-divergence} between $\mu_{1},\mu_{2}$ is given by:\vspace{-1mm}
$$\Df{\psi}{\mu_1}{\mu_2}\defeq \int_{\cX} \psi\bigg(\frac{\mu_{1}(x)}{\mu_{2}(x)}\bigg)d\mu_{2}(x) \text{ if }\mu_{1}\ll \mu_2\text{ and }+\infty\text{ otherwise}.$$
{\color{red}Here, the generator function $\psi(u)$ is assumed to be convex, non-negative, lower semi-continuous, to attain zero uniquely when $u=1$ and to satisfy the property $\frac{\psi(u)}{u}\!\stackrel{u\to\infty}{\to}\!\infty$. Then, the conjugate $\opsi(u)$ is also known to be non-negative, and its domain is $\bbR$, see \citep{sejourne2019sinkhorn}.} Under these assumptions, $D_{\psi}$ is a valid measure of dissimilarity between two {\color{red}non-negative} measures.  
Well-known examples of $\psi$-divergences include Kullback-Leibler divergence $\text{D}_{\text{KL}}$ \citep{chizat2017unbalanced, sejourne2022unbalanced} and $\chi^{2}$-divergence $\text{D}_{\chi^2}$. For the extended discussion on admissible divergences, we refer to \citep[Appendix C]{gazdieva2025light}. 
\vspace{-3mm}
\subsection{Optimal Transport}
\label{sec-background-optimal-transport}
\vspace{-2mm}

\textbf{Classic OT problem.} Consider two probability measures $\bbP\!\in\! \mathcal{P}(\cX)$, $\bbQ\!\in\!\mathcal{P}(\cY)$ and the cost function $c(x,y)\in\cC(\cX\times\cY)$. Then, the OT problem \citep{kantorovich1942translocation} between $\bbP$ and $\bbQ$ is given by
\vspace{-1mm}\begin{eqnarray}
\text{OT}_c(\bbP,\bbQ)\defeq\inf_{\pi\in\Pi(\bbP, \bbQ)} \int_{\cX\times\cY} c(x,y) \pi(x, y) dx dy.
\label{eq:ot-classic}
\end{eqnarray}\vspace{-2mm}\newline
Under the mild assumptions on the distributions $\bbP$, $\bbQ$ and cost function $c$, the minimizer $\pi^*$ of \eqref{eq:ot-classic} always exists but is not guaranteed to be unique, see \citep{villani2009optimal}.  In the special case of the quadratic cost $c(x,y)=\frac{\|x-y\|^2}{2}$, problem \eqref{eq:ot-classic} reduces to the well-known (squared) Wasserstein-2 distance ($\bbW_2^2(\bbP,\bbQ)$). Problem \eqref{eq:ot-classic} admits a \textit{semi-dual reformulation}:
\vspace{-1.7mm}\begin{eqnarray}
    \text{OT}_c(\bbP,\bbQ) = \sup_{f \in \cC(\cY)} \big[ \int_{\cX} f^c(x) d \bbP(x) + \int_{\cY} f(y) d\bbQ(y) \big]
    \label{eq:ot-dual}
\end{eqnarray}\vspace{-3mm}\newline
where $f^c(x)\defeq \inf_{\mu\in\cP(\cY)}  \int_{\cY}\big(c(x,y)-f(y)\big)d\mu(y)$
 is the $c$-\textit{transform}. Note that the standard $c$-transform in the OT duality is usually defined as $f^c(x)= \inf_{y\in\mathcal{Y}} \{c(x,y)-f(y)\}$. Here, for the future needs, we substitute the classic c-transform by the \textit{weak c-transform} \citep[Theorem 1.3]{backhoff2019existence}, \citep{gozlan2017kantorovich}. This transition is valid since the infimum in weak transform is anyway attained at any $\mu\in\cP(\cY)$ supported on the $\arg\inf_{y\in\cY}\{c(x,y)-f(y)\}$ set.
\vspace{0.8mm}\newline
\textbf{Unbalanced \& semi-unbalanced OT problems.} The standard formulation of the OT problem entails various issues \citep{balaji2020robust, sejourne2022unbalanced}, such as sensitivity to outliers, inability to deal with class imbalance in the source and target measures and inapplicability to general non-negative measures. Still, we consider the mass transportation between \textit{probability measures} and focus on the first two issues. Fortunately, they can be overcome by relaxing the hard marginal constraints of the OT problem which yields the Unbalanced OT \citep[UOT]{chizat2017unbalanced, liero2018optimal} problem. 
Formally, the UOT problem between the probability measures $\bbP\!\in\! \mathcal{P}(\cX)$, $\bbQ\!\in\!\mathcal{P}(\cY)$ is
\vspace{-1.5mm}\begin{eqnarray}
    \text{UOT}_{c,\psi,\phi}(\bbP,\bbQ) = \inf_{\gamma\in\mathcal{M}_{+}(\cX\times\cY)} \int_{\cX\times\cY} c(x,y) \gamma(x, y) dx dy +\Df{\psi}{\gamma_x}{\bbP}   + \Df{\phi}{\gamma_y}{\bbQ}
     \label{eq:uot-primal}
\end{eqnarray}\vspace{-2mm}\newline
where $c(x,y)\in\cC(\cX\times\cY)$ is a continuous cost function and $\text{D}_{\psi}$, $\text{D}_{\phi}$ are $\psi$-divergences over $\cX$ and $\cY$, respectively. We can tune the degree of penalizing the marginal distributions mismatch by introducing the \textit{unbalancedness} parameter $\tau>0$ and considering $\text{D}_{\psi}=\tau \text{D}_{\psi'}$, $\text{D}_{\phi}=\tau \text{D}_{\phi'}$. Informally, when $\tau\rightarrow +\infty$, the corresponding UOT problem tends to classic OT \eqref{eq:ot-classic}.
More precisely, the classic OT problem \eqref{eq:ot-classic} is a special case of the UOT problem \eqref{eq:uot-primal} where the $\psi$-divergences $\text{D}_{\psi}$, $\text{D}_{\phi}$ are defined such that $\Df{\psi'}{\bbP'}{\bbQ'}=0$ if $\bbP'=\bbQ'$, and $+\infty$ otherwise. Equivalently, it means that the generator functions $\psi$, $\phi$ are the convex indicators of $\{1\}$ and their conjugates are defined as $\overline{\psi}(t)=\overline{\phi}(t)=t$.
\vspace{0.9mm}\newline
In this paper, we focus on the \textit{semi-unbalanced} OT (SUOT) problem, i.e., the case when only the first marginal constraint in the OT problem \eqref{eq:ot-classic} is softened:
\vspace{-1.3mm}\begin{eqnarray}
    \text{SUOT}_{c,\psi}(\bbP,\bbQ) = \inf_{\gamma\in\Pi(\bbQ)} \int_{\cX\times\cY} c(x,y) \gamma(x, y) dx dy +\Df{\psi}{\gamma_x}{\bbP}.
     \label{eq:suot-primal}
\end{eqnarray}\vspace{-2.3mm}\newline
Note that the minimizer $\gamma^*$ of \eqref{eq:suot-primal} always exists thanks to the compactness of the space $\cX\times\cY$ (yielding the compactness of $\Pi(\mathbb{Q})$), continuity of the cost function $c(x,y)$ and lower semi-continuity of the function $\psi$ (yielding the lower-semi-continuity of the optimized functional w.r.t. $\gamma$). 
\vspace{0.9mm}\newline
The semi-dual SUOT problem is given by
\vspace{-1mm}\begin{eqnarray}
    \text{SUOT}_{c,\psi}(\bbP,\bbQ) = 
    \!\sup_{f\in \cC(\cY)} \!\Bigg\lbrace\!\!\!- \!\!
    \int_{\cX} \!\!\overline{\psi} (-f^c(x)) d\bbP(x) \! + \!\!\!\int_{\cY}\!\! f(y) d\bbQ(y)
    \Bigg\rbrace\!.
    \label{eq:suot-dual}
\end{eqnarray}\vspace{-2mm}\newline
Note that under mild assumptions on the cost function $c(x,y)$, optimal potential $f^*$ of \eqref{eq:suot-dual} always exists, see \citep[Theorem 4.14]{liero2018optimal}. When $\overline{\psi}(t)=t$, problem \eqref{eq:suot-dual} reduces to classic dual problem \eqref{eq:ot-dual} for balanced OT problem \eqref{eq:ot-classic}.

\vspace{-3mm}
\subsection{(Semi-)Unbalanced Optimal Transport Barycenter}
\label{sec-background-barycenters}\vspace{-2.5mm}

Consider probability measures $\bbP_k\in\cP(\cX_k)$ and continuous cost functions $c_k(x,y):\cX_k\times\cY\mapsto\bbR$, $k\in \overline{K}$. Given weights $\lambda_k\geq 0$ s.t. $\sum_{k=1}^K \lambda_k = 1$, the \textit{classic} OT barycenter problem consists in finding a minimizer of the sum of OT problems with fixed first marginals $\bbP_{[1:K]}$: 
\vspace{-2.7mm}\begin{eqnarray}
\inf_{\bbQ\in\cP(\cY)}\sum_{k=1}^K \lambda_k \text{OT}_{c_k}(\bbP_k, \bbQ).
    \label{eq:ot-bary}
    \vspace{-2mm}
\end{eqnarray}\vspace{-3mm}\newline
To ensure the robustness of the barycenter estimation, it is natural to consider the relaxation of the marginals $\bbP_{[1:K]}$, i.e., substitute the OT problems in \eqref{eq:ot-bary} with the SUOT problems. Let $\psi_{[1:K]}$ be the set of $\psi$-divergences. Then, the SUOT barycenter problem is
\vspace{-2mm}
\begin{eqnarray}
    \cL^*\defeq\inf_{\bbQ\in\cP(\cY)} \cB_{u}(\bbQ)\defeq\inf_{\bbQ\in\cP(\cY)}\sum_{k=1}^K \lambda_k \text{SUOT}_{c_k,\psi_k}(\bbP_k, \bbQ).
    \label{eq:primal-suot-bary}
    \vspace{-2mm}
\end{eqnarray}\vspace{-2.5mm}\newline
Clearly, the SUOT barycenter problem \eqref{eq:primal-suot-bary} subsumes the conventional OT one \eqref{eq:ot-bary}. We also note that in principle one may substitute OT with UOT problem \eqref{eq:uot-primal} unbalanced from the both sides \citep{friesecke2021barycenters, chung2021barycenters, bonafini2023hellinger}. However, this is not practically meaningful. Indeed, the ultimate goal is to get rid of the potential outliers and noises in the input data, not in the barycenter. So there is no need for unbalancedness in the barycenter.
\vspace{0.9mm}\newline
Thanks to \citep[Corollary 2.9]{liero2018optimal}, the functionals $\bbQ\mapsto\text{SUOT}_{c_k, \psi_k}(\bbP_k, \bbQ)$ ($k\in \overline{K}$) are convex and lower semi-continuous. Thus, the functional $\bbQ\mapsto\cB_{u}(\bbQ)$ is convex and lower semi-continuous itself. 
We note that $\mathcal{P}(\cY)$ is weakly compact. Thus, thanks to the Weierstrass theorem \citep[Box 1.1]{santambrogio2015optimal}, $\cB_{u}(\bbQ)$ admits at least one minimizer meaning that the barycenter $\bbQ^*$ exists. Still, we can not claim that it is unique since the functional $\bbQ\mapsto\cB_{u}(\bbQ)$ is not proved to be strictly convex.

\vspace{-3mm}
\subsection{Computational setup}
\vspace{-2.4mm}
\label{sec-background-setup}
In a real-world scenario, the distributions $\bbP_k$ are not available explicitly and can be assessed only via empirical samples. 
Assume that we are given $N_k$ empirical samples $x^k_{[1:N_k]}\sim\bbP_k,k\in\overline{K}$. Our goal is to find approximations $\widehat{\gamma}_k$ of the SUOT plans $\gamma^*_k$ between the measures $\bbP_k$ and unknown SUOT barycenter $\bbQ^*$ for the given cost functions and $\psi$-divergences $c_k,\psi_k, \;k\in\overline{K}$. After that, we can use the learned plans to perform the conditional sampling, i.e., derive new points $y\sim\widehat{\gamma}_k(\cdot|x_k)$ from the (approximate) barycenter taking samples $x_k\sim (\widehat{\gamma}_k)_x$ as inputs. Actually, sampling from the left marginals of the learned plans is not an easy task since the marginal can not be easily assessed using the learned plans or potentials. Fortunately, we can deal with this issue using the \textit{rejection sampling} procedure \citep{forsythe1972neumann}, see our \wasyparagraph\ref{sec-algorithm} for the additional details.
\vspace{0.5mm}\newline
It is important to note that the learned plans should admit the \textit{new} input samples, i.e., those which are not necessarily present in the training datasets. This setup is typically called \textit{continuous} \citep{li2020continuous,kolesovestimating, korotin2022wasserstein} and significantly differs from the discrete one \citep{peyre2019computational, cuturi2014fast}. The latter is aimed at solving the barycenter problem between the \textit{empirical} measures which makes its application to new data samples challenging.

\vspace{-3.5mm}
\section{Related Works}
\label{sec-related-works}
\vspace{-3.2mm}

Due to the space constraints, below we give an overview of the most relevant OT barycenter solvers and leave the discussion of the related \underline{\textit{continuous OT solvers}} to Appendix \ref{app-related}.

\vspace{-1.5mm}\textbf{Continuous OT barycenter solvers.} We start with a brief overview of continuous OT barycenter solvers and then proceed to the current state of robust (unbalanced) barycenter research. The continuous OT barycenter methods could be categorized as follows: \citep{fan2021scalable, korotin2021continuous} stick to ICNN \citep{amos2017input} parameterization and work only with quadratic Euclidean costs ($\ell_2^2$); \citep{korotin2022wasserstein} also deals with $\ell_2^2$ but utilizes fixed point algorithm \citep{alvarez2016fixed}; \citep{noble2023tree} builds upon Schrödinger bridge; \citep{li2020continuous, chi2023variational} take the advantage of congruence condition; \citep{kolesovestimating, kolesov2024energyguided} empower the congruence condition with Neural OT \citep{korotin2023neural} and Energy-guided Neural OT \citep{mokrov2024energyguided} correspondingly. The latter works are the current SOTA in the continuous OT barycenter domain.
\vspace{0.9mm}\newline
The area of robust barycenter computation is much less explored and, to the best of our knowledge, limited exclusively {\color{red}to} \textbf{discrete} (\S \ref{sec-background-setup}) solvers \citep{chizat2018scaling, le2021robust, beier2023unbalanced, sejourne2023unbalanced, wang2024robust, manupriya2024mmdregularized, ijcai2024p588} based on different principles (MMD regularization/Sinkhorn algorithm/Sliced OT etc.). In contrast, we take a significant step forward and propose the first \textit{continuous} robust barycenter approach with proper theoretical support and practical validation (\S \ref{sec-method}, \ref{sec-experiments}).

\vspace{-4mm}\section{Proposed Method}
\vspace{-3.5mm}
\label{sec-method}
In \wasyparagraph\ref{sec-objective}, we derive and investigate our novel optimization objective for learning SUOT barycenters. In \wasyparagraph\ref{sec-algorithm}, we propose the algorithm to solve this problem. The \underline{\textit{proofs}} for all theoretical results are given in Appendix \ref{app-proofs}.

\vspace{-3mm}
\subsection{Deriving the  Optimization Objective}
\label{sec-objective}\vspace{-2mm}

To develop a procedure for estimating the barycenter, one may simply substitute the dual form of SUOT problem \eqref{eq:suot-dual} in the barycenter problem \eqref{eq:primal-suot-bary} and get a $\min$-$\max$ problem. Actually, without additional modifications, it will lead to the $\min$-$\max$-$\min$ problem since the $c$-transforms $f_k^{c_k}$ also need to be optimized. In our Theorem \ref{thm-dual-suot-bary} below, we present the way for solving the optimization problem \eqref{eq:primal-suot-bary} in $\max$-$\min$ manner and without the optimization over distributions $\bbQ\in\cP(\cY)$. 
\begin{theorem}[Semi-dual form of SUOT barycenter problem]
    The dual form of SUOT barycenter problem \eqref{eq:primal-suot-bary} is given by
    \vspace{-3.9mm}\begin{eqnarray}
        \cL^*=\sup_{\substack{  m\in\bbR, f_{[1:K]} \in \cC^K(\cY)\\ \sum_{k = 1}^K \lambda_k f_k \equiv m}}
        \sum_{k=1}^K \lambda_k \big[ \int_{\cX_k} -\overline{\psi}_k (-f_k^{c_k}(x_k))d\mathbb{P}_k(x_k)+m \big].
    \label{eq:dual-suot-bary-before}
    \end{eqnarray}\vspace{-6mm}
    \label{thm-dual-suot-bary}
    \end{theorem}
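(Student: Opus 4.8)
The plan is to start from the barycenter problem \eqref{eq:primal-suot-bary} and substitute the semi-dual form \eqref{eq:suot-dual} into each summand. This gives
\begin{eqnarray*}
\cL^* = \inf_{\bbQ\in\cP(\cY)} \sum_{k=1}^K \lambda_k \sup_{f_k\in\cC(\cY)} \Big\lbrace -\int_{\cX_k} \overline{\psi}_k(-f_k^{c_k}(x_k))\,d\bbP_k(x_k) + \int_{\cY} f_k(y)\,d\bbQ(y) \Big\rbrace.
\end{eqnarray*}
Since the weights $\lambda_k$ are non-negative and the suprema are over independent variables $f_k$, I can pull all the suprema out in front of the sum, obtaining
\begin{eqnarray*}
\cL^* = \inf_{\bbQ\in\cP(\cY)} \sup_{f_{[1:K]}\in\cC^K(\cY)} \sum_{k=1}^K \lambda_k \Big\lbrace -\int_{\cX_k} \overline{\psi}_k(-f_k^{c_k}(x_k))\,d\bbP_k(x_k) + \int_{\cY} f_k(y)\,d\bbQ(y) \Big\rbrace.
\end{eqnarray*}
The term $\sum_k \lambda_k \int_\cY f_k\,d\bbQ = \int_\cY \big(\sum_k \lambda_k f_k\big)\,d\bbQ$ is the only place $\bbQ$ appears, and it is linear in $\bbQ$.

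Next I would like to swap the $\inf_\bbQ$ and $\sup_{f_{[1:K]}}$. The inner functional is concave (in fact affine after the $c$-transform is fixed, and a supremum of affine functions is convex; more precisely each summand is concave in $f_k$ because $\overline{\psi}_k$ is convex and $f_k\mapsto f_k^{c_k}$ is concave, so $-\overline{\psi}_k(-f_k^{c_k})$ is concave) in $f_{[1:K]}$ and affine — hence convex and lower semi-continuous — in $\bbQ$, and $\cP(\cY)$ is convex and weakly compact. So a minimax theorem (e.g.\ Sion's) applies and lets me write $\cL^* = \sup_{f_{[1:K]}} \inf_{\bbQ\in\cP(\cY)} (\cdots)$. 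Now the inner infimum over $\bbQ$ is
\begin{eqnarray*}
\inf_{\bbQ\in\cP(\cY)} \Big[ \sum_{k=1}^K \lambda_k \big(-\!\!\int \overline{\psi}_k(-f_k^{c_k})\,d\bbP_k\big) + \int_\cY \Big(\sum_{k=1}^K \lambda_k f_k\Big)\,d\bbQ \Big],
\end{eqnarray*}
and minimizing $\int_\cY g\,d\bbQ$ over probability measures $\bbQ$ gives $\min_{y\in\cY} g(y)$ where $g = \sum_k \lambda_k f_k$. So the supremum is effectively over $f_{[1:K]}$ with the objective containing an extra $+\min_{y}\sum_k\lambda_k f_k(y)$ term. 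The final step is to introduce the scalar variable $m := \min_{y\in\cY}\sum_k\lambda_k f_k(y)$ and argue we may restrict to $\sum_k \lambda_k f_k \equiv m$: given any $f_{[1:K]}$ with $\min_y \sum_k \lambda_k f_k(y) = m$, replace $f_K$ by $f_K + \frac{1}{\lambda_K}(m - \sum_k \lambda_k f_k)$ (assuming $\lambda_K>0$; reindex otherwise) so that the weighted sum becomes identically $m$; one checks the $c$-transform terms only improve or stay equal since adding a constant to $f_k$ shifts $f_k^{c_k}$ by that constant and $\overline\psi_k$ is monotone in the relevant regime — or more cleanly, the value $\inf_\bbQ$ of the objective is unchanged under this replacement because it only depends on $\sum_k\lambda_k f_k$ through its minimum. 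Rewriting $\int \lambda_k f_k\,d\bbQ$ summed as $\sum_k \lambda_k m$ (using $\sum_k\lambda_k=1$) yields exactly \eqref{eq:dual-suot-bary-before}.

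The main obstacle I anticipate is rigorously justifying the minimax swap: Sion's theorem needs the right compactness/semicontinuity on at least one side, and while $\cP(\cY)$ is weakly compact and $\bbQ\mapsto\int g\,d\bbQ$ is weakly continuous, the concavity of $f_k\mapsto -\overline{\psi}_k(-f_k^{c_k})$ and appropriate (semi)continuity in the $\cC(\cY)$ variable must be verified, or alternatively one restricts to a suitable compact subset of potentials (e.g.\ $c$-concave ones with controlled modulus of continuity, using that the cost functions are continuous on compact sets). A secondary, more routine point is the bookkeeping in the last step — confirming that collapsing to the constraint $\sum_k\lambda_k f_k\equiv m$ loses nothing, which I'd handle by the "only-depends-on-the-minimum" observation above rather than by tracking the $c$-transforms explicitly.
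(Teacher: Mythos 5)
Your overall route is the same as the paper's: substitute the semi-dual form \eqref{eq:suot-dual} of each SUOT term, swap $\inf_{\bbQ}$ and $\sup_{f_{[1:K]}}$ by a minimax theorem (the paper invokes Terkelsen's corollary where you propose Sion's; the ingredients you list --- weak compactness of $\cP(\cY)$, linearity and continuity in $\bbQ$, concavity in $f_{[1:K]}$ --- are exactly the ones the paper checks), reduce $\inf_{\bbQ}\int\overline{f}\,d\bbQ$ to $\min_{y}\overline{f}(y)$, and finally restrict to $m$-congruent potentials. One caveat on the concavity step: convexity of $\overline{\psi}_k$ alone does not suffice; you also need that $\overline{\psi}_k$ is non-decreasing, so that $t\mapsto-\overline{\psi}_k(-t)$ is concave \emph{and non-decreasing} and its composition with the concave map $f_k\mapsto f_k^{c_k}$ (an infimum of affine functionals of $f_k$) is concave. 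You flag this as something to verify, so it is an omission rather than an error.

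The genuine problem is in your justification of the final reduction. The replacement $f_K\mapsto f_K+\tfrac{1}{\lambda_K}(m-\sum_k\lambda_k f_k)$ does not add a \emph{constant} to $f_K$: the perturbation $\tfrac{m-\overline{f}(y)}{\lambda_K}$ is a genuinely $y$-dependent, non-positive function whenever the $f_k$ are not already congruent, so the observation that adding a constant shifts $f_K^{c_K}$ by that constant does not apply. Your ``cleaner'' alternative --- that the objective is unchanged because it depends on the potentials only through $\min_y\sum_k\lambda_k f_k(y)$ --- is false: the terms $\int_{\cX_k}-\overline{\psi}_k(-f_k^{c_k})\,d\bbP_k$ depend on each $f_k$ individually, not on the weighted sum, and the replacement does change $f_K^{c_K}$. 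What actually closes the step (and is what the paper does) is a one-sided monotonicity argument: since $m-\overline{f}\leq 0$, the modified potential $\widetilde{f}_K$ is pointwise $\leq f_K$, hence $\widetilde{f}_K^{c_K}\geq f_K^{c_K}$ because the $c$-transform is order-reversing, hence $-\overline{\psi}_K(-\widetilde{f}_K^{c_K})\geq-\overline{\psi}_K(-f_K^{c_K})$ by monotonicity of $\overline{\psi}_K$. Thus the objective can only increase under the replacement, which is exactly what is needed to conclude that restricting the supremum to $m$-congruent tuples (and then taking the supremum over $m$) loses nothing.
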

Hereafter, we say that the potentials $f_{[1:K]}$ are $m$-\textit{congruent} or satisfy the $m$-\textit{congruence condition} if $\sum_{k=1}^K \lambda_k f_k\equiv m$ (for some $m\in\bbR$).
Substituting the definition of the $c$-transform in \eqref{eq:dual-suot-bary-before}, we get our final optimization objective. 
\begin{corollary}[Maximin reformulation for the semi-dual problem \eqref{eq:dual-suot-bary-before}]
    It holds:
    \vspace{-2.5mm}\begin{eqnarray}
        \cL^*=\!\!\!\!\!\!\sup_{\substack{  m\in\bbR, \\f_{[1:K]} \in \cC^K(\cY)\\ \sum_{k = 1}^K \lambda_k f_k \equiv m}}\!\!\!\!\!\!\!\inf_{\gamma(\cdot|x_k)\in\cP(\cY)}
        \underbrace{\sum_{k=1}^K \lambda_k \Bigg[ \!\!\int_{\cX_k} \!\!\!\!-\overline{\psi}_k \Big(\!\!-\!\!\!  \int_{\cY}\big(c_k(x_k,y)\!-\!f_k(y)\big)d \gamma_k(y|x_k)\Big)d\mathbb{P}_k(x_k)\!+\!m \Bigg]}_{\widetilde{\cL}(f_{[1:K]}, \{\gamma_k(\cdot|x_k)\}_{[1:K]}, m)\defeq}.
    \label{eq:dual-suot-bary}
    \end{eqnarray}
    \vspace{-2.5mm}\newline
    where the $\sup$ is taken over $m\in \mathbb{R}$ and $m$-congruent potentials $f_{[1:K]}\in\cC^K(\cY)$, $\inf$ $-$ over conditional plans $\gamma_k(\cdot|x_k)$.
    \label{corr-dual-suot-bary}
\end{corollary}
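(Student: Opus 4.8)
The plan is to obtain the statement directly from Theorem~\ref{thm-dual-suot-bary} by substituting the definition of the weak $c_k$-transform into \eqref{eq:dual-suot-bary-before} and then moving the resulting pointwise infimum outward through the expectation against $\bbP_k$ and through the finite sum over $k$. It is worth stressing up front that \emph{no minimax theorem is required}: the outer $\sup$ in \eqref{eq:dual-suot-bary} is literally the one already present in \eqref{eq:dual-suot-bary-before}, and the inner $\inf$ is nothing but the $c_k$-transform rewritten; so the argument is a chain of identities applied inside the $\sup$ over $m$ and $m$-congruent $f_{[1:K]}$, not a saddle-point exchange.

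First I would recall that $f_k^{c_k}(x_k)=\inf_{\gamma_k(\cdot|x_k)\in\cP(\cY)}\int_\cY\big(c_k(x_k,y)-f_k(y)\big)d\gamma_k(y|x_k)$, and that this infimum is attained: $\mu\mapsto\int_\cY\big(c_k(x_k,y)-f_k(y)\big)d\mu(y)$ is a weak-$*$ continuous affine functional on the weak-$*$ compact set $\cP(\cY)$ (the integrand is continuous on the compact $\cY$). Next, since $\psi_k$ is supported on $\bbR_+$, its conjugate $\overline{\psi}_k(t)=\sup_{u\ge 0}\{ut-\psi_k(u)\}$ is non-decreasing, and, by the standing assumptions, it is finite on all of $\bbR$, hence convex there and therefore continuous. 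Consequently the scalar map $h_k(t)\defeq-\overline{\psi}_k(-t)$ is non-decreasing and continuous, and any such map commutes with an infimum: $h_k\!\big(\inf_{\mu}g(\mu)\big)=\inf_{\mu}h_k\!\big(g(\mu)\big)$, where ``$\le$'' is monotonicity applied to $\inf_\mu g(\mu)\le g(\mu)$, and ``$\ge$'' follows since the infimum of $g$ is attained (or, without attainment, by continuity of $h_k$ along a minimizing sequence). Applying this with $g(\mu)=\int_\cY\big(c_k(x_k,y)-f_k(y)\big)d\mu(y)$ converts the integrand $-\overline{\psi}_k(-f_k^{c_k}(x_k))$ of \eqref{eq:dual-suot-bary-before} into the pointwise infimum $\inf_{\gamma_k(\cdot|x_k)\in\cP(\cY)}\big[-\overline{\psi}_k\big(-\int_\cY(c_k(x_k,y)-f_k(y))d\gamma_k(y|x_k)\big)\big]$ that appears inside $\widetilde{\cL}$.

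The remaining and only genuinely technical step is to interchange this pointwise infimum over $\gamma_k(\cdot|x_k)$ with the integral $\int_{\cX_k}\cdot\,d\bbP_k(x_k)$; the subsequent splitting across the finite sum over $k$ is immediate because the families $\gamma_k(\cdot|\cdot)$ for different $k$ are independent. The bound $\int_{\cX_k}\inf\le\inf\int_{\cX_k}$ is trivial; for the reverse one needs a measurable (or measurable $\varepsilon$-)minimizer $x_k\mapsto\gamma_k(\cdot|x_k)$. This is where I expect the work to concentrate: the integrand $F_k(x_k,\mu)\defeq-\overline{\psi}_k\big(-\int_\cY(c_k(x_k,y)-f_k(y))d\mu(y)\big)$ is jointly continuous in $(x_k,\mu)$ on $\cX_k\times\cP(\cY)$ with $\cP(\cY)$ compact metrizable, hence a Carath\'eodory (normal) integrand; Berge's maximum theorem then gives that $x_k\mapsto\inf_\mu F_k(x_k,\mu)$ is continuous with non-empty, compact, upper-hemicontinuous solution set, and a measurable selection theorem (Kuratowski--Ryll-Nardzewski), equivalently the standard interchange-of-minimization-and-integration result (Rockafellar--Wets, \emph{Variational Analysis}, Thm.~14.60), yields $\int_{\cX_k}\inf_{\mu\in\cP(\cY)}F_k(x_k,\mu)\,d\bbP_k(x_k)=\inf_{\gamma_k(\cdot|\cdot)}\int_{\cX_k}F_k(x_k,\gamma_k(\cdot|x_k))\,d\bbP_k(x_k)$, the infimum on the right being over measurable families $x_k\mapsto\gamma_k(\cdot|x_k)$ (equivalently, over disintegrations of plans in $\Pi(\bbP_k)$). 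Multiplying by $\lambda_k$, summing over $k$, adding the constant $\sum_k\lambda_k m=m$, and restoring the outer $\sup$ over $m$ and $m$-congruent $f_{[1:K]}$ then produces exactly $\sup\inf\widetilde{\cL}$ as in \eqref{eq:dual-suot-bary}, completing the proof.
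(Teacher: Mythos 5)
Your proposal is correct and follows essentially the same route as the paper: substitute the weak $c$-transform, commute the infimum with the monotone continuous map $t\mapsto -\overline{\psi}_k(-t)$ (using attainment of the weak-$*$ infimum over $\cP(\cY)$), and then interchange the pointwise infimum over $\cP(\cY)$ with the $\bbP_k$-integral and the finite sum over $k$, with no minimax exchange needed. The only divergence is the technical citation for the inf--integral interchange: the paper relies on the measurability-and-lower-boundedness criterion of Bertsekas--Shreve (Props.~7.27 and 7.50), whereas you invoke Berge's maximum theorem together with Kuratowski--Ryll-Nardzewski (or Rockafellar--Wets, Thm.~14.60); both are valid, but the former is lighter since it does not require joint continuity of the integrand in $(x_k,\mu)$, only measurability plus a uniform lower bound.
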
\vspace{-2mm}

Interestingly, in the special case when at least one of the $\text{SUOT}_{c_k,\psi_k}$ terms in the right-hand-side of \eqref{eq:primal-suot-bary} reduces into $\text{OT}_{c_k}$, the $m$-congruence condition in \eqref{eq:dual-suot-bary} turns into congruence condition $\sum_{k=1}^K \lambda_k f_k \equiv 0$, which typically appears in balanced settings \citep{kolesovestimating, kolesov2024energyguided}. 
\begin{corollary}[Congruence Condition of the Special SUOT barycenter problem]
    Suppose that $\overline{\psi_1}(t)=t$, i.e., $\text{SUOT}_{c_1,\psi_1} = \text{OT}_{c_1}$ in \eqref{eq:primal-suot-bary}. Then,  dual form \eqref{eq:dual-suot-bary} turns into the following dual formulation:
    \vspace{-2.5mm}\begin{eqnarray}
        \cL^*=\!\!\!\sup_{\substack{  f_{[1:K]} \in \cC^K(\cY)\\ \sum_{k = 1}^K \lambda_k f_k \equiv 0}}\inf_{\gamma(\cdot|x_k)\in\cP(\cY)}
        \sum_{k=1}^K \lambda_k \Bigg[ \int_{\cX_k} \!\!\!-\overline{\psi}_k \Big(\!\!-\!\!\int_{\cY}\big(c_k(x_k,y)\!\!-\!\!f_k(y)\big)d \gamma_k(y|x_k)\Big)d\mathbb{P}_k(x_k) \Bigg].
    \label{eq:congruence}
    \end{eqnarray}\vspace{-5.7mm}
    \label{corr-congruence}
\end{corollary}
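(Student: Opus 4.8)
The plan is to derive Corollary~\ref{corr-congruence} as a direct specialization of the maximin reformulation \eqref{eq:dual-suot-bary} established in Corollary~\ref{corr-dual-suot-bary}. The key observation is that the only role of the real parameter $m$ in \eqref{eq:dual-suot-bary} is to encode the congruence constraint $\sum_{k=1}^K \lambda_k f_k \equiv m$; once one of the divergence generators satisfies $\overline{\psi}_1(t)=t$, the $m$-dependence of the objective can be transferred into the potential $f_1$, which then lets us fix $m=0$ without loss of generality. So first I would start from the expression for $\widetilde{\cL}(f_{[1:K]}, \{\gamma_k(\cdot|x_k)\}_{[1:K]}, m)$ in \eqref{eq:dual-suot-bary} and isolate the $k=1$ summand: since $\overline{\psi}_1(t)=t$, that summand equals $\lambda_1\big[\int_{\cX_1}\int_{\cY}(c_1(x_1,y)-f_1(y))\,d\gamma_1(y|x_1)\,d\mathbb{P}_1(x_1)+m\big]$, which is \emph{affine} in both $f_1$ and $m$.

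Next I would perform the change of variables $\widetilde{f}_1 \defeq f_1 - \tfrac{m}{\lambda_1}$ (equivalently, absorb the constant $m$ into $f_1$). Under the congruence constraint $\sum_{k=1}^K \lambda_k f_k \equiv m$, this is equivalent to $\lambda_1 \widetilde{f}_1 + \sum_{k=2}^K \lambda_k f_k \equiv 0$, i.e. the tuple $(\widetilde{f}_1, f_2, \dots, f_K)$ is $0$-congruent. I would then check that the objective is invariant under this substitution: in the $k=1$ term, replacing $f_1$ by $\widetilde{f}_1 + \tfrac{m}{\lambda_1}$ produces $\lambda_1\big[\int\int(c_1 - \widetilde{f}_1 - \tfrac{m}{\lambda_1})\,d\gamma_1\,d\mathbb{P}_1 + m\big]$; since $\int_{\cY} d\gamma_1(y|x_1) = 1$ (as $\gamma_1(\cdot|x_1)\in\cP(\cY)$), the $-\tfrac{m}{\lambda_1}$ integrates to $-\tfrac{m}{\lambda_1}$ and, after multiplying by $\lambda_1$, the $-m$ and $+m$ cancel, leaving exactly $\lambda_1\int_{\cX_1}\int_{\cY}(c_1(x_1,y)-\widetilde{f}_1(y))\,d\gamma_1(y|x_1)\,d\mathbb{P}_1(x_1)$, which is precisely the $k=1$ term of \eqref{eq:congruence} written via $\overline{\psi}_1(t)=t$. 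For $k\geq 2$ the terms are untouched, and crucially the additive constant $m$ outside the $\overline{\psi}_k$ has now disappeared entirely. Since the map $(m, f_1)\mapsto(\widetilde{f}_1)$ with the remaining $f_{[2:K]}$ fixed is a bijection between $\{m\in\bbR,\ \sum\lambda_k f_k\equiv m\}$ and $\{\sum \lambda_k f_k \equiv 0\}$ (a given $0$-congruent tuple arises from $m=0$), the supremum over the former set equals the supremum over the latter, and the $\inf$ over $\gamma_k(\cdot|x_k)$ is unaffected. This yields \eqref{eq:congruence}.

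The one point that needs a little care — and the closest thing to an obstacle — is verifying that the cancellation of $m$ relies essentially on $\gamma_1(\cdot|x_1)$ being a \emph{probability} measure (so that $\int_\cY d\gamma_1(y|x_1)=1$), which is exactly the constraint imposed in \eqref{eq:dual-suot-bary}; this is what makes the shift by $\tfrac{m}{\lambda_1}$ inside the integral cancel against the external $+m$. I would also note that the argument requires $\lambda_1 > 0$ (so that the division by $\lambda_1$ makes sense); if $\lambda_1 = 0$ the corresponding SUOT term contributes nothing and the statement is vacuous or can be reduced to a smaller index set. Modulo these remarks, the proof is a short, essentially bookkeeping-level specialization of Corollary~\ref{corr-dual-suot-bary}, with no need to revisit the duality arguments behind Theorem~\ref{thm-dual-suot-bary}.
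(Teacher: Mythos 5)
Your proof takes the same route as the paper's: substitute $\widetilde{f}_1 \defeq f_1 - \tfrac{m}{\lambda_1}$, use $\overline{\psi}_1 = \mathrm{Id}$, observe that the constraint $\sum_k \lambda_k f_k \equiv m$ becomes $\lambda_1 \widetilde{f}_1 + \sum_{k\geq 2}\lambda_k f_k \equiv 0$, and note the $m$-dependence cancels. The supplementary remarks you make (that $\gamma_1(\cdot|x_1)\in\cP(\cY)$ is what makes the shift integrate to a constant, and that $\lambda_1>0$ is needed) are valid and worth stating.

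However, your detailed accounting of \emph{where} the $m$'s cancel is wrong. You claim that in the $k=1$ bracket, $\lambda_1\bigl[\int\!\!\int(c_1 - \widetilde{f}_1 - \tfrac{m}{\lambda_1})\,d\gamma_1\,d\bbP_1 + m\bigr]$ collapses to $\lambda_1\int\!\!\int(c_1-\widetilde{f}_1)\,d\gamma_1\,d\bbP_1$ because ``the $-m$ and $+m$ cancel after multiplying by $\lambda_1$.'' In fact that bracket equals $\lambda_1 \int\!\!\int(c_1-\widetilde{f}_1)\,d\gamma_1\,d\bbP_1 - m + \lambda_1 m$, which is not $\lambda_1\int\!\!\int(c_1-\widetilde{f}_1)\,d\gamma_1\,d\bbP_1$ unless $\lambda_1 = 1$. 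Correspondingly, your claim that for $k\geq 2$ ``the additive constant $m$ ... has now disappeared entirely'' is also false: those brackets are untouched and still carry $\lambda_k m$ each, contributing $(1-\lambda_1)m$. What actually happens — and what the paper does — is that the $-m$ produced by the shift in the $k=1$ integral cancels against the \emph{aggregate} $\sum_{k=1}^K \lambda_k m = m$ pulled out of all $K$ brackets, i.e. $-m + \lambda_1 m + (1-\lambda_1)m = 0$. The conclusion you reach is correct because this residue nets to zero, but the term-by-term cancellation you describe does not hold; you should carry the $+m$'s across all $k$ before cancelling.
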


 Theorem \ref{thm-bary-uotot-connection} below demonstrates the two important properties of the true plans $\gamma^*$ solving the SUOT barycenter problem \eqref{eq:primal-suot-bary}. First, the left marginals of these plans $(\gamma^*_k)_x$ can be characterized via the optimal potentials $f^*_k$ which allows us to find a procedure for sampling from the marginal density during the inference, see \wasyparagraph\ref{sec-algorithm}. 
Second, the Theorem guarantees that the family of optimal plans is indeed one of the solutions to \eqref{eq:dual-suot-bary}.
\vspace{-1mm}
\begin{theorem}[SUOT barycenter conditional plans are contained in optimal saddle points]
    Assume that there exist $m$ and $\{f_k^*(y)\in\cC(\cY)\}_{k=1}^K$ which deliver maximum to the problem \eqref{eq:dual-suot-bary}. Assume that $c_k(x,y)$ are continuous cost functions and $\overline{\psi}_k$ are continuously differentiable functions. Let $\{\gamma_k^*\}_{k=1}^K$ be a family of optimal SUOT plans between $\mathbb{P}_{k}$ and some barycenter $\mathbb{Q}^{*}$.
    Then for every plan $\gamma_k^*$\\
    (a) the marginal of the plan can be represented as:
    \begin{eqnarray}\vspace{-6mm}{\color{red}d(\gamma_k^*)_x(x) = \nabla\overline{\psi_k}(-(f^*_k)^{c_k}(x))d\bbP_k(x);} \label{eq-P-to_gammax}
    \end{eqnarray}
    (b) the corresponding conditional plan $\gamma_k^*(\cdot|x_k)$ satisfies
    \begin{eqnarray}\vspace{-4.8mm}
        \gamma^*_k(\cdot|x_k) \in \arg{\color{blue}\min}_{\gamma_k(\cdot|x_k)} \widetilde{\cL}(f_{[1:K]}^*, \{\gamma_k(\cdot|x_k))\}_{[1:K]},m).
    \end{eqnarray}\vspace{-3.3mm}
    \label{thm-bary-uotot-connection}
\end{theorem}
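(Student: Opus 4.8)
The plan is to exploit the variational structure already established: Theorem~\ref{thm-dual-suot-bary} tells us that the optimal potentials $f_{[1:K]}^*$ (together with $m$) attain the supremum in \eqref{eq:dual-suot-bary-before}, and the family $\{\gamma_k^*\}$ is assumed to consist of optimal SUOT plans between each $\bbP_k$ and a barycenter $\bbQ^*$. The key is to link these two ``optimal'' objects through the optimality conditions of the \emph{single} SUOT problem \eqref{eq:suot-primal}--\eqref{eq:suot-dual}, which should be a known consequence of \citep[Theorem~4.14]{liero2018optimal} and the first-order conditions for the $\psi$-divergence term. Concretely, for a fixed $k$ and the optimal potential $f_k^*$, the $c_k$-transform $(f_k^*)^{c_k}$ is the optimal potential on the $\cX_k$ side, and the KKT/subdifferential condition for the term $\Df{\psi_k}{\gamma_x}{\bbP_k}$ in \eqref{eq:suot-primal} forces the Radon--Nikodym derivative $\dv{(\gamma_k^*)_x}{\bbP_k}(x)$ to equal $(\overline{\psi_k})'$ evaluated at the dual variable $-(f_k^*)^{c_k}(x)$; this is exactly statement (a). I would first prove (a) by writing out the Lagrangian for \eqref{eq:suot-primal} with $\bbQ = \bbQ^*$, differentiating in the density of $\gamma_x$, and using that $\opsi_k$ is differentiable (so the supremum $\opsi_k(t)=\sup_u\{ut-\psi_k(u)\}$ is attained at $u=\nabla\opsi_k(t)$, i.e.\ the Legendre duality $u = (\opsi_k)'(t) \iff t \in \partial\psi_k(u)$).

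For part (b), the strategy is to recognize that $\tcJ(f_{[1:K]}^*,\{\gamma_k(\cdot|x_k)\},m)$ decouples over $k$ in the $\gamma$-variables: minimizing over $\{\gamma_k(\cdot|x_k)\}_{[1:K]}$ amounts to minimizing each inner expression separately, and for each $k$ the relevant quantity is $-\int_{\cX_k}\opsi_k\!\big(-\!\int_{\cY}(c_k(x_k,y)-f_k^*(y))\,d\gamma_k(y|x_k)\big)d\bbP_k(x_k)$. Because $\opsi_k$ is non-decreasing (its argument $-f_k^{c_k}$ is bounded and $\opsi_k$ is convex with domain $\bbR$, non-negative), pushing $\int_\cY(c_k(x_k,y)-f_k(y))\,d\gamma_k(y|x_k)$ down to its infimum over $\gamma_k(\cdot|x_k)\in\cP(\cY)$ simultaneously maximizes $-\opsi_k(-\cdot)$; hence the pointwise-in-$x_k$ minimizer is any $\gamma_k(\cdot|x_k)$ supported on $\arg\min_{y}\{c_k(x_k,y)-f_k^*(y)\}$, which by definition of the weak $c$-transform realizes $(f_k^*)^{c_k}(x_k)$. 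I then need to argue that the optimal SUOT plan $\gamma_k^*$ indeed has conditionals supported on exactly these $\arg\min$ sets --- this is the standard complementary-slackness/cyclical-monotonicity property of optimal SUOT plans, again available from \citep{liero2018optimal}: on the support of $\gamma_k^*$ one has $c_k(x_k,y) - f_k^*(y) = (f_k^*)^{c_k}(x_k)$ (the dual feasibility $c_k - f_k^* \ge (f_k^*)^{c_k}$ holding with equality $\gamma_k^*$-a.e.). Combining, $\gamma_k^*(\cdot|x_k)$ attains the inner infimum for $\bbP_k$-a.e.\ $x_k$, hence globally, which is (b).

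The care point --- and what I expect to be the main obstacle --- is the exchange of the infimum over the conditionals with the integral over $x_k$, i.e.\ justifying that a \emph{measurable selection} of pointwise minimizers exists and that $\gamma_k^*$ is such a selection, rather than merely being a minimizer ``on average.'' One must check: (i) the inner functional $\gamma_k(\cdot|x_k)\mapsto\opsi_k(-\int_\cY(c_k-f_k^*)d\gamma_k(y|x_k))$ is continuous/concave in $\gamma_k(\cdot|x_k)$ so that interchanging $\inf$ and $\int d\bbP_k$ is valid (a Rockafellar-type interchange theorem for integral functionals, or an explicit argument using that the integrand is minimized pointwise by a common rule); and (ii) that the monotonicity of $\opsi_k$ is strict \emph{enough}, or at least that any optimal plan must saturate the $\arg\min$ --- if $\opsi_k$ has flat regions this requires invoking the primal optimality of $\gamma_k^*$ directly rather than a naive pointwise argument. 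I would handle (ii) by not relying on strict monotonicity at all: instead, I compare $\tcJ$ evaluated at $\gamma_k^*$ versus at an arbitrary competitor, use part (a) to identify $-\opsi_k(-(f_k^*)^{c_k}(x))$-type terms with the actual SUOT cost via the Fenchel--Young equality $\opsi_k(t) + \psi_k(u) = ut$ at $u = (\gamma_k^*)_x/\bbP_k$, $t = -(f_k^*)^{c_k}(x)$, and then read off that $\tcJ(f_{[1:K]}^*,\{\gamma_k^*(\cdot|x_k)\},m)$ equals the primal value $\cL^*$, which by weak duality (Corollary~\ref{corr-dual-suot-bary}) is the minimum of $\tcJ(f_{[1:K]}^*,\cdot,m)$ over all conditionals. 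That identity is precisely statement (b), and it sidesteps measurable-selection subtleties by turning everything into a value comparison.
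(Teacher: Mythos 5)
Your proposal tracks the right objects and identifies most of the right tools, but there is a genuine gap at the very step you call "the key": linking the barycenter-dual-optimal potentials $f_k^*$ to the optimality conditions of each individual $\mathrm{SUOT}_{c_k,\psi_k}(\bbP_k,\bbQ^*)$ problem. You write that this "should be a known consequence of \citep[Theorem~4.14]{liero2018optimal} and the first-order conditions for the $\psi$-divergence term," but Liero et al.'s Theorem~4.14 only gives existence of dual optimizers for a \emph{single} (S)UOT problem; it says nothing about the $m$-congruent family maximizing the barycenter objective. The nontrivial fact---which the paper proves explicitly via a chain of inequalities---is that the constrained maximizer $(f_1^*,\ldots,f_K^*)$ of \eqref{eq:dual-suot-bary-before} is \emph{simultaneously} an unconstrained maximizer of every individual SUOT dual against $\bbQ^*$. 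The argument: $\cL^* = \sum_k \lambda_k\,\mathrm{SUOT}_{c_k,\psi_k}(\bbP_k,\bbQ^*)$ (primal, since $\bbQ^*$ is a barycenter), and $\cL^*$ also equals the $m$-congruent dual value at $f^*_{[1:K]}$; but the $m$-congruence constraint is a \emph{restriction}, so the dual value at $f^*_{[1:K]}$ is a priori no larger than $\sum_k \lambda_k\,\mathrm{SUOT}_{c_k,\psi_k}(\bbP_k,\bbQ^*)$ (weak duality applied to each term separately, using $\int f_k^*\,d\bbQ^* $ summed with weights gives $m$). Since the endpoints agree, the sandwiched inequality is an equality and each $f_k^*$ is individually SUOT-dual-optimal. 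Without this step, neither your KKT/Lagrangian argument for (a) nor your complementary-slackness argument for (b) has anything to bite on: those first-order conditions only characterize the SUOT optimizer for $\bbP_k$ vs.\ $\bbQ^*$, and the barycenter-optimality of $f_k^*$ alone does not make it that optimizer.

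Granting that missing step, the rest of your plan is sound and slightly different from the paper's execution. For (a), you propose writing out the Lagrangian of the primal SUOT \eqref{eq:suot-primal} with $\bbQ=\bbQ^*$ and differentiating in the density of $\gamma_x$; the paper instead cites a ready-made duality result (its Theorem~\ref{thm-uot-ot-connection}, from \citep{choi2024generative}) giving $d\widetilde{\bbP}_k = \nabla\opsi_k(-(f_k^*)^{c_k})\,d\bbP_k$ directly, then identifies $(\gamma_k^*)_x=\widetilde{\bbP}_k$ by recognizing $\gamma_k^*$ as a balanced OT plan between $\widetilde{\bbP}_k$ and $\bbQ^*$. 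For (b), the paper takes your first route (support characterization via \citep[Remark~5.13]{villani2009optimal}, after noting $\widetilde{\bbP}_k\ll\bbP_k$ because $\nabla\opsi_k$ is continuous). Your second route---comparing $\widetilde{\cL}(f^*,\gamma^*,m)$ to $\cL^*$ via Fenchel--Young equality and weak duality---is a legitimate and arguably cleaner alternative that does sidestep the measurable-selection concern you flag, but as you note it leans on (a), which in turn leans on the missing optimality-transfer step. So: fill in the sandwiching argument and your outline becomes a complete proof; as written, the central link is asserted rather than established.
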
\vspace{-3mm}
Theorem \eqref{thm-bary-uotot-connection} shows that for some optimal saddle points $\{(f_k^*, \gamma_k^*)\}_{k=1}^K$ of \eqref{eq:dual-suot-bary}, it holds that $\{\gamma_k^*\}_{k=1}^K$ is the family of true SUOT plans between $\bbP_{[1:K]}$ and ${\color{red}\bbQ^*}$. Meanwhile, for any family of optimal $f^*_{[1:K]}$, the $\arg{\color{blue}\min}_{\gamma_k({\cdot|x_k})}$ sets might contain not only optimal SUOT plans $\{\gamma_k^*\}_{k=1}^K$ but other functions as well which is a known \textit{fake solutions} issue of neural OT solvers \citep{korotin2023kernel}.
\vspace{0.0mm}\newline
{\color{red}Still}, in Theorem \ref{thm-quality-bounds} below, we show that if the triple ($\widehat{m},\{\widehat{f}_k\}_{k = 1}^{K}, \{\widehat{\gamma}_k\}_{k=1}^K$) solves the problem \eqref{eq:dual-suot-bary} relatively well {\color{red} and some additional assumptions are satisfied}, then the recovered  conditional plans $\{\widehat{\gamma}(\cdot|x_k)_k\}_{k=1}^K$ are close to those of the true SUOT plans $\{\gamma^*_k(\cdot|x_k)\}_{k=1}^K$. Thanks to this result, we deduce that when Algorithm \ref{algorithm:suotbary} (\wasyparagraph \ref{sec-algorithm}) optimizing \eqref{eq:dual-suot-bary} converged nearly to the optimum, its solutions are close to the true conditional plans.
\vspace{-1.8mm}
\begin{theorem}[Quality bounds for recovered plans] \label{thm-quality-bounds}
Consider recovered $\widehat{m}$-congruent potentials $\widehat{f}_{[1:K]}$ s.t. {\color{red}$\widehat{f}_k \in \cC(\cY), k \in \overline{K}$} and recovered conditional plans $\{\hgamma_k(\cdot \vert x_k)\}_{k = 1}^{K}, x_k \sim \bbP_k$ s.t. $\hgamma_k(\cdot \vert x_k)\in \cP(\cY), k \in \overline{K}$. Assume that the conditions of Theorem \ref{thm-bary-uotot-connection} are satisfied, $\cY$ is convex set, and maps $y\mapsto c_k(x_k,y)-\widehat{f}_k(y)$ are $\beta$-strongly convex {\color{red} on $\cY$}, $k\in\overline{K}$. Also, let $0 < \eta \defeq \nabla \opsi(-b)$ where $b$ is the upper bound ${b = \max\limits_{k\in\overline{K}} \max\limits_{f_k \in \{\hf_k, \stf_k\}}  \max\limits_{x_k \in \cX_k, y\in\cY} \lbrace c_k(x_k, y) - f_k(y)\rbrace}$.
Introduce {\color{red}the} duality gaps:
\vspace{-2mm}
\begin{eqnarray*}\vspace{-1mm}
    \delta_1(\widehat{f}_{[1:K]}, \widehat{\gamma}_{[1:K]}) &\defeq& \widetilde{\cL}(\widehat{f}_{[1:K]}, \widehat{\gamma}_{[1:K]}, \widehat{m}) - \inf_{\gamma_k(\cdot \vert x_k) \in \cP(\cY)} \widetilde{\cL}(\widehat{f}_{[1:K]}, \gamma_{[1:K]}, \widehat{m}); \\\vspace{-1mm}
    \delta_2(\widehat{f}_{[1:K]}) &\defeq& \cL^* -  \inf_{\gamma_k(\cdot \vert x_k) \in \cP(\cY)} \widetilde{\cL}(\widehat{f}_{[1:K]}, \gamma_{[1:K]}, \widehat{m}).
\end{eqnarray*}
\vspace*{-1.6mm}\newline
Then, $\frac{2}{\eta \beta}\big(\delta_1 + \delta_2\big) \geq \sum_{k = 1}^K \lambda_k \int_{\cX_k} \bbW_2^2\big(\hgamma_k(\cdot \vert x), \stgamma_k(\cdot\vert x)\big) d \bbP_k(x).$

\end{theorem}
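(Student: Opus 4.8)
The plan is to reduce the whole estimate to a pointwise (in $k$ and in $x_k\sim\bbP_k$) comparison anchored at one reference point: the minimizer $\widehat{y}_k(x_k)\defeq\argmin_{y\in\cY}\{c_k(x_k,y)-\hf_k(y)\}$ of the $\beta$-strongly convex map $y\mapsto c_k(x_k,y)-\hf_k(y)$, which is unique since $\cY$ is convex. Two elementary facts drive everything. \emph{(i) Quadratic growth}: for any $\mu\in\cP(\cY)$, strong convexity on the convex set $\cY$ gives $\int_\cY\big(c_k(x_k,y)-\hf_k(y)\big)\,d\mu(y)-(\hf_k)^{c_k}(x_k)\ge\frac{\beta}{2}\int_\cY\|y-\widehat{y}_k(x_k)\|^2\,d\mu(y)=\frac{\beta}{2}\bbW_2^2\big(\mu,\delta_{\widehat{y}_k(x_k)}\big)$, since the only coupling of $\mu$ with a Dirac is the product one. \emph{(ii) Outer slope bound}: $a\mapsto-\overline{\psi}_k(-a)$ is nondecreasing with derivative $\nabla\overline{\psi}_k(-a)$, which is nondecreasing in $a$ (convexity of $\overline{\psi}_k$) and hence $\ge\nabla\overline{\psi}_k(-b)=\eta>0$ for every $a\le b$; therefore $-\overline{\psi}_k(-a_1)+\overline{\psi}_k(-a_2)\ge\eta(a_1-a_2)$ whenever $b\ge a_1\ge a_2$. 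Since for any $\gamma_k(\cdot|x_k)$ the inner integral $\int_\cY(c_k-\hf_k)\,d\gamma_k(\cdot|x_k)$ lies in $[(\hf_k)^{c_k}(x_k),b]$ and is minimized exactly by $\delta_{\widehat{y}_k(x_k)}$, strict monotonicity of $-\overline{\psi}_k(-\cdot)$ on $(-\infty,b]$ gives $\inf_{\gamma_k(\cdot|x_k)\in\cP(\cY)}\widetilde{\cL}(\hf_{[1:K]},\gamma_{[1:K]},\widehat{m})=\widetilde{\cL}\big(\hf_{[1:K]},\{\delta_{\widehat{y}_k(\cdot)}\}_{[1:K]},\widehat{m}\big)$.

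\emph{Bounding $\delta_1$.} Expand $\delta_1=\widetilde{\cL}(\hf_{[1:K]},\hgamma_{[1:K]},\widehat{m})-\widetilde{\cL}(\hf_{[1:K]},\{\delta_{\widehat{y}_k(\cdot)}\}_{[1:K]},\widehat{m})$ summand by summand, apply (ii) with $a_1=\int_\cY(c_k-\hf_k)\,d\hgamma_k(\cdot|x)\le b$ and $a_2=(\hf_k)^{c_k}(x)$, then (i) with $\mu=\hgamma_k(\cdot|x)$, to get
\[
\delta_1\ \ge\ \frac{\eta\beta}{2}\sum_{k=1}^{K}\lambda_k\int_{\cX_k}\bbW_2^2\big(\hgamma_k(\cdot|x),\delta_{\widehat{y}_k(x)}\big)\,d\bbP_k(x).
\]

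\emph{Bounding $\delta_2$.} Since $\gamma^*_k(\cdot|x_k)\in\cP(\cY)$ is admissible in the inner infimum, it is enough to prove $\widetilde{\cL}(\hf_{[1:K]},\gamma^*_{[1:K]},\widehat{m})\le\cL^*$; then $\delta_2\ge\widetilde{\cL}(\hf_{[1:K]},\gamma^*_{[1:K]},\widehat{m})-\widetilde{\cL}(\hf_{[1:K]},\{\delta_{\widehat{y}_k(\cdot)}\}_{[1:K]},\widehat{m})$ and the $\delta_1$ computation with $\hgamma_k$ replaced by $\gamma^*_k$ yields $\delta_2\ge\frac{\eta\beta}{2}\sum_k\lambda_k\int_{\cX_k}\bbW_2^2(\gamma^*_k(\cdot|x),\delta_{\widehat{y}_k(x)})\,d\bbP_k(x)$. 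For the inequality I would invoke Theorem~\ref{thm-bary-uotot-connection}: by part (b), $\cL^*=\widetilde{\cL}(\stf_{[1:K]},\gamma^*_{[1:K]},m^*)$ (with $m^*$ the optimal shift) and each $\gamma^*_k(\cdot|x_k)$ is supported on $\argmin_y\{c_k(x_k,y)-\stf_k(y)\}$, so $\int_\cY(c_k-\stf_k)\,d\gamma^*_k(\cdot|x_k)=(\stf_k)^{c_k}(x_k)$ and hence $\int_\cY(c_k-\hf_k)\,d\gamma^*_k(\cdot|x_k)=(\stf_k)^{c_k}(x_k)+v_k(x_k)$ with $v_k(x_k)\defeq\int_\cY(\stf_k-\hf_k)\,d\gamma^*_k(\cdot|x_k)$. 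Writing out $\widetilde{\cL}(\hf_{[1:K]},\gamma^*_{[1:K]},\widehat{m})-\cL^*$, bounding $\overline{\psi}_k\big(-(\stf_k)^{c_k}(x_k)\big)-\overline{\psi}_k\big(-(\stf_k)^{c_k}(x_k)-v_k(x_k)\big)\le\nabla\overline{\psi}_k\big(-(\stf_k)^{c_k}(x_k)\big)\,v_k(x_k)$ by convexity of $\overline{\psi}_k$, and then using the marginal identity $\nabla\overline{\psi}_k(-(\stf_k)^{c_k}(x_k))\,d\bbP_k(x_k)=d(\gamma^*_k)_x(x_k)$ from Theorem~\ref{thm-bary-uotot-connection}(a), the shared right marginal $(\gamma^*_k)_y=\bbQ^*$ for all $k$, and the two congruence identities $\sum_k\lambda_k\stf_k\equiv m^*$ and $\sum_k\lambda_k\hf_k\equiv\widehat{m}$, the correction term telescopes:
\[
\sum_{k=1}^K\lambda_k\int_{\cX_k} v_k(x_k)\,d(\gamma^*_k)_x(x_k)=\int_\cY\sum_{k=1}^K\lambda_k\big(\stf_k(y)-\hf_k(y)\big)\,d\bbQ^*(y)=m^*-\widehat{m},
\]
which cancels the explicit $\widehat{m}-m^*$ contribution and leaves $\widetilde{\cL}(\hf_{[1:K]},\gamma^*_{[1:K]},\widehat{m})-\cL^*\le 0$.

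\emph{Combining, and the main obstacle.} For each $k,x$, the elementary inequality $\bbW_2^2(\mu_1,\mu_2)\le 2\bbW_2^2(\mu_1,\delta_z)+2\bbW_2^2(\mu_2,\delta_z)$ (from $\|a-b\|^2\le 2\|a-z\|^2+2\|b-z\|^2$ under the product coupling) with $\mu_1=\hgamma_k(\cdot|x)$, $\mu_2=\gamma^*_k(\cdot|x)$, $z=\widehat{y}_k(x)$, integrated against $\lambda_k\,d\bbP_k(x)$ and summed over $k$, combines the two lower bounds into $\frac{c}{\eta\beta}(\delta_1+\delta_2)\ge\sum_{k}\lambda_k\int_{\cX_k}\bbW_2^2(\hgamma_k(\cdot|x),\gamma^*_k(\cdot|x))\,d\bbP_k(x)$ for an absolute constant $c$ of order one, which is the asserted bound (the sharp $c=2$ comes out of a slightly tighter coupling, e.g.\ when $\gamma^*_k(\cdot|x)$ is deterministic). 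Both $\delta_1,\delta_2\ge 0$ by construction. I expect the only non-routine step to be the comparison $\widetilde{\cL}(\hf_{[1:K]},\gamma^*_{[1:K]},\widehat{m})\le\cL^*$: since $\hf_{[1:K]}$ is merely near-optimal and the dual functional is not strongly concave, this cannot be read off from optimality of $\hf$, and must instead be extracted from the exact structure of the true plan — its left marginal (Theorem~\ref{thm-bary-uotot-connection}(a)), its support (part (b)), the common barycenter marginal $\bbQ^*$, and the two $m$-congruence constraints — which together force the first-order over-estimate of the $\overline{\psi}_k$ terms to telescope to zero. Everything else (the quadratic-growth bound, the $\eta$-slope bound, the triangle-type inequality) is standard and parallels quality-bound analyses for neural OT solvers.
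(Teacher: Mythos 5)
Your treatment of $\delta_1$ matches the paper exactly: anchor at the strong-convexity minimizer $\widehat{y}_k(x_k)$ (the paper calls it $T_k^{\hf}(x_k)$), use the lower slope $\eta=\nabla\opsi(-b)$ for the outer $-\opsi(-\cdot)$, and the $\beta$-quadratic-growth bound for the inner integral.

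Your $\delta_2$ bound takes a genuinely different route from the paper, and it is correct. The paper works directly with the decomposition $\delta_{2,k}=\tcV_k(\stf_k,\stgamma_k)-\tcV_k(\hf_k,\gamma_k^{\hf})$ and carries the $\int f_k\,d\bbQ^*$ correction term through a single long chain of manipulations (first-order convexity at $-(\stf_k)^{c_k}(x_k)$, the marginal identity $d(\stgamma_k)_x=\nabla\opsi_k(-(\stf_k)^{c_k})\,d\bbP_k$, cancellation of $\stf_k$, and regrouping into $\hg_k(x_k,y)-\hg_k(x_k,T_k^{\hf}(x_k))$). You instead isolate the clean intermediate lemma $\widetilde\cL(\hf_{[1:K]},\gamma^*_{[1:K]},\widehat m)\le\cL^*$, prove it by telescoping the first-order over-estimate using exactly the same structural facts (Theorem 2(a), the support characterization giving $\int(c_k-\stf_k)\,d\stgamma_k(\cdot\vert x_k)=(\stf_k)^{c_k}(x_k)$, the common right marginal $\bbQ^*$, and the two congruence identities), and then recycle the $\delta_1$-argument with $\hgamma_k$ replaced by $\stgamma_k$. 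This modularization is arguably cleaner and makes the role of each structural assumption more transparent; the paper's one-shot computation is shorter but harder to audit. Both arguments hinge on the same underlying inequalities and arrive at the same per-$k$ bound $\frac{\eta\beta}{2}\int\Vert y-\widehat y_k(x_k)\Vert^2\,d\stgamma_k(y\vert x_k)\,d\bbP_k(x_k)$.

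The one place where your argument and the paper's diverge in substance is the final combining step, and you correctly flag it. You show that the naive coupling (or the $\bbW_2$ triangle inequality together with $(a+b)^2\le 2a^2+2b^2$) gives $\bbW_2^2(\hgamma_k,\stgamma_k)\le 2\bbW_2^2(\hgamma_k,\delta_{\widehat y_k})+2\bbW_2^2(\stgamma_k,\delta_{\widehat y_k})$, which combined with the two per-gap lower bounds yields $\frac{4}{\eta\beta}(\delta_1+\delta_2)\ge\sum_k\lambda_k\int\bbW_2^2(\hgamma_k,\stgamma_k)\,d\bbP_k$, a factor $2$ worse than the stated Theorem. The paper instead asserts the sharper $\bbW_2^2(\hgamma_k,\delta_{\widehat y_k})+\bbW_2^2(\stgamma_k,\delta_{\widehat y_k})\ge\bbW_2^2(\hgamma_k,\stgamma_k)$ and discharges the proof to the reference \citep[Theorem 4.2]{kolesovestimating}; as stated, this pointwise inequality fails for general $\mu_1,\mu_2$ (e.g.\ $\mu_1=\delta_{z+a}$, $\mu_2=\delta_{z-a}$ has LHS $=2\|a\|^2$ but RHS $=4\|a\|^2$), so it must be using structure specific to that reference that neither you nor the present paper spells out, and your hedge about a "slightly tighter coupling" is not an actual proof of the sharp constant. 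Your derivation therefore proves the stated inequality with $\frac{4}{\eta\beta}$ on the left rather than $\frac{2}{\eta\beta}$; this is the one genuine gap relative to the Theorem as written, and it is not one you can close by a generic coupling argument.
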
\vspace{-2.2mm}

Theorem \ref{thm-quality-bounds} establishes quality bounds for the recovered plans $\{\widehat{\gamma}_k(\cdot|x_k)\}_{k=1}^K$ investigating the \textit{duality gaps}, i.e., the errors for solving inner and outer optimization problems in \eqref{eq:dual-suot-bary}. 
Note that the strong convexity conditions, in general, are not guaranteed in practice, but are commonly used in the related papers \citep{fan2023neural, rout2022generative}, see \citep[\wasyparagraph 4]{kolesovestimating} for additional details.

\vspace{-3mm}\subsection{Parametrization and Practical Optimization Procedure}\vspace{-2mm}
\label{sec-algorithm}
\textbf{Parametrization.} To implement the optimization over distributions $\gamma(\cdot|x_k)\in\cP(\cY)$ ($k\in\overline{K}$) in \eqref{eq:dual-suot-bary}, we consider parametrizing them with stochastic or deterministic functions, using the strategy defined in \citep[\wasyparagraph 4.1]{kolesovestimating} and \citep[\wasyparagraph 4.1]{korotin2023kernel}.
\vspace{0.9mm}\newline
To realize the optimization over conditional plans $\gamma_k(\cdot|x_k)$ in \eqref{eq:dual-suot-bary}, we define an auxiliary space ${\color{red}\cS\subset\bbR^{D_s}}$, atomless distribution $\bbS\in\cP(\cS)$ and measurable maps $T_k:\cX_k\times \cS \rightarrow\cY$. For every plan $\gamma_k$, we consider the representation using the map $T_{k}$ s.t. $\gamma_k(\cdot|x)=T_{k}(x,\cdot)_{\#}\bbS$. Using the stochastic parametrization of the conditional plans, we can reformulate the optimization objective \eqref{eq:dual-suot-bary} as:
\vspace{-1.9mm}\begin{eqnarray}
        \cL^*=\!\!\!\!\!\!\!\!\!\sup_{\substack{m \in \mathbb{R},\\f_{[1:K]} \in \cC^K(\cY) \\ \sum_{k = 1}^K \lambda_k f_k = m}} \!\!\!\!\!\inf_{T_{[1:K]}}\underbrace{ \sum_{k=1}^K \lambda_k \!\Bigg[ \!\!\int_{\cX_k}\!\!\!\!-\overline{\psi_k} \Bigg(\!\!\int_{\cS} \Big(f_k(T_k(x_k, s))- c_k(x_k, T_k(x_k, s))\Big)d\bbS(s) \Bigg)d\bbP_k(x_k) \!+\! m}_{{\cL(f_{[1:K]}, T_{[1:K]},m)\defeq}}\!\!\Bigg].
    \label{eq:suot-bary-objective}
\end{eqnarray}\vspace{-3mm}\newline
We denote the expression under $\sup\inf$ in \eqref{eq:suot-bary-objective} as $\cL(f_{[1:K]}, T_{[1:K]},m)$. In some of the setups, the stochasticity of the conditional plans $\gamma(\cdot|x)$ is not needed. Then we can consider a measurable map $T_k:\cX_k\rightarrow\cY$ which specifies the \textit{deterministic} conditional plans as $\gamma_k(\cdot|x)=\delta_{T_k(x)}(\cdot)$.
\vspace{0.9mm}\newline
For solving \eqref{eq:suot-bary-objective}, we parametrize the maps $T_{k}$ and potentials $f_{k}$ ($k\in\overline{K}$) as neural networks $T_{k,\omega}\!:\!\bbR^{D_k}\times\bbR^{D_s}\mapsto \bbR^D$ and $f_{k,\theta}\!:\!\bbR^{D_k}\mapsto\bbR$ with weights $\omega\defeq\omega_{[1:K]}\in\Omega$, $\theta\defeq\theta_{[1:K]}\in\Theta$. Here $\Omega=\Omega_1\times\Omega_2\times...\times\Omega_K$ and $\Theta=\Theta_1\times\Theta_2\times...\times\Theta_K$ are the parameter spaces for the maps and potentials respectively. Note that $\bbR^{D_s}$ denotes the stochastic dimension which should be omitted in the case of deterministic maps. In order to ensure the $m$-congruence condition for the potentials, we parametrize them as $f_{k,\theta}=g_{k,\theta}-\sum_{n\neq k} \frac{\lambda_n}{\lambda_k (K-1)} g_{n,\theta} + \frac{m}{{\color{red}K}\lambda_k}$ where $g_{k,\theta}$ denote the auxiliary neural nets.
Here we take inspiration from similar strategies in \citep{li2020continuous, kolesov2024energyguided, kolesovestimating}.

\begin{algorithm}[t!]
    \SetKwProg{Empty}{}{:}{}
    \SetAlgorithmName{Algorithm}{empty}{Empty}
        \SetKwInOut{Output}{Output}
        \textbf{Input:} Distributions $\bbP_{1:K}, \bbS$ accessible by samples; transport costs $c_k:\cX_k\times\cY\mapsto\bbR$;
        {\color{red}maps}\\ $T_{k, \omega}$ and $m$-congruent potentials $f_{k, \theta}$, $k \in \overline{K}$; number $N_T$ of  inner iterations; 
        batch sizes.\\
        \textbf{Output:} Trained (stochastic) maps $T_{[1:K], \omega^*}$  approximating SUOT plans between $\bbP_k$ and barycenter $\bbQ^*$.
        
        \Repeat{not converged}{
            Sample batches $X_k \sim \bbP_k$, $k \in \overline{K}$; For each $x_k \in X_k$
            sample {\color{red}an} auxiliary batch $S[x_k] \sim \bbS$;\\
            $\widehat{\cL_{f}}\leftarrow -\sum_{k\in\overline{K}} \lambda_k \Bigg[\frac{1}{\vert X_k \vert}\!\!\sum\limits_{x_k\in X_k}\overline{\psi} \Bigg(\sum\limits_{s_k\in S[x_k]} \frac{f(T_k(x_k, s))\!-\!c(x_k, T_k(x_k, s))}{\vert S[x_k] \vert} \Bigg) + m\Bigg]
            $;
            
            Update $\theta$ by using $\frac{\partial \widehat{\cL_{f}}}{\partial \theta}$ to \textit{maximize} $\widehat{\cL_{f}}$;
            
            \For{$n_T = 1,2, \dots, N_{T}$}{
                \Empty{{\normalfont 
                Sample batches $X_k \sim \bbP_k$; For each $x_k \in X_k$}}{
                sample {\color{red}an} auxiliary batch $S[x_k]\sim \bbS$;}
                
                $\widehat{\cL_{T_k}} \leftarrow  \frac{1}{\vert X_k \vert}\!\!\sum\limits_{x_k\in X_k}\sum\limits_{s_k\in S[x_k]} \frac{{\color{red}c(x_k, T_k(x_k, s))-\!f(T_k(x_k, s))}}{\vert S[x_k] \vert}$\\
                Update $\omega$ by using $\frac{\partial \widehat{\cL_{T}}}{\partial \omega}$ to \textit{minimize} $\widehat{\cL_T}$;
            }
            }
        \caption{SUOT Barycenter with Semi-Unbalanced Neural Optimal Transport}
        \label{algorithm:suotbary}
\end{algorithm}

\vspace{-1mm}\textbf{Training.} Recall that we work in {\color{red}the} continuous setup of {\color{red}the} barycenter problem, i.e., the distributions $\bbP_{[1:K]}$ are accessible only through the empirical samples of data. Thus, we opt to estimate the objective \eqref{eq:suot-bary-objective} from samples using the Monte-Carlo method. Specifically, the objective is optimized using stochastic gradient descent-ascent algorithm over random batches of samples from the distributions $\bbP_k$ and auxiliary distribution $\bbS$\footnote{Sampling from the stochastic distribution is omitted when we deal with the deterministic maps $T_{\omega,[1:K]}$.}. For simplicity, we replace the minimization of the objective $\cL(f_{\theta, [1:K]}, T_{\omega, [1:K]})$ w.r.t. $T_{\omega, [1:K]}$ with the direct minimization of the $c$-transform $\int_{\cX_k}\int_{\cS} \Big(c_k(x_k, T_k(x_k, s))\!-\!f_k(T_k(x_k, s))\Big)d\bbS(s)d\bbP_k(x_k)$ which has the same set of minimizers.
We detail our optimization procedure in Algorithm \ref{algorithm:suotbary}.
\vspace{0.9mm}\newline
\textbf{Inference.} Our approach for SUOT barycenter estimation relaxes the marginal constraints related to the input distributions $\bbP_k$. It means that during the inference, the input points $x_k$ should be sampled not from the distributions $\bbP_k$ but from the left marginals of the learned plans $({\color{red}\widehat{\gamma}_k})_x$. However, sampling from these marginals is not available explicitly and requires the usage of the established techniques for sampling from the \textit{reweighted} distributions, e.g., the \textit{rejection sampling} technique. We apply this technique in the following manner.
\vspace{0.9mm}\newline
Thanks to our Theorem \ref{thm-bary-uotot-connection} (a), we are able to approximate the fraction of the densities of measures $(\gamma_k^*)_x$ and $\bbP_k$: ${\color{red}\frac{d(\gamma_k^*)_x(x)}{d\bbP_k(x)}}\approx\nabla \overline{\psi}(-(\widehat{f}_k)^c(x))$ where $\widehat{f}_k$ denotes the potential learned during training. Thus, we can (1) generate new samples $X_k\sim\bbP_k$ and samples of the same shape from the uniform distribution $u_k\sim U$; (2) calculate the constant $c\defeq \max \big(\nabla \overline{\psi}(-(\widehat{f_k})^c(x))\big)$;
(3) accept the samples $X_k'$ which satisfy the inequality $u_k\leq \nabla \overline{\psi}(-(\widehat{f_k})^c(X_k'))$ and reject the others.

\vspace{-3.5mm}\section{Experiments}
\label{sec-experiments}
\vspace{-3mm}
In this section, we evaluate our model through various experiments. In \wasyparagraph\ref{sec-toy-exp}, we compare the numerical accuracy of our method against baseline methods. In \wasyparagraph \ref{sec-exp-property}, we investigate two key properties of our \textbf{U-NOTB} method: robustness to class imbalance and robustness to outliers. Specifically, we show that these properties can be controlled by adjusting the unbalanced parameter $\tau$. In \wasyparagraph \ref{sec-exp-stylegan}, we evaluate our model for the general costs, leveraging shape and color-invariant cost on the image dataset, other than the quadratic cost. We provide the \underline{\textit{details about settings and baselines}} in Appendix \ref{sec-implementation-details}. In Appendices \ref{app-gauss-b}, \ref{app-gauss-unb}, we test our solver in the balanced/semi-unbalanced OT barycenter problem for Gaussian distributions with computable \textit{\underline{ground-truth solutions}}. Additionally, in Appendix \ref{sec-ffhq}, we illustrate another \textit{\underline{interesting application}} of U-NOTB in higher dimensions showing its ability to manipulate images through interpolating its distributions on the image manifolds.
{\color{red}The code for our model is written using \texttt{PyTorch} framework 
and is publicly available at }\begin{center}\vspace{-1mm}\small\url{https://github.com/milenagazdieva/U-NOTBarycenters}.\end{center}

\vspace{-3.6mm}\subsection{Barycenter Evaluation on Synthetic Datasets} \label{sec-toy-exp}
\vspace{-2.5mm}

\textbf{Baselines.} 
We aim to evaluate whether our model accurately estimates the optimal barycenter ${\color{red}\bbQ^*}$ and the corresponding transport maps ${\color{red}T^*}_{[1:K]}$. Because our work is the first attempt to address the continuous SUOT barycenters, there are no existing baselines for direct comparison. Therefore, we carefully designed 
\textit{two baseline models for comparison with our approach}. These baseline models are motivated by the equivalence between the OT barycenter of two distributions and the interpolation between them \citep{villani2009optimal}. Each baseline model is derived from two approaches for learning the unbalanced transport map $T_{uot}$ between $\bbP_1$ and $\bbP_2$: (i) the semi-dual UOT model (UOTM) \citep{choi2024generative} and (ii) the OT Map model (OTM) \citep{rout2022generative} combined with the mini-batch UOT sampling \citep{eyring2024unbalancedness}. Because the {\color{red}SUOT} problem is equivalent to the OT problem between the rescaled distributions, we conduct the displacement interpolation using this $T_{uot}$ to approximate the barycenter. Specifically, we consider the following SUOT barycenter problem:
\vspace{-1.8mm}\begin{eqnarray} \label{eq:comparison}
    \inf_{\bbQ \in \mathcal{P}(\mathcal{Y})} \lbrace\lambda_1 \text{OT}_{c_1}(\bbP_1, \bbQ) + \lambda_2 {\color{red}\text{SUOT}}_{c_2} (\bbP_2, \bbQ)\rbrace,
\end{eqnarray}\vspace{-4mm}\newline
where $c_1, c_2$ are {\color{red}the} quadratic cost functions $c_{{\color{red}k}}(x,y)\!\!=\!\!\frac{\|x-y\|^2}{2}$, $\lambda_1 \!\!=\!\! \lambda_2 \!\!= \!\!\frac{1}{2}$. Let $T_1\!\! \defeq \!\!\lambda_1 Id \!+\! \lambda_2 T_{uot}$. Then, ${T_1}_\# \bbP_1$ and $T_1$ {\color{red}approximate} the barycenter distribution {\color{red}$\bbQ^*$} and the transport map from $\bbP_1$ to {\color{red}$\bbQ^*$}, respectively. Note that we should set the $\text{OT}_{c_1} $ distance for the first distribution $\bbP_{1}$ to ensure that the estimated barycenter $\color{red}{\mathbb{Q}= {T_1}_\# \bbP_1}$ has a unit mass in two baselines. Since the $\text{OT}$ problem is a specific case of the SUOT one, our method can be naturally applied to this case. 
We test U-NOTB on the synthetic dataset of \textit{Spiral-to-Gaussian Mixture} (S$\rightarrow$GM) (Fig. \ref{fig:comparison-setup}).

\vspace{-1.2mm}\textbf{Experimental Results.}
Fig. \ref{fig:comparison} presents a qualitative visualization of the results from our model and two baseline methods.
Since there is no closed-form solution for the continuous SUOT barycenter problems, we consider the SUOT barycenter between two discrete empirical distributions, obtained from the POT \citep{flamary2021pot} library, as the ground-truth solution $\bbQ^*$ and {\color{red}$T^{*}_1$} (Fig. \ref{fig:comparison-duot}). 
In Fig. \ref{fig:comparison-uotm}-\ref{fig:comparison-our}, we visualize the spiral distribution $\bbP_1$, the {\color{red}estimated transport map $T_1$ and barycenter} ${\color{red}\bbQ={T_1}_\# \bbP_1}$. Note that, because we set $\text{OT}_{c_1}$ for $\bbP_1$ in Eq. \eqref{eq:comparison}, the rejection sampling is not applied in this case. Fig. \ref{fig:comparison} shows that our model and other baseline methods present decent results.

\vspace{-1.2mm}Table \ref{table:comparison} provides a quantitative evaluation of our model for a more detailed assessment.
The approximate barycenter $\bbQ_{\color{red}\omega^*}$ and the transport map $T_{\color{red}1,\omega^*}$ from each method are evaluated from two perspectives. First, the distribution error between $\bbQ_{\color{red}\omega^*}$ and $\bbQ^*$ is evaluated by measuring the Wasserstein-2 distance $\bbW^2_2 (\bbQ_{\omega^*}, \bbQ^*)$ (\textit{$\mathbb{W}_{2}$ metric}). Second, the optimality error of the transport map is assessed by measuring $\mathcal{L}_{2}$-norm between the transport maps (\textit{$\mathcal{L}_{2}$ metric}), i.e., {\color{red}$\int_{\cX_1} \lVert T_{1,\omega^*} (x_1) - T^*_1 (x_1) \rVert^2_2 d \bbP_{1}(x_1)$.} 
As shown in Table \ref{table:comparison}, our model achieves the most accurate barycenter estimation in three out of four scores, evaluated using two metrics ($\mathbb{W}_2$ and $\mathcal{L}_2$) across two synthetic datasets.

\begin{figure*}[t!]
    \vspace*{-7mm}
    \begin{subfigure}[b]{0.208\linewidth}
        \centering
        \includegraphics[width=0.995\linewidth]{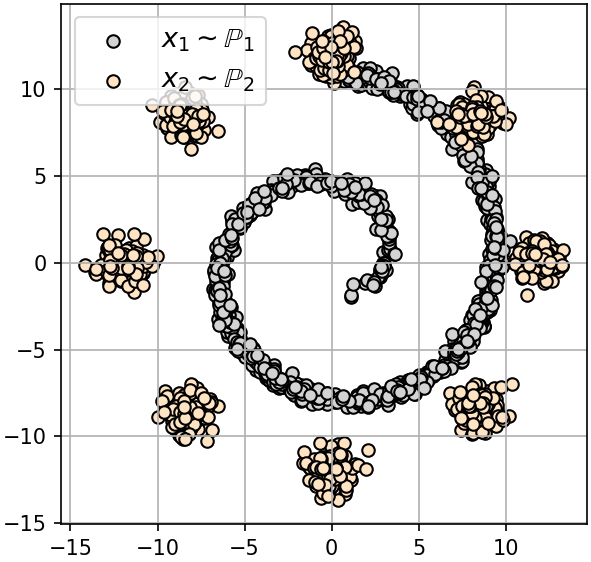}
        \vspace{-4mm}
        \caption{Setup}
        \label{fig:comparison-setup}
    \end{subfigure}
    \centering
    \begin{subfigure}[b]{0.191\linewidth}
        \centering
        \vspace{5mm}
        \includegraphics[width=0.995\linewidth]{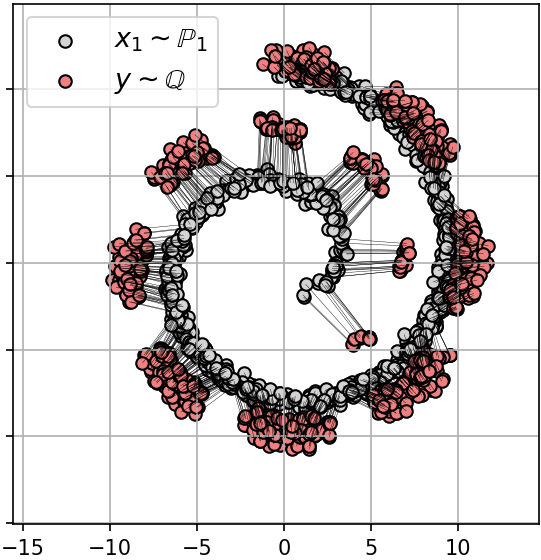}
        \caption{Discrete UOT}
        \label{fig:comparison-duot}
    \end{subfigure}
    \begin{subfigure}[b]{0.191\linewidth}
        \centering
        \includegraphics[width=0.995\linewidth]{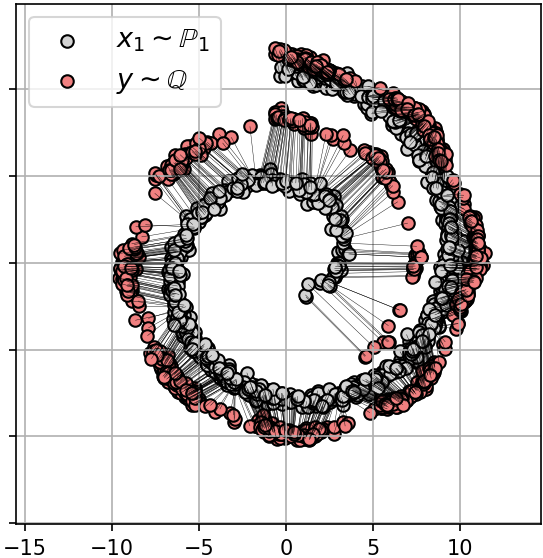}
        \caption{UOTM}
        \label{fig:comparison-uotm}
    \end{subfigure}
    \begin{subfigure}[b]{0.191\linewidth}
        \centering
        \includegraphics[width=0.995\linewidth]{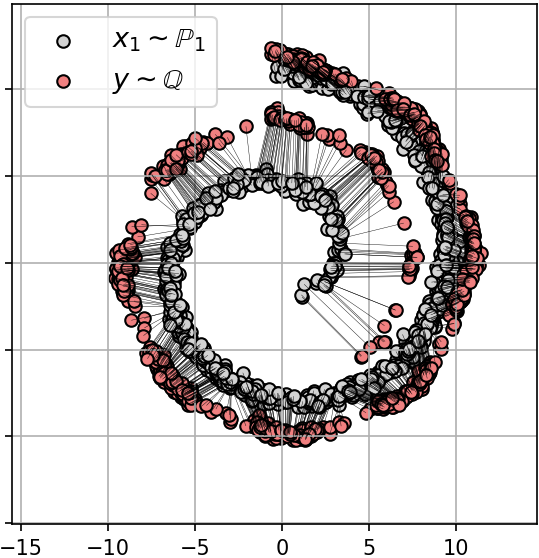}
        \caption{Mini-Batch  UOT}
        \label{fig:comparison-mbuot}
    \end{subfigure}
    \begin{subfigure}[b]{0.188\linewidth}
        \centering
        \includegraphics[width=0.995\linewidth]{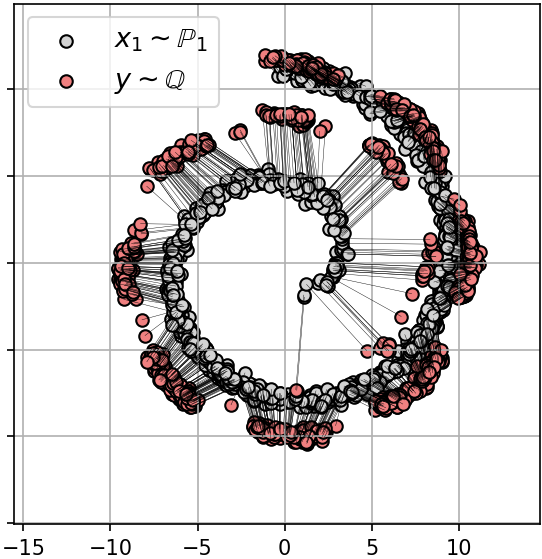}
        \caption{U-NOTB (ours)}
        \label{fig:comparison-our}
    \end{subfigure}
    \vspace*{-2mm}
    \caption{{\color{red}
    Transport map $T_1(x_1),\;x_1\sim\bbP_1$ and UOT barycenter distribution $\bbQ = {T_1}_\# \bbP_1$} obtained by Discrete UOT, UOTM, Mini-batch UOT, and our method in \textit{Spiral} $\rightarrow$ \textit{Gaussian Mixture}.
    } %
    \label{fig:comparison}
\end{figure*}
\begin{table}[]
\centering
\small
\vspace{-3pt} %
\vspace{-5pt}
\hspace*{-0mm}
\scalebox{0.85}{\begin{tabular}{ccccc}
\toprule
Data & Metric  & UOTM & Mini-Batch UOT & U-NOTB (ours) \\ 
\midrule
\multirow{2}{*}{\textit{Sprial} $\rightarrow$ \textit{Gaussian Mixture}} & $\mathcal{L}_{2}$ & 0.65 & 0.31           & \textbf{0.25} \\ 
 & $\mathbb{W}_{2}$ & 0.29 & 0.15           & \textbf{0.12} \\ 
 \midrule
\multirow{2}{*}{\textit{Moon} $\rightarrow$ \textit{Spiral}} & $\mathcal{L}_{2}$ & 0.89 & 0.64  & \textbf{0.52} \\ 
 & $\mathbb{W}_{2}$ & \textbf{0.59} &  0.93          & 0.81 \\ 
 \bottomrule
\end{tabular}}
\vspace*{-2mm}
\caption{\centering Comparison on Benchmarks on Toy datasets.}
\label{table:comparison}
\vspace*{-5.5mm}
\end{table}

\vspace{-3mm}\subsection{Robust Barycenter Estimation under Outlier and Class Imbalance}\vspace{-2mm} \label{sec-exp-property}
In this section, we examine two key properties of our U-NOTB model: (i) robustness to class imbalancedness,  (ii) robustness to outliers. The value of OT functional $\text{OT}_c(\cdot, \cdot)$  is known to be sensitive to outliers and class imbalance problems. Hence, the OT barycenter is also largely affected by these factors. {\color{red}To address this sensitivity, the UOT problem extends the traditional OT by relaxing the constraint on marginal densities.} This relaxation introduces two notable characteristics. First, even in the presence of class imbalances, the reweighting process allows the model to appropriately align with the relevant modes in a reasonable manner \citep{eyring2024unbalancedness}. Second, the model exhibits robustness to outliers, maintaining reliable transport despite their presence \citep{balaji2020robust, choi2024generative}. 
The goal of this section is twofold: (1) to investigate whether these two properties are also observed in our U-NOTB model by comparing it with {\color{red}its OT counterpart} \citep{kolesovestimating} and (2) to demonstrate that this robustness can be controlled through the unbalancedness parameter $\tau$.

\begin{figure*}[t!]
    \vspace*{-7mm}
    \begin{subfigure}[b]{0.255\linewidth}
        \centering
        \includegraphics[width=0.995\linewidth]{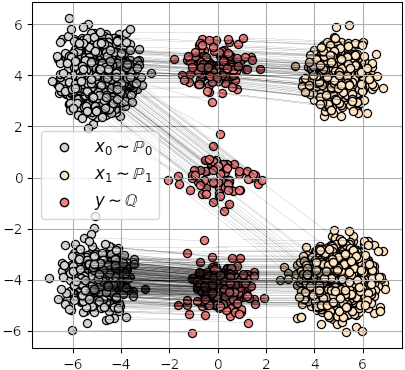}
        \vspace{-4mm}
        \caption{NOTB}
        \label{fig:imbalance-NOTB}
    \end{subfigure}
    \centering
    \begin{subfigure}[b]{0.235\linewidth}
        \centering
        \vspace{5mm}
        \includegraphics[width=0.995\linewidth]{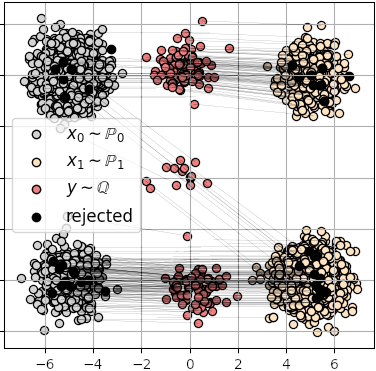}
        \caption{$\tau=200$}
        \label{fig:imbalance-tau=200}
    \end{subfigure}
    \begin{subfigure}[b]{0.235\linewidth}
        \centering
        \includegraphics[width=0.995\linewidth]{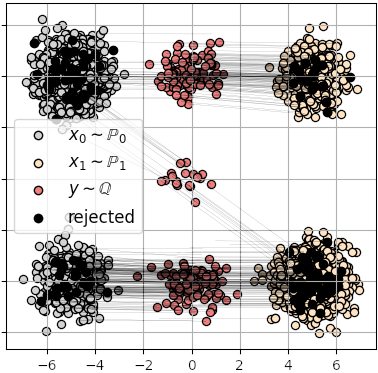}
        \caption{$\tau=20$}
        \label{fig:imbalance-tau=20}
    \end{subfigure}
    \begin{subfigure}[b]{0.235\linewidth}
        \centering
        \includegraphics[width=0.995\linewidth]{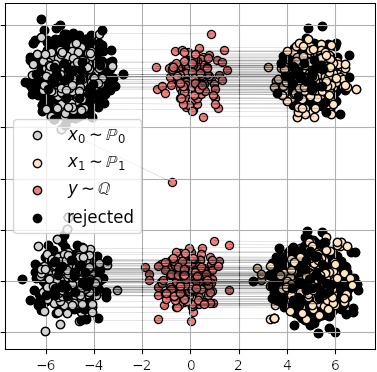}
        \caption{$\tau=1$}
        \label{fig:imbalance-tau=1}
    \end{subfigure}
    \vspace*{-3mm}
    \caption{Conditional plans $\gamma_k(y|x)$ and barycenter $\bbQ$ obtained by NOTB and our method in \textit{Gaussian Mixture} barycenter experiment. We evaluate on various unbalancedness parameter $\tau \in \{ 1,20,200 \}$.}
    \label{fig:imbalance}
    \vspace*{-4mm}
\end{figure*}
\begin{figure*}[t!] 
    \begin{subfigure}[b]{0.32\linewidth}
        \centering
        \includegraphics[width=0.995\linewidth]{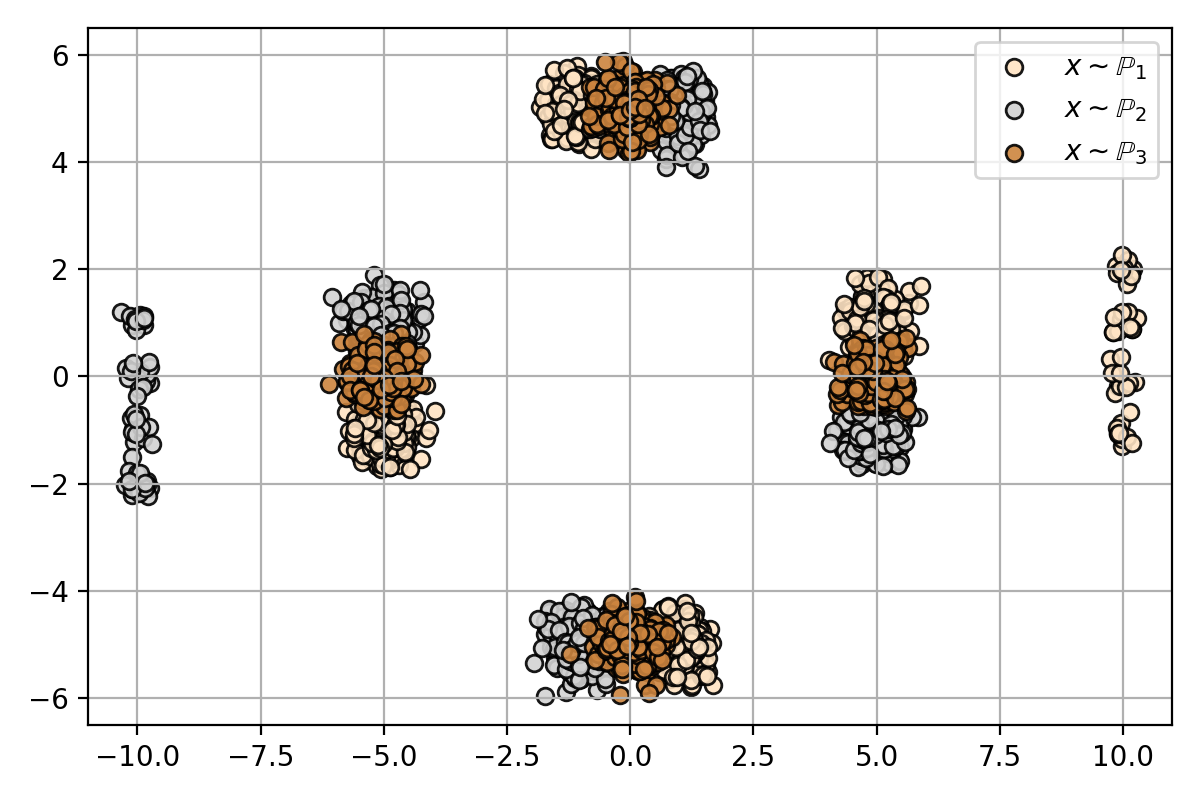}
        \vspace{-5mm}
        \caption{Marginal distributions}
        \label{fig:outlier-marginal}
    \end{subfigure}
    \centering
    \begin{subfigure}[b]{0.32\linewidth}
        \centering
        \includegraphics[width=0.995\linewidth]{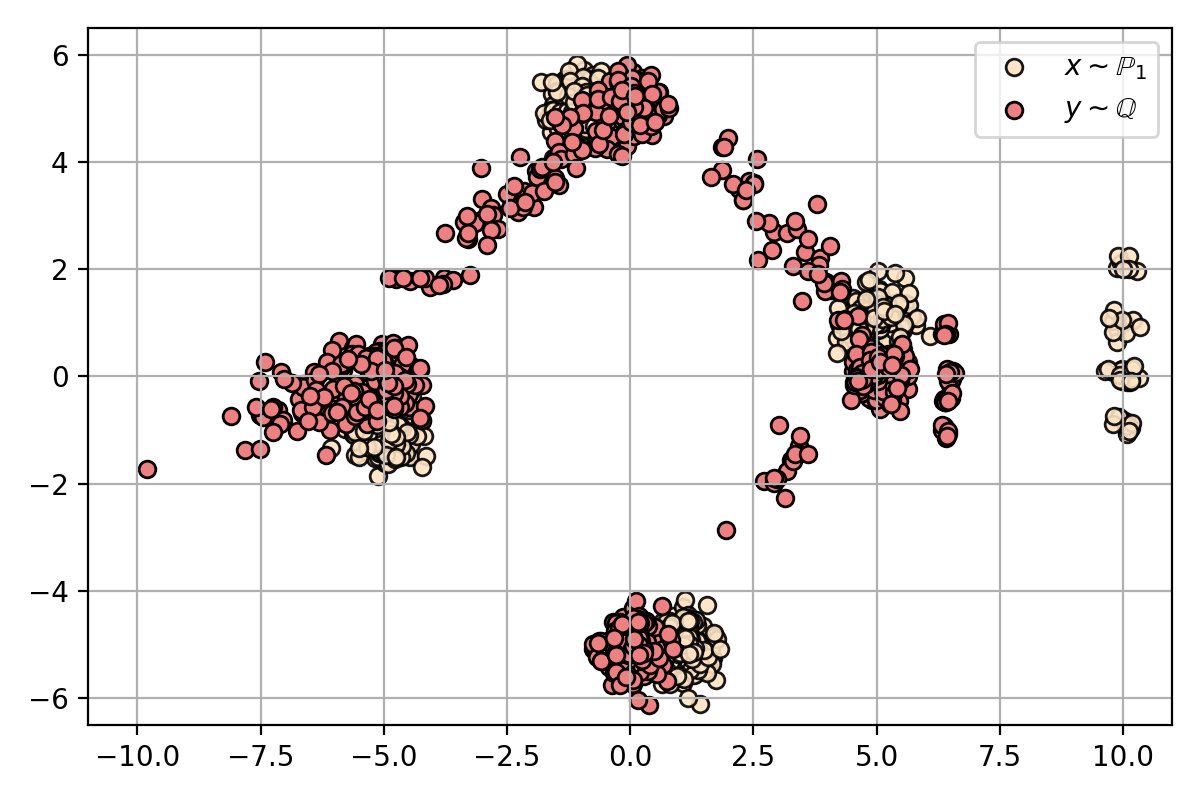}
        \vspace{-5mm}
        \caption{NOTB with outlier}
        \label{fig:outlier-notb-w-outlier}
    \end{subfigure}
    \begin{subfigure}[b]{0.32\linewidth}
        \centering
        \includegraphics[width=0.995\linewidth]{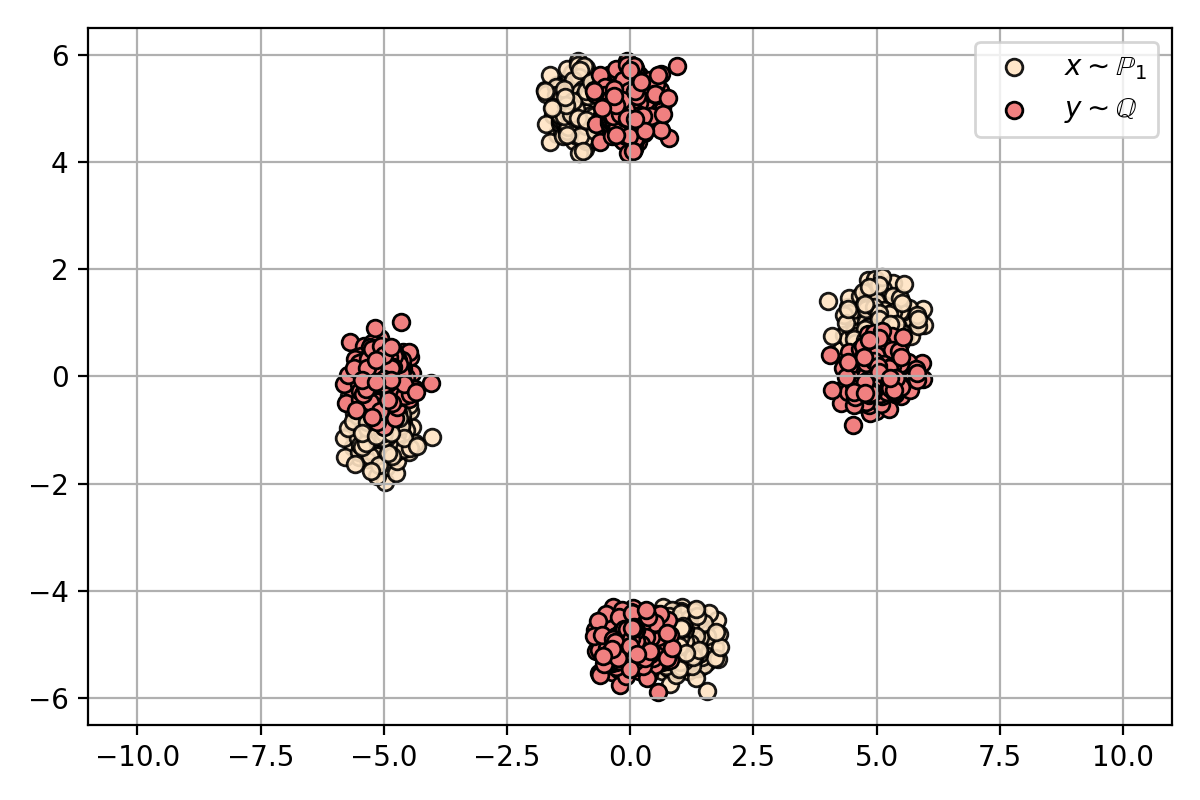}
        \vspace{-5mm}
        \caption{NOTB without outlier}
        \label{fig:outlier-notb-wo-outlier}
    \end{subfigure}
    \\
    \begin{subfigure}[b]{0.32\linewidth}
        \centering
        \includegraphics[width=0.995\linewidth]{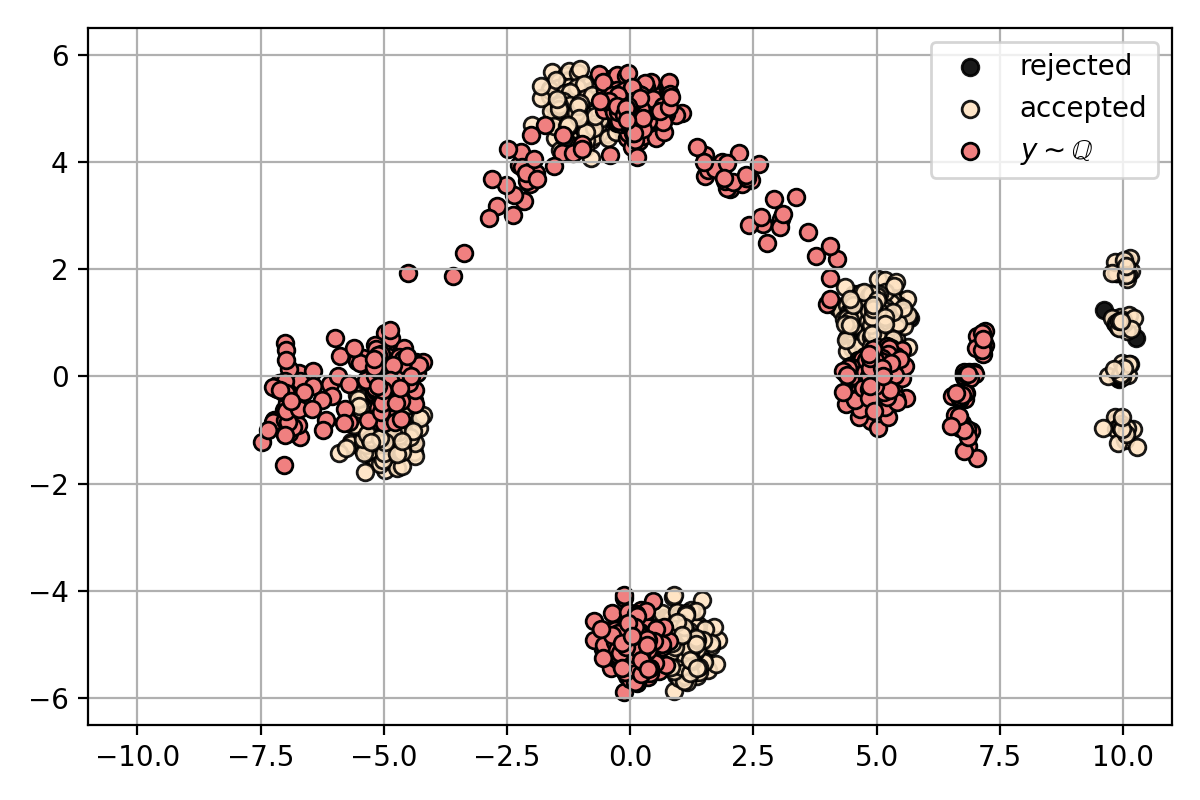}
        \vspace{-5mm}
        \caption{$\tau=200$}
        \label{fig:outlier-tau=200}
    \end{subfigure}
    \begin{subfigure}[b]{0.32\linewidth}
        \centering
        \includegraphics[width=0.995\linewidth]{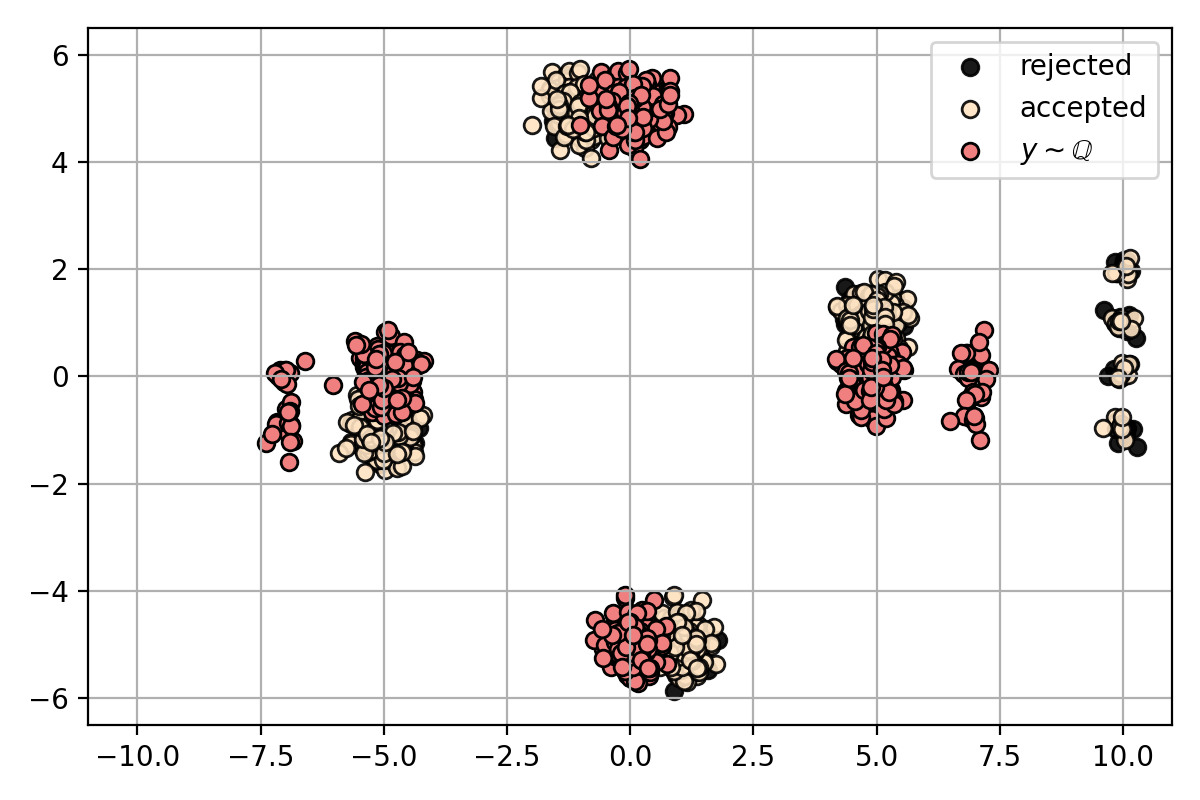}
        \vspace{-5mm}
        \caption{$\tau=20$}
        \label{fig:outlier-tau=20}
    \end{subfigure}
    \begin{subfigure}[b]{0.32\linewidth}
        \centering
        \includegraphics[width=0.995\linewidth]{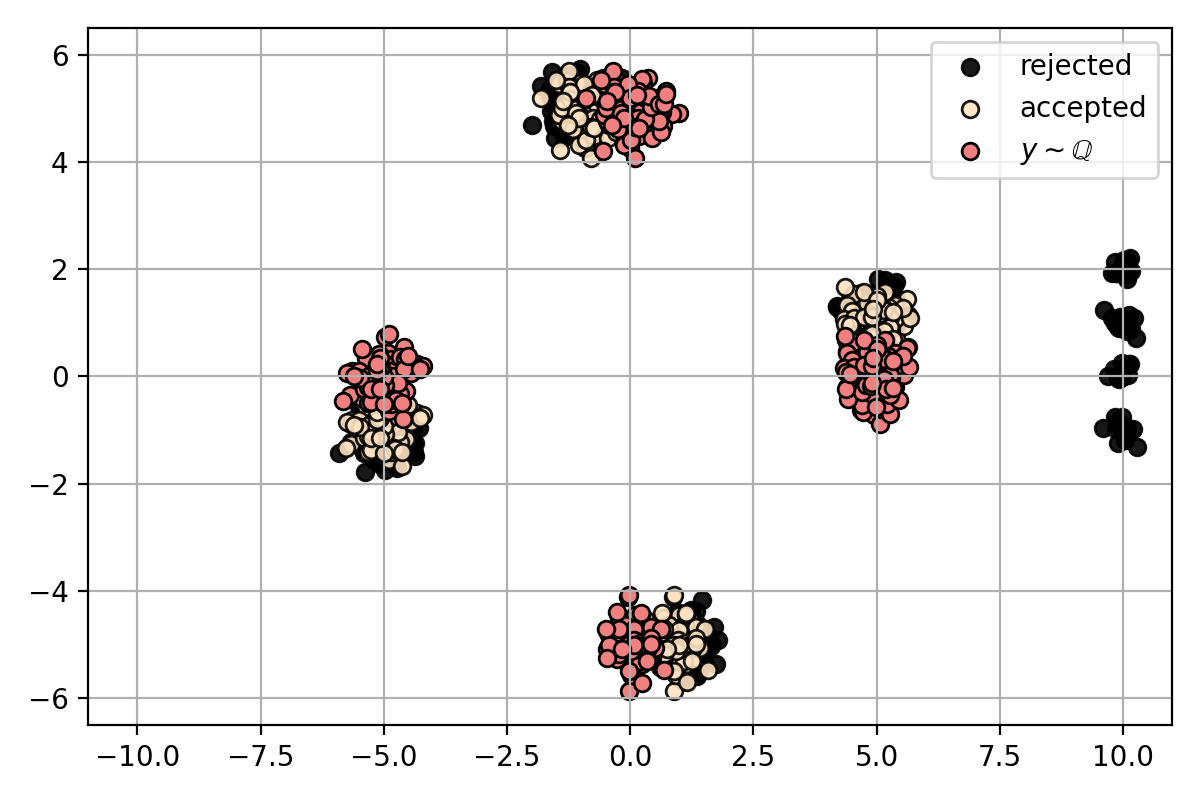}
        \vspace{-5mm}
        \caption{$\tau=1$}
        \label{fig:outlier-tau=1}
    \end{subfigure}
    \vspace*{-3mm}
    \caption{Learned barycenter $\bbQ$ obtained from $x\sim \bbP_1$ by NOTB and our method on \textit{Gaussian Mixture} with \textit{5\% outliers}. The evaluation is conducted for various unbalancedness parameter $\tau \in \{ 1,20,200 \}$. For comparison, we additionally trained NOTB without outliers, as shown in subfigure (c).} %
    \label{fig:outlier}
    \vspace{-6.5mm}
\end{figure*}

\vspace{-1mm}\textbf{Robustness to Class Imbalance.}
The class imbalance issue refers to the case where each class has a different proportion across $\bbP_{\color{red}k}$. This class imbalance problem can lead to undesirable behaviour in the standard OT barycenter. For example, in Fig. \ref{fig:imbalance}, we consider the upper-modes and bottom-modes for each distribution as class 1 and 2, respectively. In $\bbP_1$, 75\% of the data is concentrated in class 1 (upper-left white dots) while only 25\% is concentrated in class 1 for $\bbP_1$ (upper-right yellow dots). Therefore, the balanced OT barycenter \citep[NOTB]{kolesovestimating} generates a mode between two majority modes of $\bbP_{\color{red}k}$, which can be an undesirable phenomenon for the barycenter (Fig. \ref{fig:imbalance-NOTB}).
\vspace{1.2mm}\newline
As shown in Fig. \ref{fig:imbalance}, we evaluated our model for diverse unbalancedness parameter $\tau \in \{1, 20, 100 \}$ (Eq. \ref{eq:uot-primal}). Additionally, for comparison, we conducted experiments under a balanced setting using the OT counterpart. It is important to note that as $\tau$ increases, the UOT problem converges to the OT problem, allowing for less flexibility in marginal distributions \citep{choianalyzing}.
When $\tau = 1$, our unbalanced barycenter demonstrates robust results by generating barycenter modes between the pair of the closest modes of the $\bbP_{1}$ and $\bbP_{2}$. Our model rejects contour samples from majority modes, reweighting them to better align the corresponding modes.
Moreover, our unbalanced barycenter offers controllability of this robustness through $\tau$. As $\tau$ increases, the flexibility on marginal distributions decreases in the UOT problem. Thus, when $\tau=200$, our model accepts nearly all samples, making the UOT barycenter closely resemble the OT barycenter. We believe this adaptability of the UOT barycenter offers practical benefits in real-world scenarios.

\vspace{-1.2mm}\textbf{Robustness to Outliers.}
To evaluate the robustness to outliers, we conducted experiments on a synthetic dataset that included a small proportion of outliers. As depicted in Fig. \ref{fig:outlier}, the dataset comprises three marginal distributions: $\bbP_1$ (beige dots), $\bbP_2$ (gray dots), and $\bbP_3$ (orange dots). Outliers, constituting 5\% of the marginal $\bbP_1$ and $\bbP_2$, were added to the outermost point of each marginal.
We evaluated our model for various unbalancedness parameters $\tau$ and compared it with the balanced OT counterpart. Similar to the robustness to class imbalance experiments, we expect that smaller values of $\tau$ will provide more robustness to outliers by allowing higher flexibility in marginal distributions.
\vspace{0.9mm}\newline
For the comparative analysis, we also tested the NOTB approach for OT barycenter estimation under the data without outliers. This experiment aims to demonstrate the desired behaviour of barycenter when there are no outliers in the dataset. In Fig. \ref{fig:outlier}, we see that for $\tau = 1$, our  U-NOTB closely aligns with the OT barycenter in this outlier-free scenario. This result shows that our model offers robustness to outliers by rejecting outlier samples at smaller $\tau$. As $\tau$ increases, our model accepts a higher proportion of outliers. Hence, the outliers on the left and right begin to influence the barycenter, making the modes of barycenter lie between the majority modes of distributions $\bbP_{\color{red}{k}}$. In summary, reducing $\tau$ enhances outlier robustness, while increasing $\tau$ yields a more precise barycenter by incorporating all data points. We believe that the flexibility to adjust $\tau$ offers promising potential for broader applications, where different levels of tolerance to outliers may be required.

\begin{figure*}[t] 
\vspace*{-4mm}
    \centering
    \captionsetup[subfigure]{justification=centering}
    \begin{subfigure}[b]{.66\textwidth}
        \includegraphics[width=0.93\textwidth]{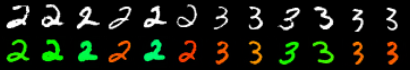}
        \vspace{-1mm}
        \caption{In-distribution Samples from {\color{blue}$\bbP_1$}.\\
         Acceptance Rate: 63\%}
        \label{fig:sg:di-p0}
    \end{subfigure}
    \begin{subfigure}[b]{.33\textwidth}
        \includegraphics[width=0.93\textwidth]{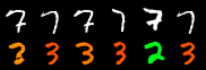}
        \vspace{-1mm}
        \caption{Outlier Samples from {\color{blue}$\bbP_1$}.\\
         Acceptance Rate: 19\%}
        \label{fig:sg:oul-p0}
    \end{subfigure}
    \\
    \begin{subfigure}[b]{.66\textwidth}
        \includegraphics[width=0.93\textwidth]{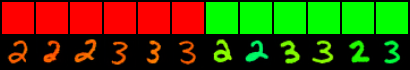}
        \vspace{-1mm}
        \caption{In-distribution Samples from {\color{blue}$\bbP_2$}.\\
         Acceptance Rate: 71.25\%}
        \label{fig:sg:di-p1}
    \end{subfigure}
    \begin{subfigure}[b]{.33\textwidth}
        \includegraphics[width=0.93\textwidth]{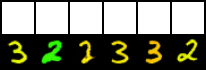}
        \vspace{-1mm}
        \subcaption{Outlier Samples from {\color{blue}$\bbP_2$}.\\
        Acceptance Rate: 5.5\%}
        \label{fig:sg:oul-p1}
    \end{subfigure}
    \vspace{-6.7mm}
    \caption{\centering Examples of $x_1 \sim \bbP_1$ (grayscale digits) and $x_2 \sim \bbP_2$ (color images) and its corresponding barycenter $y\sim \bbQ$ samples (colored-digits) in \textit{shape-color} experiment.
    \vspace*{-5mm}
    } 
    \label{fig:stylegan}
\end{figure*}

\vspace{-3.5mm}\subsection{Shape-color experiment}\vspace{-2.5mm} \label{sec-exp-stylegan}

In this section, we illustrate one interesting example demonstrating how the UOT barycenter problem can be applied using a general cost. 
We designed the problem at general costs when there are some undesirable outliers in the marginal distributions, see Fig. \ref{fig:stylegan-explanation} for visualization of the setup.
\vspace{-2.7mm}
\begin{itemize}[leftmargin=*]
    \item \textbf{Shape distribution $\bbP_1$.} The first marginal distribution consists of grayscaled images of digits `2' ($49\%$ of training dataset), `3' ($50\%$) and `7'($1\%$ - \textit{outliers}) of MNIST data.
    \vspace{-4.3mm}\newline
    \item \textbf{Color distribution $\bbP_2$.} The second marginal distribution consists of three color points: red (probability mass $p_0 = 0.495$), green ($p_1 = 0.495$), white ($p_2 = 0.01$ - \textit{outliers}).
    \vspace{-4.3mm}\newline
    \item \textbf{Manifold.} We pretrained the StyleGAN \citep{karras2019style} generator $G:\mathcal{Z}\rightarrow \bbR^{3\times 32\times 32}$ on the colored MNIST dataset of digits `2' and `3'.
    Let $E^k_{\theta}$ be the network which encodes each marginal $\bbP_{\color{red}k}$ to the latent $\mathcal{Z}$. Throughout the experiment, we transport $x_{\color{red}k}\sim \bbP_{\color{red}k}$ to the barycenter point $y_{\color{red}k}$ by $y_{\color{red}k} \!=\! G\circ E^{{\color{red}k}}_\theta (x_{\color{red}k})$, thus confining the transformed images to the colored MNIST of digits `2' and `3'. 
    \vspace{-4.3mm}\newline
    \item \textbf{General Transport costs.} For the first marginal sample $x_1\sim \bbP_1$ and its corresponding barycenter point $y_1$, we use the following \textit{shape-preserving cost}: $c_1(x_1, y_1) = \frac{1}{2} \lVert x_1 - H_g (y_1)\rVert^2$, where $H_g$ is a decolorization operator. Moreover, for the second marginal sample $x_2 \sim \bbP_2$ and the corresponding barycenter point $y_2$, we use the following \textit{color-preserving cost}: $c_2(x_2, y_2) = \frac{1}{2} \lVert x_2 - H_c (y_2)\rVert^2$, where $H_c$ is a color projection operator defined in \citep{kolesovestimating}.
\end{itemize}\vspace{-2mm}

\vspace{-1mm}Our results for unbalancedness $\tau = 10$ are presented in Figure \ref{fig:stylegan}. We demonstrate the examples of learned mapping ($x_k, T_k(x_k, s)$) from in-distribution points $x_k$, {\color{red}see Fig. \ref{fig:sg:di-p0}, \ref{fig:sg:di-p1},} and from outliers, {\color{red}see Fig. \ref{fig:sg:oul-p0}, \ref{fig:sg:oul-p1}}. Expectedly, the learned in-distribution mappings preserve the shape ($\bbP_1 \rightarrow \bbQ^*$) and color ($\bbP_2 \rightarrow \bbQ^*$). In turn, the outlier mappings have no reasonable interpretation.

\vspace{-1mm}Along with the qualitative performance, we report the \textit{acceptance rates} of input points $x_k$, see the explanation of our inference procedure in \S \ref{sec-algorithm}. Importantly, the acceptance of outliers is much smaller compared to in-distribution samples. In particular, if $x_1$ is a grayscaled digit `7', it will be accepted (and processed through the mapping $T_1$) only with probability $0.19$, while for grayscaled digits `2' and `3' the same figure reads as $0.63$. Thus, the experiment showcases the applicability of our robust barycenter methodology for non-trivial cases with non-Euclidean OT costs.

\vspace{-4mm}\section{Discussion}\vspace{-3mm}
Our work continues the recent and fruitful branch of OT barycenter research. We present the \textit{first} attempt to build \textit{robust} OT barycenters under continuous setup based on unbalanced OT formulation. From now on, the barycenter researchers have a new deep learning tool at their disposal that gives them the ability to deal with imperfect data (with outliers/noise/class imbalance) at a large scale. We believe that our proposed method will further expand the toolbox of deep learning practitioners. 
{\color{red}We discuss the \underline{\textit{limitations}} of our method and \underline{\textit{future research}} directions in Appendix \ref{app-limitations}.}

\textbf{Reproducibility.}
For transparency and reproducibility, we make our source code publicly available at
{\small{\url{https://github.com/milenagazdieva/U-NOTBarycenters}}}.
The code contains a \texttt{README} file with further instructions to reproduce our experiments.

\textbf{Code of Ethics.}
This paper presents work whose goal is to advance the field of Machine Learning. There are many potential societal consequences of our work, none of which we feel must be specifically highlighted here.

\textsc{ACKNOWLEDGEMENTS.}
The work of Skoltech was supported by the Analytical center under the RF Government (subsidy agreement 000000D730321P5Q0002, Grant No. 70-2021-00145 02.11.2021). Jaemoo Choi acknowledges the support by National Research Foundation of Korea (NRF) grant funded by the Korea government (MSIT) [RS-2024-00410661].

\bibliography{references}
\bibliographystyle{iclr2025_conference}

\newpage
\appendix
\section{Proofs}
\label{app-proofs}

 \subsection{Proofs of Theorem \ref{thm-dual-suot-bary} and Corollary \ref{corr-dual-suot-bary}}
 \begin{proof}[Proof of Theorem \ref{thm-dual-suot-bary}]
 Substituting the dual form \eqref{eq:suot-dual} into \eqref{eq:suot-bary-objective}, we can formulate the barycenter problem as finding
\begin{eqnarray}
    \cL^*=
    \min_{\bbQ\in \cP(\cY)} \underbrace{\sup_{f_1,...,f_K\in \mathcal{C(Y)}} \sum_{k=1}^K \lambda_k \big[ \int_{\cX_k} -\overline{\psi}_k (-f_k^c(x))d\mathbb{P}_k(x_k) + \int_{\cY} f_k(y) d \mathbb{Q}(y)}_{\cF(\bbQ, f_{[1:K]})\defeq} \big]
    \label{suot-bary-objective}
\end{eqnarray}
where we replace $\inf$ with $\min$ thanks to the existence of SUOT barycenter, see \wasyparagraph\ref{sec-background-barycenters}.

Note that the compactness of the space $\cY$ yields the weak compactness of $\cP(\cY)$. At the same time, the functional $\bbQ\mapsto\cF(\bbQ, f_{[1:K]})$ is continuous and linear in $\bbQ$. At the same time, it is concave in $f_{[1:K]}$. Indeed, the convex conjugate of the generator function $\psi$ is convex and non-decreasing by definition, the $c$-transform operation is convex, see \citep[Proposition A.1 (iii)]{kolesov2024energyguided}. Thus, the superposition of a concave, non-increasing function $-\overline{\psi}_k(\cdot)$ and concave function $-f^c_k(\cdot)$, i.e., $-\overline{\psi}_k(-f^c_k(\cdot))$ is a concave function as well (for every $k\in \overline{K}$). Thus, we can apply the minimax theorem \citep[Corollary 1]{terkelsen1972some} and swap $\min$ and $\sup$ in \eqref{suot-bary-objective}:
\begin{eqnarray}
    \cL^*=
      \sup_{f_1,...,f_K\in \mathcal{C(Y)}} \min_{\bbQ\in \cP(\cY)}\sum_{k=1}^K \lambda_k \big[ \int_{\cX_k} -\overline{\psi}_k (-f_k^c(x_k))d\mathbb{P}_k(x_k) + \int_{\cY} f_k(y) d \mathbb{Q}(y) \big]=
      \nonumber\\
      \sup_{f_1,...,f_K\in \mathcal{C(Y)}} \Big[ \sum_{k=1}^K \lambda_k \int_{\cX_k} -\overline{\psi}_k (-f_k^c(x_k))d\mathbb{P}_k(x_k) + \min_{\bbQ\in \cP(\cY)}  \int_{\cY} \underbrace{\sum_{k=1}^K \lambda_k f_k(y)}_{\overline{f}(y)\defeq} d \mathbb{Q}(y)\Big]=
      \nonumber\\
      \sup_{f_1,...,f_K\in \mathcal{C(Y)}}\underbrace{ \Big[ \sum_{k=1}^K \lambda_k \int_{\cX_k} -\overline{\psi}_k (-f_k^c(x_k))d\mathbb{P}_k(x_k) + \min_{\bbQ\in \cP(\cY)}  \int_{\cY} \overline{f}(y) d \mathbb{Q}(y)}_{\cG(f_1,...f_K)\defeq}\Big].
      \label{suot-obj-with-min}
\end{eqnarray}
Now we use the following fact: 
$\inf_{\bbQ\in\cP(\cY)} \int_{\cY} \overline{f}(y) d\bbQ(y)=\inf_{y\in\cY}\overline{f}(y).$
Assume that the true value $m\defeq\min_{y\in\cY}\overline{f}(y)$ is known. Then we can restrict $\sup$ in \eqref{suot-obj-with-min} to the potentials $f_{[1:K]}$ satisfying the \textit{$m$-congruence} condition: $\sum_{k=1}^K \lambda_k f_k \equiv m$. Indeed, for each tuple $(f_1,...,f_K)$, let us consider the tuple {\color{red}$(\widetilde{f}_1,...,\widetilde{f}_K)\defeq(f_1,...,f_K+\frac{m-\overline{f}}{\lambda_K})$} satisfying the congruence condition. For this tuple, $\inf_{y\in\cY} \sum_{k=1}^K \lambda_k \widetilde{f}_k = m$.
Besides, we get:
\begin{eqnarray}
    \cG(\widetilde{f}_1,...,\widetilde{f}_K) \!\!-\!\! \cG(f_1,...f_K) = \lambda_K \int_{\cX_K} [-\overline{\psi}_K(-\widetilde{f}_K^c (x_K)) +\overline{\psi}_K(-f_K^c (x_K))] d\bbP_K(x_K) \!+\!\cancel{m-m}=
    \nonumber\\
    \lambda_K \int_{\cX_K} \Big[\overline{\psi}_K(-f_K^c (x_K))-\overline{\psi}_K(-(f_K+\frac{m-\overline{f}}{{\color{red}\lambda_K}})^c (x_K))\Big] d\bbP_K(x_K) \geq 0.
    \label{l-ineq}
\end{eqnarray}
Here the inequality in line \eqref{l-ineq} follows from the properties of the $c$-transform function and convex conjugate of the divergence' generator function $\overline{\psi}$. 
First, $c$-transform is a decreasing function of its argument, see \citep[Proposition A.1 (i)]{kolesov2024energyguided}. Thus, 
$$f_K+\frac{m-\overline{f}}{\lambda_K} \leq f_K + \cancel{\frac{m-m}{\lambda_K}}=f_K\Longrightarrow (f_K+\frac{m-\overline{f}}{\lambda_K})^{c_K} \geq f_K^{c_K}.$$
Second, the convex conjugate $\opsi_K$ is a non-decreasing function which yields
$$
 -f_K^{c_K} \geq -(f_K+\frac{m-\overline{f}}{\lambda_K})^{c_K}  \Longrightarrow \opsi_K(-f_K^{c_K}) \geq \opsi_K(-(f_K+\frac{m-\overline{f}}{\lambda_K})^{c_K}).
$$
From this, the inequality \eqref{l-ineq} becomes obvious. It means that the transition to congruent tuple is fair for the known value $m$. 
It is important to note that in practice this value is not given and should be optimized. Thus, the final optimization objective can be formalized as
\begin{eqnarray}
    \cL^* = \sup_{\substack{m\in \bbR, f_{[1:K]}\in \cC^K(\cY)\\\sum_{k=1}^K \lambda_k f_k\equiv m}} \cG(f_1, ..., f_K)=\!\!\!\sup_{\substack{m\in \bbR, f_{[1:K]}\in \cC^K(\cY)\\\sum_{k=1}^K \lambda_k f_k\equiv m}}  \Big[ \sum_{k=1}^K \lambda_k \int_{\cX_k} -\overline{\psi}_k (-f_k^c(x_k))d\mathbb{P}_k(x_k) +m\Big]
    \label{eq:derive-dual}
\end{eqnarray}
where \eqref{eq:derive-dual} coincides with \eqref{eq:dual-suot-bary}.
\end{proof}

\begin{proof}[Proof of Corollary \ref{corr-dual-suot-bary}]
By substituting the definition of the $c$-transform in \eqref{eq:dual-suot-bary-before}, we get
    \begin{eqnarray}
        \cL^*\!\!=\!\!
        \sup_{\substack{  m\in\bbR, f_{[1:K]} \in \cC^K(\cY)\\ \sum_{k = 1}^K \lambda_k f_k \equiv m}}
        \sum_{k=1}^K \lambda_k \Bigg[ \int_{\cX_k} \!\!-\overline{\psi}_k \Big(-\!\!\!  \!\inf_{\mu(y)\in\cP(\cY)} \int_{\cY} \big(c_k(x_k,y)\!-\!f_k(y)\big)d \mu(y)\Big)d\mathbb{P}_k(x_k)+m \Bigg]\!=\!\label{cor-inf-before}\\
        \sup_{\substack{  m\in\bbR, f_{[1:K]} \in \cC^K(\cY)\\ \sum_{k = 1}^K \lambda_k f_k \equiv m}}
        \sum_{k=1}^K \lambda_k \Bigg[ \int_{\cX_k} \inf_{\mu(y)\in\cP(\cY)} \Big\lbrace \!\!-\!\overline{\psi}_k \Big(-  \!\! \int_{\cY} \big(c_k(x_k,y)\!-\!f_k(y)\big)d \mu(y)\Big)\Big\rbrace d\mathbb{P}_k(x_k)\!+\!m \Bigg].\label{cor-inf-after}
    \end{eqnarray}

In transition from \eqref{cor-inf-before} to \eqref{cor-inf-after}, we interchange the operation of taking $\inf$ over $\mu(y)\in\cP(\cY)$ and $\overline{\psi}_k$. This is possible since $-\overline{\psi}_k$ is monotone {\color{red}non-increasing and infimum in \eqref{cor-inf-before}} is attained. 

Now we note that the map 
\begin{align}
    (x_k, \mu)\mapsto \Big\lbrace-\overline{\psi}_k \Big(- \int_{\cY} \big(c_k(x_k,y)-f_k(y)\big)d \mu(y)\Big)\Big\rbrace \label{map-to-int-inf-intch}
\end{align}
is measurable and bounded from below. 
{\color{red}Indeed, recall that the spaces $\cX_k$, $\cY$ and $\cP(\cY)$ are compact, the potentials $f_k(y)$ and cost functions $c_k(x_k,y)$ are continuous. Therefore, the function ${(x_k, \mu) \mapsto - \int_{\cY} (c_k(x_k, y) - f_k(y)) d\mu(y) }$ is bounded from above, i.e. $\exists B_k \in \bbR$:
\begin{align*}
    \forall (x_k, \mu) \in \cX_k\times \cP(\cY): - \int_{\cY} (c_k(x_k, y) - f_k(y)) d\mu(y) \leq B_k.
\end{align*}
Since $- \opsi_k$ is monotone non-increasing and $\text{dom}(\opsi_k) = (-\infty, \underbrace{\lim_{u\to\infty}\frac{\psi(u)}{u}}_{=+\infty})= \bbR$, then: 
\begin{align*}
    \forall (x_k, \mu) \in \cX_k\times \cP(\cY): - \opsi_k\Big( - \int_{\cY} (c_k(x_k, y) - f_k(y)) d\mu(y)\Big) \geq  - \opsi_k(B_k) > -\infty.
\end{align*}
The latter justifies that \eqref{map-to-int-inf-intch} is indeed bounded from below; its measurability is obvious.
} 
Thanks to the properties of this map, we can use the rule of interchange between integral and $\inf$ \citep[Propositions 7.27 and 7.50]{bertsekas1996stochastic} and get
\begin{eqnarray}
    \eqref{cor-inf-after}=\nonumber\\
    \!\!\!
        \!\!\sup_{\substack{  m\in\bbR, f_{[1:K]} \in \cC^K(\cY)\\ \sum_{k = 1}^K \lambda_k f_k \equiv m}}
        \sum_{k=1}^K \lambda_k \Bigg[\!\!\inf_{\gamma_k(\cdot|x_k):\cX_k\mapsto \cP(\cY)} \int_{\cX_k}  \!\!\!\!\!-\overline{\psi}_k \Big(\!\!- \!\!  \int_{\cY} \!\!\big(c_k(x_k,y)\!-\!\!f_k(y)\big)d \gamma_k(y|x_k)\Big)\!\Big\rbrace d\mathbb{P}_k(x_k)\!+\!m \Bigg].\label{cor-swap}
\end{eqnarray}
where the $\inf$ is taken over families of measurable {\color{red} stochastic} maps $\{\gamma_k(\cdot|x_k)\}_{[1:K]}$. Note that \eqref{cor-swap} is equal to \eqref{eq:dual-suot-bary} after the trivial interchange between the summation and $\inf$ operations. This completes the proof.
\end{proof}

\begin{proof}[Proof of Corollary \ref{corr-congruence}]
Since $\text{SUOT}_{c_1} = \text{OT}_{c_1}$, $\overline{\psi}_1=\text{Id}$ by the definition of convex conjugate. By substituting $\Tilde{f}_1 := f_1 - \frac{m}{\lambda_1}$ and $\overline{\psi}_1=\text{Id}$ into \eqref{eq:dual-suot-bary}, we can obtain the congruence condition as follows:
\begin{align*}
    \cL^*=\!\!\!\sup_{\substack{  m, f_{[1:K]}\\ \sum_{k = 1}^K \lambda_k f_k \equiv m}}\!\!\inf_{\gamma_k(\cdot|x_k)}
    \sum_{k=1}^K \lambda_k \Bigg[ \int_{\cX_k} -\overline{\psi}_k \Big(-  \int_{\cY}\big(c_k(x_k,y)-f_k(y)\big)d \gamma_k(y|x_k)\Big)d\mathbb{P}_k(x_k)+m \Bigg]=& \\
     \!\!\!\sup_{\substack{m, f_{[1:K]}\\ \lambda_1 \Tilde{f}_1+\sum_{k = 2}^K \lambda_k f_k \equiv 0}}\inf_{\gamma_k(\cdot|x_k)\in\cP(\cY)}
    \Bigg[ \lambda_1  \int_{\cX_1} \int_{\cY} \left(c_1(x_1,y)-\left( \Tilde{f}_1(y) + \cancel{\frac{m}{\lambda_1}} \right) \right) d \gamma_1(y|x_1)d\mathbb{P}_1(x_1)+& \\ 
    \sum_{k=2}^K \lambda_k  \int_{\cX_k} -\overline{\psi}_k \Big(-  \int_{\cY}\big(c_k(x_k,y)-f_k(y)\big)d \gamma_k(y|x_k)\Big)d\mathbb{P}_k(x_k)+ \cancel{m} \Bigg]=& \\
     \!\!\!\sup_{\substack{ \lambda_1 \Tilde{f}_1+\sum_{k = 2}^K \lambda_k f_k \equiv 0}}\inf_{\gamma_k(\cdot|x_k)\in\cP(\cY)}
    \Bigg[ \lambda_1  \int_{\cX_1} \overline{\psi}_1 \Big(-  \int_{\cY}\big(c_1(x_1,y)-\Tilde{f}_1(y)\big)d \gamma_1(y|x_1)\Big)d\mathbb{P}_1(x_1)+& \\ 
    \sum_{k=2}^K \lambda_k  \int_{\cX_k} -\overline{\psi}_k \Big(-  \int_{\cY}\big(c_k(x_k,y)-f_k(y)\big)d \gamma_k(y|x_k)\Big)d\mathbb{P}_k(x_k) \Bigg].&
\end{align*}
\end{proof}

\subsection{Proof of Theorem \ref{thm-bary-uotot-connection}}

Below we recall an auxiliary thereotical result which will be used in our proof of Theorem \ref{thm-bary-uotot-connection}.

\begin{theorem}[Connection between solutions of dual OT and UOT problems \citep{choi2024generative}]
    Assume that $\overline{\psi}$ is a continuously differentiable function. Then if the optimal potential $f^*$ delivering maximum to the dual UOT problem exists, $f^*$ is a solution of the following objective 
    \begin{eqnarray*}
        \text{OT}_c(\widetilde{\bbP}, \widetilde{\bbQ})=\sup_f \int_{\cX} f^c(x) d \widetilde{\bbP}(x) + \int_{\cY}  f(y) d \widetilde{\bbQ} (y)
    \end{eqnarray*}
    where ${\color{red}d\widetilde{\bbP}(x) = \nabla \overline{\psi}_1(-(f^*)^c (x))d\bbP(x)}$ and ${\color{red}d\widetilde{\bbQ}(y) = \nabla \overline{\psi_2}(-f^*(y))d\bbQ(y)}$.
    \label{thm-uot-ot-connection}
\end{theorem}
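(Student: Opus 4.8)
The plan is to read off, from the first-order optimality of $f^{*}$ for the UOT dual, an explicit coupling $\gamma^{\sharp}\in\Pi(\widetilde{\bbP},\widetilde{\bbQ})$ whose transport cost equals $\int_{\cX}(f^{*})^{c}\,d\widetilde{\bbP}+\int_{\cY}f^{*}\,d\widetilde{\bbQ}$, and then to close the argument by weak Kantorovich duality. Once the $\cX$-potential in the UOT dual is eliminated by its $c$-transform best response, the UOT dual problem reads $\sup_{f\in\cC(\cY)}J(f)$ with $J(f)\defeq-\int_{\cX}\opsi_{1}(-f^{c}(x))\,d\bbP(x)-\int_{\cY}\opsi_{2}(-f(y))\,d\bbQ(y)$, the two-sided analogue of the SUOT dual \eqref{eq:suot-dual}. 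Since $f^{*}$ globally maximizes $J$ over the vector space $\cC(\cY)$ and $J$ is Gateaux differentiable at $f^{*}$ (using the $C^{1}$ hypothesis on the conjugates $\opsi_{1},\opsi_{2}$ together with the mild cost regularity discussed below), its directional derivative at $f^{*}$ vanishes in every direction $h\in\cC(\cY)$.

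First I would compute this directional derivative. The $\cY$-part is smooth: $\frac{d}{dt}\big|_{t=0}\big[-\int_{\cY}\opsi_{2}(-(f^{*}+th)(y))\,d\bbQ(y)\big]=\int_{\cY}h(y)\,\nabla\opsi_{2}(-f^{*}(y))\,d\bbQ(y)=\int_{\cY}h\,d\widetilde{\bbQ}$. For the $\cX$-part, write $(f^{*}+th)^{c}(x)=\inf_{\mu\in\cP(\cY)}\int_{\cY}\big(c(x,y)-f^{*}(y)-th(y)\big)\,d\mu(y)$ and apply the envelope theorem: choosing for each $x$ an optimal $\mu^{*}_{x}$ at $t=0$ (one concentrated on $\arginf_{y}\{c(x,y)-f^{*}(y)\}$; the existence of a measurable selection $x\mapsto\mu^{*}_{x}$ is standard), one obtains $\frac{d}{dt}\big|_{t=0}(f^{*}+th)^{c}(x)=-\int_{\cY}h\,d\mu^{*}_{x}$, hence $\frac{d}{dt}\big|_{t=0}\big[-\int_{\cX}\opsi_{1}(-(f^{*}+th)^{c}(x))\,d\bbP(x)\big]=-\int_{\cX}\nabla\opsi_{1}(-(f^{*})^{c}(x))\big(\int_{\cY}h\,d\mu^{*}_{x}\big)\,d\bbP(x)$. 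Now introduce the non-negative measure $\gamma^{\sharp}(dx,dy)\defeq\nabla\opsi_{1}(-(f^{*})^{c}(x))\,\mu^{*}_{x}(dy)\,d\bbP(x)$ (meaningful since $\opsi_{1}$ is non-decreasing, so $\nabla\opsi_{1}\ge0$). By construction $\gamma^{\sharp}_{x}=\widetilde{\bbP}$, and the vanishing of the total directional derivative, $\int_{\cY}h\,d\widetilde{\bbQ}-\int_{\cY}h\,d\gamma^{\sharp}_{y}=0$ for all $h\in\cC(\cY)$, forces $\gamma^{\sharp}_{y}=\widetilde{\bbQ}$ because continuous functions separate measures. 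In particular $\widetilde{\bbP}$ and $\widetilde{\bbQ}$ have equal total mass and $\gamma^{\sharp}\in\Pi(\widetilde{\bbP},\widetilde{\bbQ})$.

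Next I would invoke complementary slackness: $\gamma^{\sharp}$ is concentrated on $\{(x,y):y\in\arginf_{y'}\{c(x,y')-f^{*}(y')\}\}=\{(x,y):(f^{*})^{c}(x)+f^{*}(y)=c(x,y)\}$, so $\int_{\cX\times\cY}c\,d\gamma^{\sharp}=\int_{\cX}(f^{*})^{c}\,d\widetilde{\bbP}+\int_{\cY}f^{*}\,d\widetilde{\bbQ}$. On the other hand, for every $\pi\in\Pi(\widetilde{\bbP},\widetilde{\bbQ})$ and every $f\in\cC(\cY)$ one has $f^{c}(x)+f(y)\le c(x,y)$, so weak duality yields the chain $\sup_{f\in\cC(\cY)}\big[\int_{\cX}f^{c}d\widetilde{\bbP}+\int_{\cY}f\,d\widetilde{\bbQ}\big]\le\text{OT}_{c}(\widetilde{\bbP},\widetilde{\bbQ})\le\int_{\cX\times\cY}c\,d\gamma^{\sharp}=\int_{\cX}(f^{*})^{c}d\widetilde{\bbP}+\int_{\cY}f^{*}d\widetilde{\bbQ}\le\sup_{f\in\cC(\cY)}\big[\int_{\cX}f^{c}d\widetilde{\bbP}+\int_{\cY}f\,d\widetilde{\bbQ}\big]$. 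All terms therefore coincide, so $f^{*}$ attains the supremum of the OT dual between $\widetilde{\bbP}$ and $\widetilde{\bbQ}$ (and, as a byproduct, $\gamma^{\sharp}$ is an optimal OT plan), which is exactly the assertion.

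The step I expect to be the main obstacle is the Gateaux differentiability of $f\mapsto\int_{\cX}\opsi_{1}(-f^{c}(x))\,d\bbP(x)$ at $f^{*}$: the $c$-transform is only one-sidedly differentiable in general, so the envelope-theorem step is rigorous only under a regularity hypothesis ensuring that $\arginf_{y}\{c(x,y)-f^{*}(y)\}$ is a singleton for $\bbP$-almost every $x$ (in the spirit of the twist / strong-convexity conditions invoked elsewhere in this paper), together with measurability of $x\mapsto\mu^{*}_{x}$; absent such a hypothesis one argues with the two one-sided derivatives, both sign-constrained because $f^{*}$ globally maximizes $J$, which still pins down $\gamma^{\sharp}_{y}=\widetilde{\bbQ}$. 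A clean way to bypass the perturbation argument entirely is to use instead the existence of an optimal primal UOT plan $\gamma^{*}$ (by compactness of $\cX\times\cY$ and lower semicontinuity of the UOT functional, exactly as for SUOT in \wasyparagraph\ref{sec-background-optimal-transport}) together with the known first-order UOT optimality conditions from \citep{liero2018optimal,chizat2018scaling}, namely $d\gamma^{*}_{x}/d\bbP=\nabla\opsi_{1}(-(f^{*})^{c})$, $d\gamma^{*}_{y}/d\bbQ=\nabla\opsi_{2}(-f^{*})$ and $c$-concentration of $\gamma^{*}$, and then running the same complementary-slackness plus weak-duality chaining with $\gamma^{*}$ in place of $\gamma^{\sharp}$.
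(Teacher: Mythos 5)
The paper does not actually prove this statement: Theorem~\ref{thm-uot-ot-connection} is recalled from \citep{choi2024generative} as an imported auxiliary result used in the proof of Theorem~\ref{thm-bary-uotot-connection}, so there is no in-paper proof to compare against. Judged on its own, your argument is sound in outline, and the closing chain --- complementary slackness for the constructed coupling $\gamma^{\sharp}$ followed by weak Kantorovich duality, which simultaneously yields strong duality for $(\widetilde{\bbP},\widetilde{\bbQ})$ and the optimality of $f^{*}$ --- is exactly the right way to finish; the mass-balance observation $\widetilde{\bbP}(\cX)=\widetilde{\bbQ}(\cY)$ and the non-negativity of $\nabla\opsi_1$ are also handled correctly. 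The one step that needs more care than you give it is the first variation when $\arginf_{y}\{c(x,y)-f^{*}(y)\}$ is not a singleton. There $t\mapsto (f^{*}+th)^{c}(x)$ has distinct one-sided derivatives, and the two sign constraints on $D^{+}J(f^{*};\pm h)$ only sandwich $\int h\,d\widetilde{\bbQ}$ between $\int_{\cX}\nabla\opsi_1\big(-(f^{*})^{c}(x)\big)\inf_{\mu\in M(x)}\big(\int h\,d\mu\big)d\bbP(x)$ and the corresponding supremum, where $M(x)\subset\cP(\cY)$ is the set of optimizers in the weak $c$-transform at $x$. This shows that $\widetilde{\bbQ}$ lies in the convex, weakly compact set of second marginals achievable by measurable selections from $M(\cdot)$, but extracting an actual coupling $\gamma^{\sharp}\in\Pi(\widetilde{\bbP},\widetilde{\bbQ})$ concentrated on the contact set then requires an additional separation and measurable-selection argument that your sketch elides with ``still pins down $\gamma^{\sharp}_{y}=\widetilde{\bbQ}$.'' Your own proposed bypass --- take a primal optimal UOT plan $\gamma^{*}$ (existence by compactness and lower semi-continuity), use the first-order optimality conditions $d\gamma^{*}_{x}/d\bbP=\nabla\opsi_1(-(f^{*})^{c})$, $d\gamma^{*}_{y}/d\bbQ=\nabla\opsi_2(-f^{*})$ and the $c$-concentration of $\gamma^{*}$, and then run the identical slackness-plus-weak-duality chain with $\gamma^{*}$ in place of $\gamma^{\sharp}$ --- avoids this issue entirely and is essentially how the cited source argues; I would make that the main proof and demote the perturbation computation to a remark.
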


This Theorem highlights the intriguing connection between the solutions of the classic OT problem and its unbalanced counterpart. Note that in the case of SUOT problem, $\overline{\psi_2}(y)=y$, thus, the measure $\widetilde{\bbQ}$ reduces to $\bbQ$. Now we are ready to prove our Theorem \ref{thm-bary-uotot-connection}.

\begin{proof}[Proof of Theorem \ref{thm-bary-uotot-connection}] 
Below we show that optimal potentials $f^*_{[1:K]}$ delivering maximum to the dual SUOT barycenter problem \eqref{eq:dual-suot-bary-before} are also optimal for each of the underlying $\text{SUOT}(\bbP_k, \bbQ)$ problems. Recall that the SUOT barycenter problem \eqref{eq:primal-suot-bary} admits a minimizer which we denote {\color{red} by} $\bbQ^*$. Then:
\begin{eqnarray}
    \sum_{k=1}^K \lambda_k \text{SUOT}_{c_k,\psi_k}(\bbP_k,\bbQ^*)\stackrel{\eqref{eq:dual-suot-bary-before}}{=} \!\!\!\!\!\sup_{\substack{  m\in\bbR, f_{[1:K]} \in \cC^K(\cY)\\ \sum_{k = 1}^K \lambda_k f_k \equiv m}}
        \sum_{k=1}^K \lambda_k \big[ \int_{\cX_k} \!\!-\!\!\overline{\psi}_k (-f_k^{c_k}(x_k))d\mathbb{P}_k(x_k)\!+\!m \big]\!=\label{thm2-before-cong}\\
        \sup_{\substack{  m\in\bbR, f_{[1:K]} \in \cC^K(\cY)\\ \sum_{k = 1}^K \lambda_k f_k \equiv m}}
        \sum_{k=1}^K \lambda_k  \int_{\cX_k} -\overline{\psi}_k (-f_k^{c_k}(x_k))d\mathbb{P}_k(x_k)+\int_{\cY}\underbrace{\sum_{k=1}^K \lambda_k f_k(y)}_{=m} d\bbQ^*(y) 
        \leq\label{thm2-after-cong}\\
        \hspace{-2.7mm}\sup_{  f_{[1:K]} \in \cC^K(\cY)}\!
        \sum_{k=1}^K \lambda_k  \int_{\cX_k} \!\!\!\!\!-\overline{\psi}_k (-f_k^{c_k}(x_k))d\mathbb{P}_k(x_k)\!+\!\!\!\int_{\cY}\!\sum_{k=1}^K \lambda_k f_k(y) d\bbQ^*(y)\!=\!\!\sum_{k=1}^K \lambda_k\text{SUOT}_{{\color{red}c_k,\psi_k}}(\bbP_k,\bbQ^*).\label{thm2-final}
\end{eqnarray}
Here the transition from line \eqref{thm2-before-cong} to \eqref{thm2-after-cong} follows from the fact that the optimization is performed over $m$-congruence potentials. If we then remove this restriction on potentials, the value of $\sup$ can become bigger which explains the transition between \eqref{thm2-after-cong}-\eqref{thm2-final}. Importantly, we get that the final objective in line \eqref{thm2-final} is equal to the initial objective in line \eqref{thm2-before-cong}. Thus, the inequality in line \eqref{thm2-after-cong} turns into equality. At the same time, it justifies that the potentials $f^*_{[1:K]}$ delivering maximum to the SUOT barycenter \eqref{eq:dual-suot-bary-before} problem coincide with the optimal potentials for each of the SUOT problems.

Then, thanks to Theorem \ref{thm-uot-ot-connection}, the SUOT barycenter problem \eqref{thm2-final} admits a reformulation:
    \begin{eqnarray}
        \cL^*= \sum_{k=1}^K \lambda_k \text{SUOT}(\bbP_k,\bbQ^*)= \sum_{k=1}^K \lambda_k \text{OT}(\widetilde{\bbP}_k, \bbQ^*)
    \end{eqnarray}
    where each of the distributions $\widetilde{\bbP}_k$ ($k\in\overline{K}$) is specified via the optimal potential $f^*_k$ delivering maximum to the problem $\text{OT}(\widetilde{\bbP}_k,\bbQ^*)$ in its dual form \eqref{eq:ot-dual}: {\color{red}$d\widetilde{\bbP}_k(x_k)=\nabla \overline{\psi}(-{\color{blue}(f^*_k)}^{c_k}(x_k))d\bbP_k(x_k)$}. Thus, the optimal SUOT plans now correspond to the optimal OT plans $\gamma^*_k(x,y)$ belonging to $\Pi(\widetilde{\bbP}_k,\bbQ^*)$, i.e., having marginals $(\gamma^*_k)_x=\widetilde{\bbP}_k$, $(\gamma^*_k)_y=\bbQ^*$. 

    Now we fix $k\in\overline{K}$. We aim to show that ${\gamma_k^*(\cdot|x_k)\in\arg\inf_{\gamma_k(\cdot|x_k))} \int_{\cY} (c_k(x_k,y)-f^*_k(y))d\gamma(\cdot|x_k)}$ where $\inf$ is taken over measurable maps $\gamma_k(\cdot|x_k)$. Then by repeating the derivations of Corollary \ref{corr-dual-suot-bary} we easily get that ${\gamma^*_k(\cdot|x_k)\in \arg\inf_{\gamma_k(\cdot|x_k))} \cL(f_k^*, \gamma_k(\cdot|x_k)))}$. To prove the former, we again use the fact that $\gamma_k^*$ is a solution of the balanced OT problem between the rescaled marginal $\widetilde{\bbP}_k$ and $\bbQ^*$. Note that $\nabla \overline{\psi}{\color{red}_k}$ is a continuous function since $\overline{\psi}{\color{red}_k}$ is assumed to be a continuously differentiable function. This yields the fact that $\widetilde{\bbP}_k$ is an absolutely continuous distribution {\color{red}w.r.t. $\mathbb{P}_k$}. Thus, we can use \citep[Remark 5.13]{villani2009optimal} which states that {\color{red}for each $y\!\in\!\text{Supp}(\gamma_k^*(\cdot|x_k))$ it holds that $y\!\in\! \arg\inf_{y'\in\cY} {\big(c{\color{red}_k}(x{\color{red}_k},y')\!-\!f{\color{red}_k}(y')\big)}$}. The latter statement is equivalent to $\gamma_k^*(\cdot|x_k)\!\in\!{\color{blue}\arg\inf_{\gamma_k(\cdot|x_k)}}\int_{\cY} (c_k(x_k,y)\!-\!f^*_k(y))d\gamma(\cdot|x_k)$ which completes the proof. {\color{blue}Note that since the infimum is attained at least for one map $\gamma_k^*(\cdot|x_k)$, the $\arg\inf_{\gamma_k(\cdot|x_k)}$ in the latter equation is actually $\arg\min_{\gamma_k(\cdot|x_k)}$.}

\end{proof}

\vspace{-5mm}
\subsection{Proof of Theorem \ref{thm-quality-bounds}}
{\color{blue}
The theorem provides duality gap analysis. We are given approximations: $\widehat{m}$; $\widehat{f}_{[1:K]}$ which are $\widehat{m}$-congruent; $\widehat{\gamma}_{[1:K]}$. Recall that the gaps are defined as follows:

\begin{eqnarray}
    \delta_1(\widehat{f}_{[1:K]}, \widehat{\gamma}_{[1:K]}) &=& \widetilde{\cL}(\widehat{f}_{[1:K]}, \widehat{\gamma}_{[1:K]}, \widehat{m}) - \inf_{\gamma_k(\cdot \vert x_k) \in \cP(\cY)} \widetilde{\cL}(\widehat{f}_{[1:K]}, \gamma_{[1:K]}, \widehat{m}); \\
    \delta_2(\widehat{f}_{[1:K]}) &=& \cL^* -  \inf_{\gamma_k(\cdot \vert x_k) \in \cP(\cY)} \widetilde{\cL}(\widehat{f}_{[1:K]}, \gamma_{[1:K]}, \widehat{m}).
\end{eqnarray}

\begin{proof}[Proof of Theorem \ref{thm-quality-bounds}]

Our proof considerably extends the proof of \citep[Theorem 4.2]{kolesovestimating} for the case of the classical cost function. 

\underline{\textbf{Part 1}. Several notations.}

To begin with, we introduce several notations ($\bbQ^*$ is a SUOT barycenter):

\begin{eqnarray}
    \cV_k(f_k, \gamma_k) &\defeq& \int_{\cX_k}  \!\!\!-\overline{\psi}_k \Big(- \!\!  \int_{\cY} \big(c_k(x_k,y)\!-\!\!f_k(y)\big)d \gamma_k(y|x_k)\Big) d\mathbb{P}_k(x_k); \label{eq:V_func_k} \\
    \tcV_k(f_k, \gamma_k) &\defeq& \cV_k(f_k, \gamma_k) + \int_{\cY} f_k(y) d \bbQ^*(y); \label{eq:tV_func_k} \\
    \cJ_k(f_k) &\defeq& \inf_{\gamma_k(\cdot \vert x_k)\in\cP(\cY)} \cV_k(f_k, \gamma_k); \label{eq:J_func_k} \\
    \tcJ_k(f_k) &\defeq& \cJ_k(f_k) + \int_{\cY} f_k(y) d \bbQ^*(y); \label{eq:tJ_func_k}
\end{eqnarray}

Note that the following holds:
\begin{eqnarray*}
    \cL^* = \!\!\!\!\sup_{\substack{  m\in\bbR, \\f_{[1:K]} \in \cC^K(\cY)\\ \sum_{k = 1}^K \lambda_k f_k \equiv m}} \!\!\!\!\!\bigg\lbrace \sum_{k = 1}^{K}\lambda_k\cJ_k(f_{k}) + m\bigg\rbrace = \!\!\!\!\!\!\sup_{\substack{  m\in\bbR, \\f_{[1:K]} \in \cC^K(\cY)\\ \sum_{k = 1}^K \lambda_k f_k \equiv m}}\!\!\!\inf_{\gamma(\cdot|x_k)\in\cP(\cY)} \!\bigg\lbrace \sum_{k = 1}^{K} \lambda_k \cV_k(f_k, \gamma_k) + m \bigg\rbrace.
\end{eqnarray*}
Also, if $f_{[1:K]}$ are $m$-congruent, then $\sum_{k = 1}^{K} \lambda_k \int_{\cY} f_k(y) d \bbQ^*(y) = m$ and therefore:
\begin{eqnarray}
    \sum_{k = 1}^{K} \lambda_k \tcV(f_f, \gamma_k) = \sum_{k = 1}^{K} \lambda_k \cV_k(f_k, \gamma_k) + m = \widetilde{\cL}(f_{[1:K]}, \gamma_{[1:K]}, m); \nonumber \\
    \sumklam \tcJ_k(f_k) = \sumklam\cJ_k(f_k) + m = \!\!\! \inf_{\gamma_k(\cdot \vert x_k) \in \cP(\cY)} \!\!\! \widetilde{\cL}(f_{[1:K]}, \gamma_{[1:K]}, m). \label{eq-mintL-deco}
\end{eqnarray}

Now we start the proof. 

\underline{\textbf{Part 2}. Analysis of gap $\delta_1$.}

At first, we express gap  $\delta_1$ in terms of newly introduced $\cV_k$ \eqref{eq:V_func_k}, $\cJ_k$ \eqref{eq:J_func_k} as follows:
\begin{eqnarray}
    \delta_1 &=& \widetilde{\cL}(\widehat{f}_{[1:K]}, \widehat{\gamma}_{[1:K]}, \widehat{m}) - \!\!\!\!\!\inf_{\gamma_k(\cdot \vert x_k) \in \cP(\cY)} \!\!\!\!\!\widetilde{\cL}(\widehat{f}_{[1:K]}, \gamma_{[1:K]}, \widehat{m}) \nonumber \\  &=& \sum_{k = 1}^{K} \lambda_k \big\lbrace \cV_k(\widehat{f}_k, \widehat{\gamma}_k) - \cJ_k(\widehat{f}_k) \big\rbrace \nonumber\\ &=&  \sum_{k = 1}^{K}\lambda_k \delta_{1,k}(\widehat{f}_k, \widehat{\gamma}_k), \nonumber \\
    &\text{where}&\;\delta_{1,k}(\widehat{f}_k, \widehat{\gamma}_k) \defeq \cV_k(\widehat{f}_k, \widehat{\gamma}_k) - \cJ_k(\widehat{f}_k); \label{eq:deltagap1_k}
\end{eqnarray}

We analyze gaps $\delta_{1,k}$ for each particular $k \in [1,K]$ separately. For convenience, we introduce the function $\hg_k(x_k, y) \defeq c_k(x_k, y) - \widehat{f}_k(y)$, $x_k \in \cX_k$, $y \in \cY$. 
Note that $\hg_k$ is $\beta$-strongly convex in the second argument by the Theorem assumption. 
Define the maps $T_k^{\hf} : \cX_k \rightarrow \cY$ as follows: 
\begin{align*}
    T_k^{\hf}(x_k) \defeq \arginf_{y \in \cY} \hg_k(x_k, y).
\end{align*}

Note that $T_k^{\hf}$ is correctly defined (since $\hg_k$ is strongly convex) and Borel (cf. the proof of \citep[Theorem 4.2]{kolesovestimating}. It induces the family of conditional distributions:
\begin{eqnarray}
    x_k \mapsto \gamma_k^{\hf}(\cdot \vert x_k) \in \cP(\cY); \nonumber \\
    \gamma_k^{\hf}(\cdot \vert x_k) \defeq \delta_{T_k^{\hf}(x_k)} (\cdot).
\end{eqnarray}

For any family $\gamma_k( \cdot \vert x_k) \in \cP(\cY)$ we have:

\begin{eqnarray}
    \cV_k(\widehat{f}_k, \gamma_k) \!&=&\!\! \int\limits_{\cX_k}\!- \opsi{\color{red}_k} \Big(\! -\!
    {\int\limits_{\cY} \hg_k(x_k, y) d \gamma_k(y \vert x_k)} \Big)
    d \bbP_k(x_k) \label{eq:proof:gaps:1}\\
    &\geq&\!\!\int\limits_{\cX_k} \!- \opsi{\color{red}_k} \Big(\! - \hg_k(x_k, T_k^{\hf}(x_k))\Big) d \bbP_k(x_k) \label{eq:proof:gaps:2}
    \\
    &=&\!\!\int\limits_{\cX_k}\!- \opsi{\color{red}_k} \Big(\! -\!
    {\int\limits_{\cY} \hg_k(x_k, y) d \gamma_k^{\hf}(y \vert x_k)} \Big)
    d \bbP_k(x_k) = \cV_k(\widehat{f}_k, \gamma_k^{\hf}), \nonumber
\end{eqnarray}
where in transition from \eqref{eq:proof:gaps:1} to \eqref{eq:proof:gaps:2} we utilize the properties of $\opsi{\color{red}_k}$ and $T_k^{\hf}$:
\begin{eqnarray*}
    \int\limits_{\cY} \hg_k(x_k, y) d \gamma_k(y \vert x_k) &\geq& \hg_k(x_k, T_k^{\hf}(x_k)) \Rightarrow \\
    - \int\limits_{\cY} \hg_k(x_k, y) d \gamma_k(y \vert x_k) &\leq& - \hg_k(x_k, T_k^{\hf}(x_k)) \overset{\opsi{\color{red}_k} \text{ is non-decreasing}}{\Rightarrow} \\
    \opsi{\color{red}_k}\Big(- \int\limits_{\cY} \hg_k(x_k, y) d \gamma_k(y \vert x_k)\Big) &\leq& \opsi{\color{red}_k}\Big(- \hg_k(x_k, T_k^{\hf}(x_k))\Big) \Rightarrow \\
    - \opsi{\color{red}_k}\Big(- \int\limits_{\cY} \hg_k(x_k, y) d \gamma_k(y \vert x_k)\Big) &\geq& -\opsi{\color{red}_k}\Big(- \hg_k(x_k, T_k^{\hf}(x_k))\Big).
\end{eqnarray*}

From the above, we conclude that $\gamma_k^{\hf}$ minimizes the functional $\gamma_k \mapsto \cV_k(\widehat{f}_k, \gamma_k)$. In particular, it delivers the value of $\cJ_k(\hf_k)$:
\begin{eqnarray}
    \cJ_k(\hf_k) = \cV_k(\hf_k, \gamma_k^{\hf}). \label{J_k_through_V_k}
\end{eqnarray}

From the perspective of \eqref{J_k_through_V_k}, gap $\delta_{1, k}$ could be analyzed as follows:
\begin{eqnarray}
    \delta_{1,k}(\hf_k, \hgamma_k)\!\!\!\! &=& \cV_k(\hf_k, \hgamma_k) - \cV_k(\hf_k, \gamma_k^{\hf}) \nonumber \\
    &=& \!\!\!\! \int\limits_{\cX_k}\bigg\lbrace \opsi{\color{red}_k} \Big(\! - \hg_k(x_k, T_k^{\hf}(x_k))\!\Big)\! - \opsi{\color{red}_k} \Big(\! -\!
    \int\limits_{\cY} \hg_k(x_k, y) d \hgamma_k(y \vert x_k)\! \Big)\bigg\rbrace d \bbP_k(x_k) \label{eq:proof:gaps:3} \\
    &\geq& \!\!\!\! \eta\int\limits_{\cX_k} \bigg\lbrace \int\limits_{\cY} \hg_k(x_k, y) d \hgamma_k(y \vert x_k) - \hg_k(x_k, T_k^{\hf}(x_k)) \bigg\rbrace d \bbP_k(x_k) \label{eq:proof:gaps:4} \\
    &=& \!\!\!\! \eta \int\limits_{\cX_k} \int\limits_{\cY} \Big\lbrace \hg_k(x_k, y) -\hg_k(x_k, T_k^{\hf}(x_k)) \Big\rbrace  d \hgamma_k(y \vert x_k) d \bbP_k(x_k) \label{eq:proof:gaps:5} \\
    &\geq& \!\!\!\! \frac{\eta \beta}{2}  \int\limits_{\cX_k} \int\limits_{\cY} \Vert y - T_k^{\hf}(x_k) \Vert_2^2 d \hgamma_k(y \vert x_k) d \bbP_k(x_k) \label{eq-delta-one-k-est}.
\end{eqnarray}

\textit{Transition from \eqref{eq:proof:gaps:3} to \eqref{eq:proof:gaps:4}}. Note that:
\begin{eqnarray*}
    -\hg_k(x_k, T_k^{\hf}(x_k)) \geq - \underbrace{
    \int_{\cY} \hg_k(x_k, y) d \hgamma_k(y \vert x_k)}_{\textstyle \defeq \hg_k^{\gamma}} \geq - \max\limits_{x_k \in \cX_k; y\in \cY} \hg_k(x_k, y) \geq -b.
\end{eqnarray*}
The first-order properties of convex differentiable function $\opsi{\color{red}_k}$ read as:
\begin{eqnarray*}
    \nabla \opsi{\color{red}_k}\big( - \hg_k^\gamma\big) &\geq& \nabla \opsi{\color{red}_k}( -b) = \eta; \quad \,\, \\
    \opsi{\color{red}_k}\big(-\hg_k(x_k, T_k^{\hf}(x_k))\big) &\geq& \opsi{\color{red}_k} \big( - \hg_k^\gamma\big) + \nabla \opsi{\color{red}_k}( - \hg_k^\gamma) \big[ \hg_k^\gamma - \hg_k(x_k, T_k^{\hf}(x_k)\big]
\end{eqnarray*}
Based on them, we derive:
\begin{eqnarray*}
    \opsi{\color{red}_k}\big(-\hg_k(x_k, T_k^{\hf}(x_k))\big) - \opsi{\color{red}_k} \big( - \hg_k^\gamma\big) \geq \eta \big[ \hg_k^\gamma - \hg_k(x_k, T_k^{\hf}(x_k)\big],
\end{eqnarray*}
which validates the transition from from \eqref{eq:proof:gaps:3} to \eqref{eq:proof:gaps:4}.

\textit{Transition from \eqref{eq:proof:gaps:5} to \eqref{eq-delta-one-k-est}}. Note that $T_k^{\hf}(x_k)$ is the minimizer of $y \mapsto \hg_k(x_k, y)$. Therefore, due to $\beta$ - strong convexity of $\hg_k$ on the second argument it holds:
\begin{eqnarray*}
    \frac{\beta}{2} \Vert T_k^{\hf}(x_k) - y \Vert_2^2 \leq \hg_k(x_k, y) - \hg_k(x_k, T_k^{\hf}(x_k)),
\end{eqnarray*}
which validates the transition.

\underline{\textbf{Part 3}. Analysis of gap $\delta_2$.}

Note that we work under the conditions of Theorem \ref{thm-bary-uotot-connection}. In particular, we assume that there exist $m^*$ and $m^*$-congruent potentials $\{f_k^* \in \cC(\cY)\}_{k = 1}^{K}$ which deliver maximum to \eqref{eq:dual-suot-bary}. Also, we {\color{red} denote} $\{\stgamma_k\}_{k = 1}^{K}$ {\color{red} as} the family of optimal SUOT plans between $\bbP_k$ and $\bbQ^*$. 

Under these assumptions, we note that:
\begin{eqnarray}
    \cL^* &\overset{\text{cf. \eqref{eq:dual-suot-bary-before}}}{=}& \sum_{k = 1}^{K} \lambda_k \Big[ - \opsi{\color{red}_k}\Big(- (\stf_{k})^{c_k}(x_k)\Big) d \bbP_k(x_k) + m^*\Big] \label{eq:proof:gaps:7} \\
    &=&\sum_{k = 1}^{K} \lambda_k \Big[ - \opsi{\color{red}_k}\Big(- \int_{\cY} \big(c_k(x_k, y) - f_k^*(y)\big) d \gamma_k^*(y\vert x_k)\Big) d \bbP_k(x_k) + m^*\Big], \label{eq:proof:gaps:8} \\
    &=& \sum_{k = 1}^{K} \lambda_k \tcV_k(f_k^*, \gamma_k^*) \label{eq-Lstart-deco}
\end{eqnarray}
where the transition from \eqref{eq:proof:gaps:7} to \eqref{eq:proof:gaps:8} follows from the proof of Theorem \ref{thm-bary-uotot-connection}. In particular, 
\begin{eqnarray*}
    \int_{\cY} \big(c_k(x_k, y) - \stf_k(y)\big) d \stgamma_k(y \vert x_k) = (\stf_k)^{c_k}(x_k).
\end{eqnarray*}

Thanks to \eqref{eq-Lstart-deco} and \eqref{eq-mintL-deco}, one can decompose gap $\delta_2$ as follows:
\begin{eqnarray*}
    \delta_2 &=& \cL^* - \inf_{\gamma_k(\cdot \vert x_k) \in \cP(\cY)} \widetilde{\cL}(\widehat{f}_{[1:K]}, \gamma_{[1:K]}, \widehat{m}) \\
    &=& \sum_{k = 1}^{K} \lambda_k \big(\tcV_k(f_k^*, \gamma_k^*) - \tcJ_k(\hf_k)\big) \\
    &\overset{\text{cf. \eqref{J_k_through_V_k}}}{=}& \sum_{k = 1}^{K} \lambda_k \big(\tcV_k(f_k^*, \gamma_k^*) - \tcV_k(\hf_k, \gamma_k^{\hf})\big) \\
    &=& \sum_{k = 1}^{K} \lambda_k \delta_{2, k} (\hf_k), \\
    &\text{where}& \delta_{2, k}(\hf_k) \defeq \tcV_k(f_k^*, \gamma_k^*) - \tcV_k(\hf_k, \gamma_k^{\hf}).
\end{eqnarray*}

In what follows, we analyze gap $\delta_{2, k}$ for each particular $k \in [1:K]$:
\begin{eqnarray}
    \delta_{2, k}(\hf_k)\!\!\! &=& \!\! \int\limits_{\cX_k}\bigg\lbrace \opsi{\color{red}_k} \Big(\! - \hg_k(x_k, T_k^{\hf}(x_k))\!\Big)\! - \opsi{\color{red}_k} \Big(\! -\!
    (\stf_k)^{c_k}(x_k) \Big)\bigg\rbrace d \bbP_k(x_k) \label{eq:proof:gaps:9} \\
    &\,&\hspace{5mm} + \int_{\cY} \big\lbrace \stf_k(y) - \hf_k(y) \big\rbrace d \bbQ^*(y) \nonumber \\
    &\geq& \int\limits_{\cX_k} \bigg\lbrace (\stf_k)^{c_k}(x_k) - \hg_k(x_k, T_k^{\hf}(x_k))\bigg\rbrace \underbrace{\nabla \opsi{\color{red}_k} \big( - (\stf_k)^{c_k}(x_k) \big) d \bbP_k(x_k)}_{= d (\stgamma_k)_x(x_k) \text{ , cf. \eqref{eq-P-to_gammax}}} \label{eq:proof:gaps:10} \\
    &\,&\hspace{5mm} + \int_{\cY} \big\lbrace \stf_k(y) - \hf_k(y)\big\rbrace d \bbQ^*(y) \nonumber \\
    &=& \int\limits_{\cX_k} \int\limits_{\cY} \big\lbrace c_k(x_k, y) - \stf_k(y) \big\rbrace \underbrace{d\stgamma_k (y \vert x_k) d (\stgamma_k)_x(x_k)}_{= d \stgamma_k(x_k, y)} + \int\limits_{\cY} \stf_k(y) d \bbQ^*(y) \nonumber \\
    &\,&\hspace{5mm} - \int\limits_{\cX_k} \hg_k(x_k, T_k^{\hf}(x_k)) d (\stgamma_k)_x(x_k) - \int_{\cY} \hf_k(y) d \bbQ^*(y) \nonumber \\
    &=& \int\limits_{\cX_k} \int\limits_{\cY} \big\lbrace \dotuline{c_k(x_k, y)} - \cancel{\stf_k(y)} \big\rbrace d \stgamma_k(x_k, y) + \cancel{\int\limits_{\cX_k}\int\limits_{\cY} \stf_k(y) d \stgamma_k(x_k, y)} \nonumber \\
    &\,&\hspace{5mm} - \int\limits_{\cX_k} \int\limits_{\cY} \uwave{\hg_k(x_k, T_k^{\hf}(x_k))} d \stgamma_k(x_k, y) - \int\limits_{\cX_k}\int\limits_{\cY} \dotuline{\hf_k(y)} d \stgamma_k(x_k, y) \nonumber \\
     &=& \int\limits_{\cX_k} \int\limits_{\cY} \big\lbrace \dotuline{\hg_k(x_k, y)} - \uwave{\hg_k(x_k, T_k^{\hf}(x_k))} \big\rbrace d \stgamma_k(x_k, y) \nonumber \\
     &=&\!\!\!\! \int\limits_{\cX_k}\!\! \bigg\lbrace\! \int\limits_{\cY} \!\hg_k(x_k, y) d \stgamma_k(y \vert x_k) - \hg_k(x_k, T_k^{\hf}(x_k))\! \bigg\rbrace \nabla \opsi{\color{red}_k} \big( \!-\! (\stf_k)^{c_k}(x_k) \big) d \bbP_k(x_k) \label{eq:proof:gaps:11} \\
     &\geq& \eta \int\limits_{\cX_k}\bigg\lbrace \int\limits_{\cY} \hg_k(x_k, y) d \stgamma_k(y \vert x_k) - \hg_k(x_k, T_k^{\hf}(x_k)) \bigg\rbrace d \bbP_k(x_k) \label{eq:proof:gaps:12} \\
     &=& \eta \int\limits_{\cX_k} \int\limits_{\cY}  \Big\lbrace \hg_k(x_k, y) - \hg_k(x_k, T_k^{\hf}(x_k)) \Big\rbrace d \stgamma_k(y \vert x_k) d \bbP_k(x_k) \nonumber \\
     &\geq& \frac{\eta \beta}{2} \int\limits_{\cX_k} \int\limits_{\cY} \Vert y - T_k^{\hf}(x_k) \Vert_2^2 d \stgamma_k(y \vert x_k) d \bbP_k(x_k) \label{eq-delta-two-k-est} 
\end{eqnarray}

\textit{Transition from \eqref{eq:proof:gaps:9} to \eqref{eq:proof:gaps:10}} follows from the first-order property of convex differentiable function $\opsi{\color{red}_k}$:
\begin{eqnarray*}
    \opsi{\color{red}_k}\big(\!-\!\hg_k(x_k, T_k^{\hf}(x_k))\big) \geq \opsi{\color{red}_k}\big( \!-\! (\stf_k)^{c_k}(x_k)\big) + \nabla \opsi{\color{red}_k}\big(\!-\! (\stf_k)^{c_k}(x_k)\big)\big[(\stf_k)^{c_k}(x_k) -\hg_k(x_k, T_k^{\hf}(x_k)) \big]
\end{eqnarray*}

\textit{Transition from \eqref{eq:proof:gaps:11} to \eqref{eq:proof:gaps:12}}. Since $T_k^{\hf}$ minimizes $y \mapsto \hg_k(x_k, y)$ then:
\begin{eqnarray}
    \int_{\cY} \!\hg_k(x_k, y) d \stgamma_k(y \vert x_k) - \hg_k(x_k, T_k^{\hf}(x_k)) \geq 0 \label{eq:proof:gaps:13}.
\end{eqnarray}
Also note that:
\begin{eqnarray*}
    - (\stf_k)^{c_k}(x_k) = \!-\! \int\limits_{\cY}\! \big(c_k(x_k, y) - \stf_k(y)\big) d \stgamma_k(y \vert x_k) \geq - \!\!\!\! \max\limits_{x_k\in\cX_k, y\in\cY} \!\big\lbrace c_k(x_k, y) - \stf_k(y)\big\rbrace \geq - b,
\end{eqnarray*}
and, therefore, 
\begin{eqnarray}
    \nabla \opsi{\color{red}_k}( - (\stf_k)^{c_k}(x_k) ) \geq \nabla \opsi{\color{red}_k}( - b) = \eta. \label{eq:proof:gaps:14}
\end{eqnarray}
The combination of \eqref{eq:proof:gaps:13} and \eqref{eq:proof:gaps:14} validates the transition.

\underline{\textbf{Part 4}. Bringing it all together}. 

Summarizing inequalities for $\delta_{1, k}$ \eqref{eq-delta-one-k-est} and $\delta_{2,k}$ \eqref{eq-delta-two-k-est} yields:
\begin{eqnarray}
    &\,&\hspace{-5mm}\delta_{1,k}(\hf_k, \hgamma_k) + \delta_{2, k}(\hf_k) \nonumber \\ 
    &\geq&\frac{\eta \beta}{2} \bigg\lbrace\int\limits_{\cX_k \times \cY}\!\!\!\!  \Vert y - T_k^{\hf}(x_k)\Vert_2^2 d \hgamma_k(y \vert x_k) d \bbP_k(x_k) + \!\!\!\int\limits_{\cX_k \times \cY} \!\!\!\!\Vert y - T_k^{\hf}(x_k) \Vert_2^2 d \stgamma_k(y \vert x_k) d \bbP_k(x_k) \bigg\rbrace \nonumber \\
    &\geq& \frac{\eta \beta}{2} \int\limits_{\cX_k} \bbW_2^2\big(\hgamma_k(\cdot \vert x_k), \stgamma_k(\cdot\vert x_k)\big) d \bbP_k(x_k), \nonumber
\end{eqnarray}
where the last inequality is obtained analogously to the similar one in the proof of \citep[Theorem 4.2]{kolesovestimating}.

Aggregating the inequalities for $\delta_{1, k} + \delta_{2, k}$ with weights $\lambda_k$ finishes the proof of the Theorem:
\begin{eqnarray*}
    \delta_1 + \delta_2 = \sum_{k = 1}^{K} \lambda_k (\delta_{1, k} + \delta_{2, k}) \geq \frac{\eta \beta}{2} \sum_{k = 1}^K \lambda_k \int_{\cX_k} \bbW_2^2\big(\hgamma_k(\cdot \vert x), \stgamma_k(\cdot\vert x)\big) d \bbP_k(x).
\end{eqnarray*}

\end{proof}
}

\vspace{-5mm}
\section{Extended Discussion of Related Works}
\label{app-related}

Below we give an overview of the continuous OT solvers with a specific focus on unbalanced setup. 

\textbf{Neural OT/UOT solvers.} Neural network-based continuous OT is a popular and fruitful area of recent generative modelling research. Some keynote solvers include: \citep{makkuva2020optimal, korotin2021wasserstein, amos2023on} (ICNN-based, quadratic cost); \citep{seguy2018large, daniels2021score, mokrov2024energyguided} (Entropic OT); \citep{vargas2021solving, de2021diffusion, gushchin2023entropic, tong2023simulation, shi2024diffusion, gushchin2024adversarial,korotin2024light,gushchin2024light}, (Schrödinger bridge); \citep{liu2023flow, tong2024improving, kornilov2024optimal, klein2024genot} (Flow matching). Of special importance for our developed method are max-min (adversarial) OT solvers based on (semi-) dual OT formulation \citep{rout2022generative, korotin2023neural, korotin2023kernel, fan2023neural,asadulaev2024neural,korotin2022kantorovich,korotin2021neural}. Recently, the adversarial methodology has been extended to unbalanced setup \citep{choi2024generative, gazdieva2023extremal, choianalyzing}, opening up a new intriguing research direction in the field of robust continuous OT.

\section{Implementation Details} \label{sec-implementation-details}
In this section, we provide additional details about experiments in our paper.

\subsection{Description of Toy Experiments}
\textbf{Moon, Spiral, and 8-Gaussian Datasets.} The datasets used in Section \ref{sec-toy-exp} were implemented by following \citep{choianalyzing}.

\textbf{Datasets for Class Imbalance Experiments.}
For the distribution $\bbP_1$ and $\bbP_2$, we employ the Gaussian mixture of $\frac{1}{4} \mathcal{N}((-5,4), 0.4^2) + \frac{3}{4} \mathcal{N}((-5,-4), 0.4^2)$ and $\frac{3}{4} \mathcal{N}((5,4), 0.4^2) + \frac{1}{4} \mathcal{N}((5,-4), 0.4^2)$, respectively.

\textbf{Datasets for Outlier Experiments.}
We consider three marginal distributions, denoted as $\bbP_1$, $\bbP_2$, and $\bbP_3$. For $\bbP_1$ and $\bbP_2$, we generate datasets consisting of 95\% in-distribution data and 5\% outliers. In contrast, $\bbP_3$ consists solely of in-distribution data. The in-distribution data for each marginal follows a Gaussian mixture model with four modes and uniform weights, i.e., $\sum_{{\color{red}k}=1}^4 \frac{1}{4} \mathcal{N}(m_{\color{red}k}, \sigma^2 I)$. The means and variances for $\bbP_1$, $\bbP_2$, and $\bbP_3$ are defined as follows:

\begin{itemize} \item $\bbP_1$: $m_1 = (-5, -1)$, $m_2 = (5, 1)$, $m_3 = (1, -5)$, $m_4 = (-1, 5)$, with $\sigma = 0.1$, \item $\bbP_2$: $m_1 = (-5, 1)$, $m_2 = (5, -1)$, $m_3 = (1, 5)$, $m_4 = (-1, -5)$, with $\sigma = 0.1$, \item $\bbP_3$: $m_1 = (-5, 0)$, $m_2 = (5, 0)$, $m_3 = (0, 5)$, $m_4 = (0, -5)$, with $\sigma = 0.1$, \end{itemize}
respectively.
For the outlier distributions of $\bbP_1$ and $\bbP_2$, we again use a Gaussian mixture model with four modes, $\sum_{{\color{red}k}=1}^4 \frac{1}{4} \mathcal{N}(m_{\color{red}k}, \sigma^2 I)$, with the means and variances defined as:
\begin{itemize} \item $\bbP_1$ outliers: $m_1 = (10, 2)$, $m_2 = (10, 1)$, $m_3 = (10, 0)$, $m_4 = (10, -1)$, with $\sigma = 0.02$, \item $\bbP_2$ outliers: $m_1 = (-10, 1)$, $m_2 = (-10, 0)$, $m_3 = (10, -1)$, $m_4 = (10, -2)$, with $\sigma = 0.02$, \end{itemize}
respectively.

\textbf{Implementation Details.}
For the experiments in Section \ref{sec-toy-exp} and the class imbalanced experiments, we use the number of iterations of 10K. 
For the outlier experiments in Section \ref{sec-exp-property}, we employ the number of iterations of 20K.
We use the batch size of 1024.
We update $m$ with the Adam Optimizer with learning rate of $10^{-3}$ and $(\beta_1, \beta_2) = (0,0.9)$.
For other hyperparameters, we follow the experimental settings of \citep{kolesovestimating}.

\textbf{Evaluation metric.} As discussed in Section \ref{sec-toy-exp}, we utilize the {\color{red}$\bbW_2$ and $\mathcal{L}_2$ metrics, defined as ${\color{red}\bbW}^2_2({T_1}_\#\bbP_1, \bbQ^*)$ and $\lVert {T_1} - T^*_1\rVert^2_{L^2(\bbP_1)}$, respectively. }
Here, $\bbQ^*$ and $T^*_1$ represent the true barycenter and transport maps. 
To approximate $(T^*_1, \bbQ^*)$, we solve the discrete Unbalanced Barycenter problem. 
Specifically, we sample 2000 points and solve the Unbalanced Optimal Transport (UOT) problem from $\bbP_1$ to $\bbP_2$ using the Python Optimal Transport (POT) library \citep{flamary2021pot}. 
The obtained discrete UOT map {\color{red}$T_{uot}^*$ is then used to define $T_1^* = \lambda_1 \text{Id} + \lambda_2 T_{uot}^*$, and $\bbQ^* = {T_1^*}_\# \bbP_1$}. 
Using these discrete samples and transport maps, we compute the approximation of the metrics.

\textbf{Training/Inference Time.} The training and inference times of our model, along with comparisons to baseline methods, are presented in Table \ref{table:training_time} and Table \ref{table:inference time}. In the case of toy experiments, the training time of U-NOTB is comparable to that of NOTB \citep{kolesovestimating}. The inference time of U-NOTB is typically 2-10 times slower than NOTB. This gap is due to the additional computational complexity introduced by rejection sampling, which requires calculating the $c$-transform of the potential function $\widehat{f}_k$, i.e., $\widehat{f}^c_k (x) \approx c(x,\widehat{T}_k(x)) - \widehat{f}(\widehat{T}_k(x))$. 
Thus, this process involves a forward pass through the learned potential function. Moreover, it requires to compute the cost functional, which is an additional burden for StyleGAN experiments. 
However, we would like to highlight that thanks to our proposed sampling method, our solver gains robustness to outliers and class imbalancedness.

\begin{table}[h!]
    \centering
    \scalebox{0.95}{
    \begin{tabular}{l|l|l|l|l}
        \hline
         Experiment & 2D Toy (\S \ref{sec-toy-exp}) & Class Imbalance (\S \ref{sec-exp-property}) & Outlier (\S \ref{sec-exp-property}) & Shape-Color (\S \ref{sec-exp-stylegan})  \\
         \hline 
         UOTM  & 4-5 mins & - & - & -  \\
         \hline
         Minibatch UOT & 2h & - & - & -   \\
         \hline 
         NOTB & 6-7 mins & 6-7 mins & 42-45 mins &  - \\
         \hline
         U-NOTB & 6-7 mins & 6-7 mins & 42-45 mins & 3h 20 mins \\
         \hline
         GPU & RTX 2080Ti & RTX 2080Ti & RTX 2080Ti &  RTX 3090Ti \\
         \hline
    \end{tabular}
    }
    \caption{Training time for various experiments.}
    \label{table:training_time}
\end{table}

\begin{table}[h!]
    \centering
    \scalebox{0.95}{
    \begin{tabular}{l|l|l|l|l}
        \hline
         Experiment & 2D Toy (\S \ref{sec-toy-exp}) & Class Imbalance (\S \ref{sec-exp-property}) & Outlier (\S \ref{sec-exp-property}) & Shape-Color (\S \ref{sec-exp-stylegan}) \\
         \hline 
         UOTM  & 0.001 sec & - & - & -  \\
         \hline
         Minibatch UOT & 0.001 sec & - & - & -   \\
         \hline 
         NOTB  & 0.001 sec & 0.001 sec & 0.001 sec & 0.02 sec \\
         \hline
         U-NOTB & 0.001 sec & 0.003 sec & 0.003 sec & 0.18 sec \\
         \hline
         GPU & RTX 2080Ti & RTX 2080Ti & RTX 2080Ti &  RTX 3090Ti \\
         \hline
    \end{tabular}
    }
    \caption{Inference time for various experiments. We report the inference time of 1000 samples for synthetic experiments (\S \ref{sec-toy-exp}, \S \ref{sec-exp-property}). For StyleGAN experiment (\S \ref{sec-exp-stylegan}), we evaluated with 220 samples.}
    \label{table:inference time}
\end{table}

\subsection{StyleGAN Experiments}
Unless otherwise stated, we follow the implementation of \citep{kolesovestimating}. 
Note that we use Adam Optimizers with $(\beta_1, \beta_2) = (0,0.9)$.

The exact hyperparameters for all experiments are described in Table \ref{table:hyperparams}.

\begin{table}[h!]
    \centering
    \scalebox{0.95}{
    \begin{tabular}{l|l|l|l|l}
        \hline
         Experiment & 2D Toy (\S \ref{sec-toy-exp}) & Class Imbalance (\S \ref{sec-exp-property}) & Outlier (\S \ref{sec-exp-property}) & Shape-Color (\S \ref{sec-exp-stylegan}) \\
         \hline 
         D & \textit{2} & \textit{2} & \textit{2} & \textit{512} \\
         \hline 
          K & \textit{2} & \textit{2} & \textit{3} &  \textit{2} \\
         \hline
         $\tau$ & \textit{5} & \textit{1},\textit{20},\textit{200} &  \textit{1},\textit{20},\textit{200} & \textit{10}  \\
         \hline
         batch size & \textit{1024} & \textit{256} & \textit{256} & \textit{64} \\
         \hline
         Epochs & \textit{10K} & \textit{10K} & \textit{20K} & \textit{3K} \\
         \hline
         $N_T$ & \textit{3} & \textit{3} & \textit{5} & \textit{10} \\
         \hline
         $\lambda_1$ & \textit{1/2} & \textit{1/2} & \textit{1/3} & \textit{1/2} \\
         \hline
         $\lambda_2$ & \textit{1/2} & \textit{1/2} & \textit{1/3} & \textit{1/3} \\
         \hline
        $\lambda_3$ & - & - & \textit{1/3} & - \\
         \hline
         Divergence $\psi_1$ & \textit{balanced} & \textit{KL} & \textit{KL} & \textit{Softplus}  \\
         \hline
         Divergence $\psi_2$ & \textit{KL} & \textit{KL} & \textit{KL} & \textit{Softplus} \\
         \hline
         Divergence $\psi_3$ & - & - & KL & - \\
         \hline
         $f_{k,\theta}$ & \textit{MLP} & \textit{MLP} & \textit{MLP} & \textit{ResNet} \\
         \hline
         $T_{k, \omega}$ &  \textit{MLP} & \textit{MLP} & \textit{MLP} & \textit{ResNet}  \\
         \hline
         $lr_{f_{k,\theta}}$ & \textit{1e-3} & \textit{1e-3} & \textit{1e-3} & \textit{1e-4} \\
         \hline
         $lr_{T_{k, \omega}}$ & \textit{1e-3} & \textit{1e-3} & \textit{1e-3} & \textit{1e-7} \\
         \hline
         $lr_{m}$ & \textit{1e-3} & \textit{1e-3} & \textit{1e-3} & \textit{1e-2} \\
         \hline
    \end{tabular}
    }
    \caption{Hyper-parameter settings of Algorithm \ref{algorithm:suotbary} for various experiments.}
    \label{table:hyperparams}
\end{table}

\vspace{-2mm}
\section{Extended Experiments}
\vspace{-2mm}
In this section, we provide additional experimental results. In Appendices \ref{app-gauss-b} and \ref{app-gauss-unb}, we demonstrate the performance of our solver and baselines in OT/SUOT barycenter problem for Gaussian distributions with computable ground-truth solutions. In Appendix \ref{sec-ffhq}, we conduct high-dimensional experiment demonstrating the practical advantages of
our solver, i.e., its ability to manipulate images through interpolating image distributions on the image manifolds.

\subsection{Balanced OT Barycenters for Gaussian Distributions}
\label{app-gauss-b}
In the balanced OT barycenter problem for Gaussian distributions, the ground-truth barycenter is known to be Gaussian and can be estimated using the fixed point iteration procedure \citep{alvarez2016fixed}. In this section, we tested our solver in this balanced setup keeping in mind that for large unbalancedness parameters $\tau$, it should provide a good approximation of balanced OT barycenter problem solutions. \textbf{Ultimately, we show that while our solver is designed to tackle an \underline{\textit{unbalanced}} OT barycenter problem, its performance in the different balanced OT barycenter problem is comparable to the current SOTA solvers.} 

We consider $K\!=\!3$ Gaussian input distributions with weights $\lambda_1\!=\!\lambda_2\!=\!0.25$, $\lambda_3\!=\!0.5$, and quadratic OT cost functions $c_k(x,y)=\frac{\|x_k-y\|^2}{2}$ following the experimental setup which was firstly introduced in \citep{kolesov2024energyguided} and also used in \citep[Appendix B.1]{kolesovestimating}. We perform comparison with three recent approaches for continuous barycenter estimation $-$ \citep[NOTB]{kolesovestimating}, \citep[EgBary]{kolesov2024energyguided} and \citep[WIN]{korotin2022wasserstein}. For EgBary approach, which solves the Entropy-resugularized OT (EOT) barycenter problem, we consider small entropy regularization parameter $\varepsilon=0.01$. For completeness, we also tested the performance of the classic approach \citep[FC$\bbW$B]{cuturi2014fast} which approximates the barycenter by a discrete distribution on a fixed number of free-support points.

For our U-NOTB solver, we consider KL divergencies and varying unbalancedness parameter $\tau\in[1,10^1, 10^2, 10^3, 10^4, 10^5]$ expecting that for large $\tau$, our solver will provide good results in solving the balanced problem. We assessed the performance of solvers using the weighted unexplained variance percentage metrics $\cL_2$-UVP$(\hat{T})=100\cdot [\frac{\|\hat{T}-T^*\|^2_{L^2(\bbP)}}{\text{var}(\mathbb{Q}^*)}]\%$ where $\bbQ^*$ is a given ground-truth OT barycenter. For the continuous baseline solvers (NOTB, EgBary, WIN), we report the results given in \citep[Table 3]{kolesovestimating}. Specifically, for our solver and EgBary which learn the optimal plans, we consider their barycentric projections. For the assessment of FC$\bbW$B, we first calculate the discrete barycenter using large number of samples from input distributions $\bbP_k$. Then we compute the optimal plans between the input and this barycenter distributions \citep{flamary2021pot}, and consider their barycentric projections. Table \ref{table-gaussians-b} presents the weighted sum of $\cL_2$-UVP values w.r.t. the barycenter weights $\lambda_k$.

\vspace{-2mm}
\begin{table}[!h]
\centering
\footnotesize
\begin{tabular}{c|c|c|c|c|c}\hline
\textbf{Method/Dim}    & 2 & 4 & 8 & 16 & 64 \\ \hline
\textbf{Ours} ($\tau=1$) & 3.10&2.39 &1.64 &1.44 &1.39 \\ \hline
\textbf{Ours} ($\tau=10^1$) &0.03& 0.09& 0.06& 0.08& 0.09 \\ \hline
\textbf{Ours} ($\tau=10^2$) &\underline{0.02}& \underline{0.03}& \textbf{0.03}& \underline{0.05}& \textbf{0.07}\\ \hline
\textbf{Ours} ($\tau=10^3$) & 0.02& 0.03& 0.04& 0.05& 0.08\\ \hline
\textbf{Ours} ($\tau=10^4$) & 0.02& 0.03& 0.04& 0.05& 0.08\\ \hline
NOTB & \textbf{0.01} & \textbf{0.02} & \underline{0.04} & \textbf{0.04} & \underline{0.08} \\ \hline
EgBary &  0.02 & 0.05 & 0.06 & 0.09 & 0.84\\ \hline
WIN &  0.03 & 0.08 & 0.13 & 0.25 & 0.75 \\ \hline
FC$\bbW$B &  2.17 & 6.51 & 18.68 & 35.80 & 100.47 \\ \hline
\end{tabular}
\centering\caption{$\mathcal{L}_{2}\text{-UVP}$ for our method, NOTB, EgBary  ($\epsilon=0.01$), \\WIN and FC$\bbW$B, $D=2,4,8,16,64$.}
\label{table-gaussians-b}
\vspace{-2mm}
\end{table}
The Table shows that for $\tau=10^2$, our approach gives the results \underline{comparable} with the current SOTA solver for continuous \textbf{balanced} barycenter estimation $-$ NOTB. Further increase of $\tau$ does not help to improve the results which can be explained by related numerical instabilities. At the same time, discrete OT barycenter solver provides the worst results and the $\mathcal{L}_{2}\text{-UVP}$ metric increases drastically with the increase of dimension $D$. It is an expected behaviour, since the discrete distributions poorly approximate the continuous ones. This aspect was previously investigated and justified in \citep[\wasyparagraph 5.1]{korotin2022wasserstein}.

\subsection{Semi-unbalanced OT Barycenters for Gaussian Distributions}
\label{app-gauss-unb}
A recent preprint \citep{nguyen2024barycenter}  provides an iteration procedure for calculating unbalanced SUOT barycenter for Gaussian distributions using the quadratic cost and KL divergences. \textbf{We test our solver in this unbalanced setup for different parameters $\tau$ and show that it consistently outperforms the SOTA balanced solver \citep[NOTB]{kolesovestimating}.}
\begin{table}[!h]
\centering
\footnotesize
\begin{tabular}{c|c|c|c|c|c|c|c}\hline
\textbf{Method}    & $\tau=10^{-2}$ & $\tau=10^{-1}$ & $\tau=1$ & $\tau=10$ & $\tau=10^{2}$ & $\tau=10^{3}$ & $\tau=10^{4}$ \\ \hline
\textbf{Ours} & \textbf{0.31}& \textbf{0.20} & \textbf{0.15} & \textbf{0.17}& \textbf{0.17} & \textbf{0.17} & \textbf{0.17}  \\ \hline
NOTB & 1.71 & 1.22 & 0.47 & 0.21 & 0.20 & 0.19 & 0.19 \\ \hline
\end{tabular}
\caption{\centering{B$\mathbb{W}_{2}^2\text{-UVP}$ between the learned and ground-truth \textit{unbalanced} barycenters for our method and NOTB. The results are given for varying parameters $\tau=10^{-2},10^{-1},1,10,10^2,10^3,10^4$.}}
\label{table-gaussians-unb}
\vspace{-2mm}
\end{table}

We consider the same set of Gaussians distributions, weights and quadratic cost function as in the experiment with balanced Gaussians, see Appendix \ref{app-gauss-b}. As the baseline solver, we consider SOTA balanced solver for continuous barycenter estimation \citep[NOTB]{kolesovestimating}. To calculate the ground-truth barycenter, we run the \textit{Hybrid Bures-Wasserstein Gradient Descent} algorithm \citep[\wasyparagraph 4.3]{nguyen2024barycenter} for computation of SUOT barycenter problem between Gaussians. The algorithm is designed for the case of KL divergencies and is endowed with the unbalancedness parameter which we denote as $\tau$.

We consider parameters $\tau\in[10^{-2},10^{-1},1,10,10^2,10^3,10^4]$. For each parameter, we calculate the corresponding ground-truth SUOT barycenter and train our U-NOTB solver with KL divergencies. Then we assess the performance of our solver and NOTB by measuring the weighted Bures-Wasserstein unexplained variance percentage metric for an implicitly given barycenters $\widehat{\bbQ}$: $$\text{B}\bbW^2_2\text{-}\text{UVP}(\widehat{\bbQ})\defeq 100\cdot \big[\frac{\text{B}\bbW^2_2(\widehat{\bbQ}, \bbQ^*)}{{\color{red}\frac{1}{2}}\text{var}(\bbQ^*)}\big]\%$$
where {\color{red}B}$\bbW^2_2(\bbQ_1, \bbQ_2)=\bbW^2_2(\cN(\mu_{\bbQ_1}, \Sigma_{\bbQ_1}), \cN(\mu_{\bbQ_2}, \Sigma_{\bbQ_2}))$ is the Bures-Wasserstein metric, $\mu_{\bbQ}$, $\Sigma_{\bbQ}$ denote means and covariances of corresponding distribution $\bbQ$, see \citep{korotin2021continuous}. To get the samples from learned barycenter $\widehat{\bbQ}$, we first sample the input points $x_k$ either form the input distributions $\bbP_k$ (for NOTB) or from the left marginals of the learned plans (for our U-NOTB). Here sampling from the left marginals of the plans can be done using the rejection sampling procedure, see \wasyparagraph\ref{sec-algorithm}. Then we sample new points from the learned barycenter by passing the points $x_k$ through the learned stochastic or deterministic maps $T_k$ which parametrize the learned plans. Table \ref{table-gaussians-unb} presents the weighted sum of $\text{B}\bbW_2^2$-UVP values w.r.t. the barycenter weights $\lambda_k$.

We see that our unbalanced U-NOTB solver \textbf{outperforms} NOTB for \textbf{\underline{all}} considered unbalancedness parameters $\tau$. The difference is especially visible for small parameter $\tau$. It is expected since for small $\tau$ the solutions of the unbalanced barycenter problem significantly differ from the balanced one. For large $\tau$ this difference is getting smaller, thus, balanced NOTB solver manages to approximate the barycenter quite well but still worse than ours.

\subsection{Manipulating Images through Interpolating Image Distributions on the Image Manifold} \label{sec-ffhq}
In this section, we present a novel real-world application of U-NOTB for high-dimensional image manipulation. Specifically, we address scenarios characterized by class imbalances in the marginal distributions.
\textbf{Ultimately, our experiment highlights the potential of our approach to manipulate images by learning barycenters between distinct
image distributions, enabling controlled transitions across semantic attributes. Moreover, our model also demonstrates
robustness to class imbalancedness in this practical task compared to other baselines.}

Given multiple image distributions $\{\bbP_k \}^K_{k=1}$, we aim to simultaneously learn all UOT barycenters $\bbQ_{\lambda_{1:K}}$ for arbitrary tuple $(\lambda_k)^K_{k=1}$ satisfying $\sum^K_{k=1} \lambda_k = 1, \ 0\le \lambda_k \le 1$. 
Specifically, we learn all intermediate barycenters simultaneously by parametrizing the transport maps $\{T_k\}^K_{k=1}$ as functions of the condition variable $\lambda_{1:K}$, expressed  as $T_k (\lambda_{1:K}, \cdot)$. 
Note that a similar conditioning on barycenter weights is also employed in Algorithm 2 of \citep{fan2021scalable}.

With the learned conditional transport plan, we perform image manipulation through transforming given image $x\sim \bbP_k$ to the barycenter point $y\sim \bbQ_{\lambda_{1:K}}$.
In particular, we consider $\bbP_1$ and $\bbP_2$ as collections of images of young individuals (age between 5 to 20) and elderly individuals (age of over 50), respectively, from the FFHQ \citep{karras2019style} dataset. 
\textbf{It is important to note that the ratios of females to males in $\bbP_1$ and $\bbP_2$ are approximately 1:2 and 3:1, respectively. Thus, the data inherently exhibits class imbalance across the marginals.}
Here, we provide the precise experimental settings:
\begin{itemize}
    \item \textbf{Marginal distributions $\bbP_1, \bbP_2$.} The marginal distributions $\bbP_1, \bbP_2$ consist of FFHQ images of young individuals (5 < age < 20) and elderly individuals (age > 50). We randomly partition each distribution into 90\% for the training set and 10\% for the test set.
    \item \textbf{StyleGAN Manifold.} Let $G:\mathcal{W}^+ \rightarrow \bbR^{3\times256\times256}$ be the pretrained StyleGAN2-ada \citep{Karras2020ada} generator that maps $\mathcal{W}^+$ space to 256-dimensional images. 
    Let $E:\bbR^{3\times256\times256} \rightarrow \mathcal{W}^+$ be the pretrained FFHQ e4e encoder \citep{tov2021designing}. Note that $E$ maps image $x\in \bbR^{3\times256\times256}$ to the corresponding vector $w \in \mathcal{W}^+$, i.e. $G(w)\approx x$. Note that $\mathcal{W}^+ \subset \bbR^{18\times512}$. To restrict our transformed images to FFHQ images, we parametrize our transport map by follows: $T_{k,\omega} (x) :=G\circ S_{k,\omega} \circ E (x)$ where $x\in \bbR^{3\times256\times256}$ and $S_{k,\omega}: \bbR^{18\times512}\rightarrow \bbR^{18\times512}$.
    \item \textbf{General Transport Cost.} 
    To guide image manipulation in a meaningful direction, we adopt a quadratic cost function in the $\mathcal{W}^+$ latent space, defined as:
    \begin{eqnarray}
        c(x,y) = \alpha \Vert E(y) - E(x)\Vert^2.
    \end{eqnarray}
    Here, we set $\alpha = 10^{-2}$.
    \item \textbf{Network Parametrization.}
    Let $(\lambda_1, \lambda_2) := (1-t, t)$. To generate $\bbQ_{\lambda_{1:2}}$ from $x \sim \bbP_k$, we condition the transport map $T_{k,\omega}$ with $t$ as follows:
    \begin{eqnarray} 
        T_{k,\omega} (t, x) = G\left( S_{k,\omega}(t, E(x)) \right).
    \end{eqnarray}
    We parametrize $S_{1,\omega}(t, z) = z + t \text{NN}_{1,\omega} (t, z)$ and $S_{2,\omega}(t, z) = z + (1-t) \text{NN}_{2,\omega} (t, z)$. 
    Note that $S_{1,\omega}$ and $S_{2,\omega}$ transport the latent vector of young individuals to barycenter point, latent vector of elderly individuals to barycenter, respectively.
    Thus, we selected this heuristic parametrization of $S_{k,\omega}$ to ensure that $S_{1,\omega}(t,x)\approx x$ when $t\approx 0$ and $S_{2,\omega}(t,y)\approx y$ when $t\approx 1$.
    Furthermore, the potential functions $f_1, f_2$ are also conditioned by variable $t$ as follows:
    \begin{eqnarray} \label{eq:ffhq-potential}
        f_{1,\theta} (t, x) =  \frac{V_{\theta} (t, E(x))}{1 - t} + m_{\theta} (t), \ f_{2,\theta} (t, x) =  - \frac{V_{\theta} (t, E(x))}{t} + m_{\theta} (t), 
    \end{eqnarray}
    where $V_{\theta}:\bbR\times \bbR^{18\times 512}\rightarrow \bbR$. Note that $(1-t)f_{1,\theta} + t f_{2,\theta} = m_{\theta} (t)$.
    Here, $m_{\theta} (t)$ can be regarded as the parametrization of the congruence constant at time $t$.
    To avoid the numerical instability of \eqref{eq:ffhq-potential} with respect to the time variable $t$, we sample $t$ from a uniform distribution within the interval $[0.05, 0.95]$.
\end{itemize}

\textbf{Results.} 
As discussed, the two marginals $\bbP_1, \bbP_2$ exhibit a class imbalance with respect to gender.
$\bbP_1$ contains more than 60\% of female individuals, while $\bbP_2$ consists of less than 30\% of female individuals.
Thus, we evaluate the quantitative performance of our transport map by measuring how well the transport map preserves gender in such a class imbalanced case. 
Specifically, we measure the gender preservation accuracy when transforming a young individual $x$ to barycenter $T_1(t,x)$ at $t=0.9$. For our model, we report the gender preservation accuracy for the accepted images. As a comparison, we include the results from the NOTB method applied to the entire test dataset. For the baseline model, we precompute the global latent direction $v_{global} := \bbE_{y\sim \bbP_2} [E(y)] - \bbE_{x\sim \bbP_1} [E(x)]$, and measure the accuracy of the gender alignment between $x\sim \bbP_1$ and $x+ t v_{global}$. 

\vspace{-2mm}
\begin{table}[!h]
\centering
\footnotesize
\begin{tabular}{c|c|c|c}\hline
\textbf{Method}    & Baseline & NOTB & U-NOTB \\ \hline
Acceptance Rate    &    -     &  -   &  67.9\% (592 / 872)  \\ \hline
Accuracy           &  69.8\% (609 / 872) &   63.4\% (553 / 872)  &   \textbf{88.0\%} (521 / 592) \\ \hline
\end{tabular}
\centering\caption{Gender preservation accuracy for $x\sim \bbP_1$ and $T_1 (t, x)$ at $t=0.9$. U-NOTB can perform rejection sampling to address class imbalancedness.}
\label{tab:ffhq}
\vspace{-2mm}
\end{table}

\begin{figure*}[!h] 
    \centering
    \captionsetup[subfigure]{justification=centering}
    \begin{subfigure}[b]{.48\textwidth}
        \includegraphics[width=0.93\textwidth]{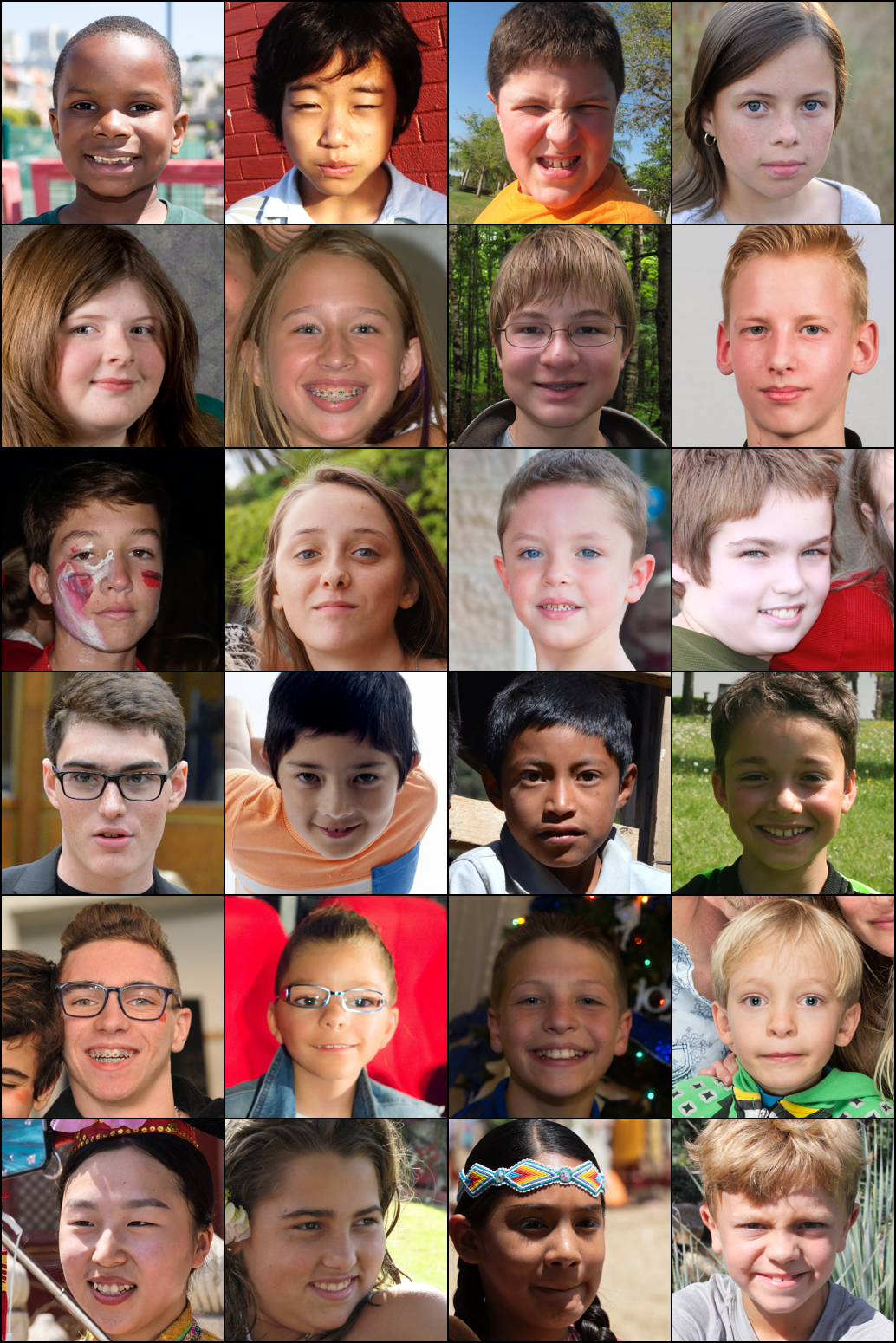}
        \caption{Accepted samples from $\bbP_1$.}
        \label{fig:ffhq:di-p0}
    \end{subfigure}
    \begin{subfigure}[b]{.48\textwidth}
        \includegraphics[width=0.93\textwidth]{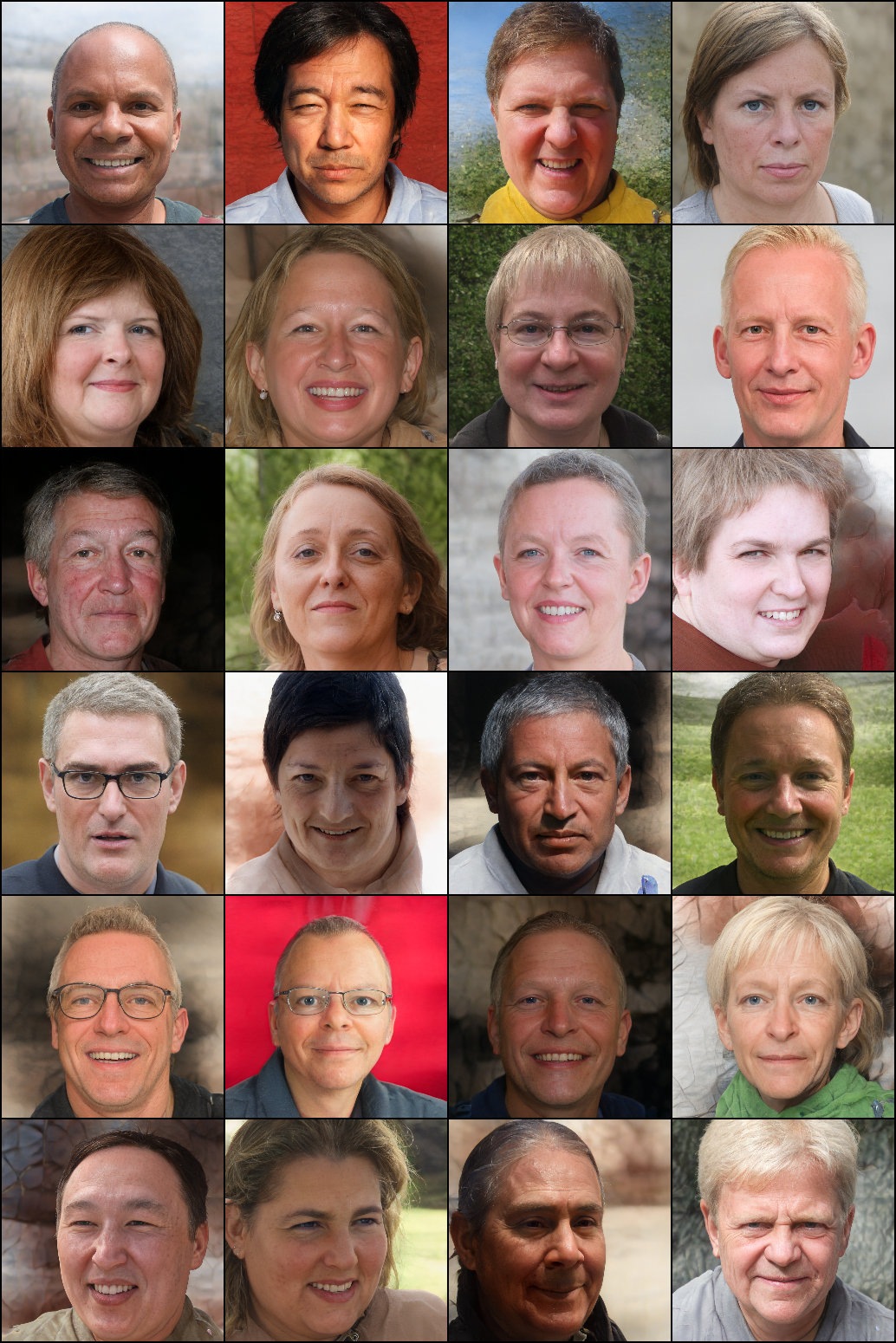}
        \caption{Corresponding barycenter samples.}
        \label{fig:ffqh:oul-p0}
    \end{subfigure}
    \caption{Examples of randomly selected (a) accepted samples of $x\sim \bbP_1$ and (b) its corresponding barycenter at $t=0.9$. Note that the acceptance rate of U-NOTB is 67.9\%. Approximately, 36\% of the accepted samples are female individuals.}
    \label{fig:ffhq-acc}
\end{figure*}

\begin{figure*}[!h] 
    \centering
    \begin{subfigure}[b]{.94\textwidth}
        \includegraphics[width=\textwidth]{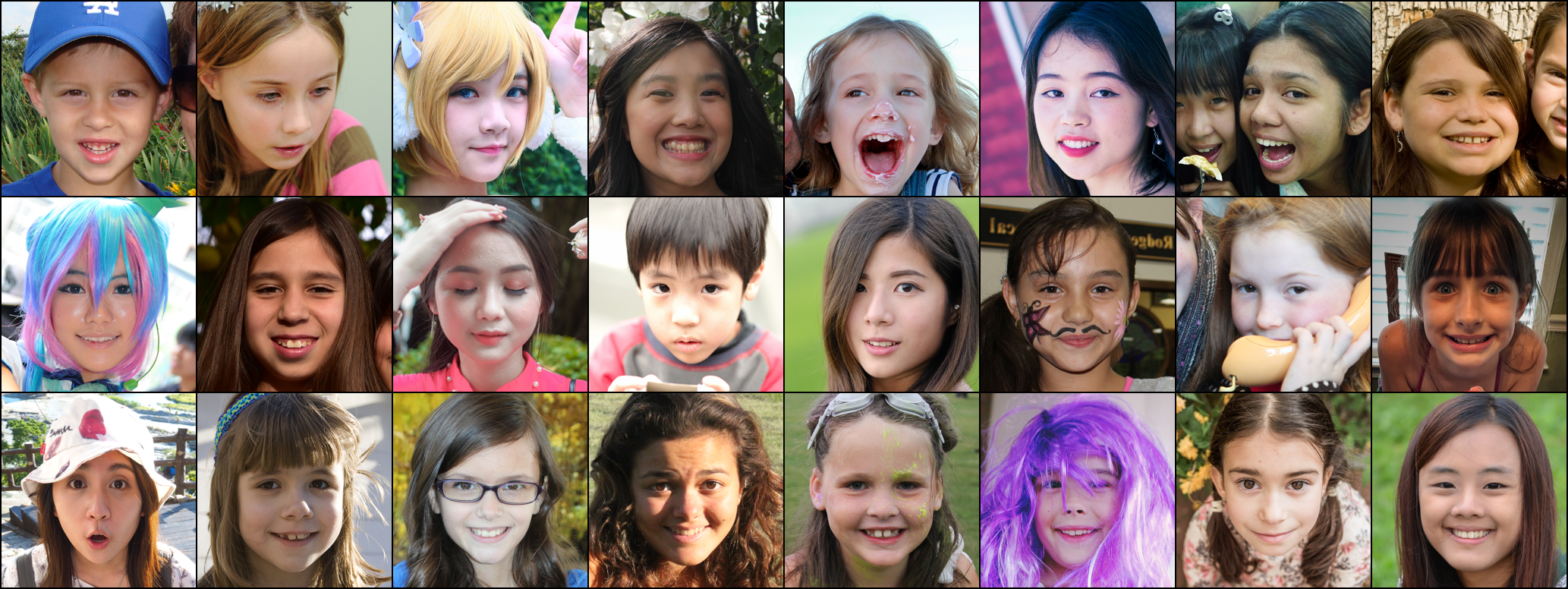}
    \end{subfigure}
    \caption{Examples of rejected samples of U-NOTB. Approximately 80\% of rejected samples are female images.}
    \label{fig:ffhq-rej}
    \vspace{-4mm}
\end{figure*}

As shown in Table \ref{tab:ffhq}, our model achieves a significantly high accuracy of 88\%, highly outperforming other methods, which achieve less than 70\%. This demonstrates that our model performs robustly under class-imbalanced conditions.
Moreover, only 36.1\% of the accepted samples were female. Given that over 60\% of the data in $\bbP_1$ consisted of female samples, this result indicates that a significant proportion of female samples were rejected to address the class imbalance. 
For qualitative examples, please refer to Figure \ref{fig:ffhq-acc} and Figure \ref{fig:ffhq-rej}.

\textbf{Evaluation Metric.} To classify the gender of images, we use the open python library called OpenCV. 
Specifically, we adjusted the code from the following github address:
\vspace*{-2mm}\begin{center}
\url{https://github.com/smahesh29/Gender-and-Age-Detection}    
\end{center}
\begin{table}[h!]
    \centering
    \scalebox{0.95}{
    \begin{tabular}{l|l}
        \hline
         D & \textit{512$\times$18} \\
         \hline 
          K & 2 \\
         \hline
         $\tau$ &  \textit{0.1} \\
         \hline
         batch size & \textit{4} \\
         \hline
         \# Iterations & \textit{5000} \\
         \hline
         $N_T$ & \textit{1} \\
         \hline
         $\lambda_1$ & \textit{1/2} \\
         \hline
         $\lambda_2$ & \textit{1/2} \\
         \hline
         Divergence $\psi_1$ &  \textit{Softplus}  \\
         \hline
         Divergence $\psi_2$ & \textit{Softplus} \\
         \hline
         $f_{k,\theta}$ & \textit{Transformer} \\
         \hline
         $T_{k, \omega}$ &  \textit{Transformer}  \\
         \hline
         $lr_{f_{k,\theta}}$ & \textit{1e-5} \\
         \hline
         $lr_{T_{k, \omega}}$ & \textit{1e-5} \\
         \hline
         $lr_{m}$ & \textit{1e-5} \\
         \hline
         Training Time & 4-5 h \\
         \hline
         GPU & RTX 3090Ti \\
         \hline
    \end{tabular}
    }
    \caption{Hyper-parameter settings and training time for FFHQ experiment.}
    \label{table:hyperparams-ffhq}
\end{table}

\textbf{Implementation Details.} 
We set $(\bar{\psi}_1, \bar{\psi}_2) = (\text{Softplus}, \text{Softplus})$ and $\tau=0.1$.
We employ the learning rate of $10^{-5}$, batch size of 4, and the total number of iterations of 5K.
For all the networks $\text{NN}_{1,\omega}, \text{NN}_{1,\omega}, V_{\theta}$ we employ the same embedding approach: the time variable $t$ is embedded into 512-dimension vector $t_{emb}\in \bbR^{1\times 512}$ through sinusoidal embedding, and concatenated with $z\in \bbR^{18\times512}$. For $\text{NN}_{1,\omega}, \text{NN}_{2,\omega}$, these embeddings are processed through two transformer layers, each with four attention heads. Then, we pass through the linear layer with both input and output channels of 512, resulting the output of size $\bbR^{18\times512}$.
For the potential $V_{\theta}$, we use pass through four transformer layers, each with four attention heads.
Then, we aggregate features by taking mean of the tokens, which is then passed through the final linear layer.
Additionally, for the network $m_{\theta}(t)$, we embed $t$ by sinusoidal embedding, and then pass it through 2-layered MLP. {\color{red} The outline of the hyperparameter settings and the training time are described in Table \ref{table:hyperparams-ffhq}}.

\section{Limitations and Future Research Directions}
\label{app-limitations}
In practice, we found that the practical optimization procedure (Algorithm \ref{algorithm:suotbary}) may work unstably. In particular, the training may diverge under improper hyper-parameters selection or the resulting quality might depend on the random seed. We hypothesise that this behaviour is due to the reliance on adversarial training and leave its thorough investigation to future research. From the theoretical side, the recovered semi-unbalanced  OT barycenter is not guaranteed to be unique (\S \ref{sec-background-barycenters}). Resolving these limitations paves a way for future work. Another appealing direction for future research is to adopt alternative robust OT/barycenter variants, e.g., \citep{nietert2022outlier, buze2024constrained}, for the continuous barycenter setup.

\end{document}